\newtheoremstyle{exampstyle}
  {\topsep} 
  {0pt} 
  {\itshape} 
  {} 
  {\bfseries} 
  {.} 
  {.5em} 
  {} 
\theoremstyle{exampstyle} 
\theoremstyle{exampstyle} \newtheorem{definition}{Definition}
\theoremstyle{exampstyle} \newtheorem{proposition}{Proposition}
\theoremstyle{exampstyle} \newtheorem{lemma}{Lemma}
\theoremstyle{exampstyle} \newtheorem{corollary}{Corollary}
\definecolor{myorange}{RGB}{205,115,0}
\newcommand{\blue}[1]{\textcolor{blue}{#1}}
\title{Recurrent Temporal Revision Graph Networks}
\date{November 2022}
\author{%
  Yizhou Chen\thanks{The three authors contributed equally to the paper.} \\
  Shopee Pte Ltd.,\\
  Singapore\\
  \And
  Anxiang Zeng\footnotemark[1]\\
  SCSE, Nanyang Technological  \\
  University, Singapore\\
  \And
  Guangda Huzhang\footnotemark[1]\\
  Shopee Pte Ltd.,\\
  Singapore\\
  \AND
  Qingtao Yu\\
  Shopee Pte Ltd.,\\
  Singapore\\
  \And
  Kerui Zhang\\
  Shopee Pte Ltd.,\\
  Singapore\\
  \And
  Yuanpeng Cao\\
  Shopee Pte Ltd.,\\
  Singapore\\
  \And
  Kangle Wu\\
  Shopee Pte Ltd.,\\
  Singapore\\
  \And
  Han Yu\\
  SCSE, Nanyang Technological  \\
  University, Singapore\\
  \And
  Zhiming Zhou\thanks{Corresponding author.  zhouzhiming@mail.shufe.edu.cn.}\\
  Shanghai University of Finance\\
  and Economics, China \\
}
\begin{document}
\maketitle
\begin{abstract}

Temporal graphs offer more accurate modeling of many real-world scenarios than static graphs. However, neighbor aggregation, a critical building block of graph networks, for temporal graphs, is currently straightforwardly extended from that of static graphs. It can be computationally expensive when involving all historical neighbors during such aggregation. In practice, typically only a subset of the most recent neighbors are involved. However, such subsampling leads to incomplete and biased neighbor information. To address this limitation, we propose a novel framework for temporal neighbor aggregation that uses the recurrent neural network with node-wise hidden states to integrate information from all historical neighbors for each node to acquire the complete neighbor information. 
We demonstrate the superior theoretical expressiveness of the proposed framework as well as its state-of-the-art performance in real-world applications. Notably, it achieves a significant $+9.6\%$ improvement on averaged precision in a real-world Ecommerce dataset over existing methods on 2-layer models.

\end{abstract}

\section{Introduction}

Many real-world applications (such as social network~\cite{tgn,ying2018graph}, recommender system~\cite{dai2016deep,sankar2020dysat}, traffic forecasting~\cite{li2017diffusion,zhang2018gaan} and crime analysis~\cite{jin2020addressing}) involve dynamically changing entities and increasing interactions, which can be naturally modeled as temporal graphs (also known as dynamic graphs) where nodes and edges appear and evolve over time. 
Compared with static graph solutions~\cite{kipf2016semi,velickovic2017graph,he2020lightgcn}, dynamic graphs can be more accurate in modeling node states and inferring future events. 


In modern graph representation learning, neighbor aggregation, i.e., generating higher-level node embeddings via referring to the embeddings of their neighbors~\cite{kipf2016semi, velickovic2017graph, li2019predicting}, is a critical common ingredient that helps more effectively leverage the connectivity information in the graphs. Practically, however, relatively hot nodes may have rapidly growing lists of neighbors, which makes involving all neighbors in neighbor aggregation computationally expensive. 
In static graphs, SAGE~\cite{hamilton2017inductive} suggests randomly sampling $d$ neighbors to approximate all neighbors. For temporal graphs, the prevalent approach is to consider only the most recent $d$ neighbors~\cite{tgn, pint}. 
But such subsampling of neighbors could lead to incomplete and biased information, as the neighbor information preceding the most recent $d$ neighbors is permanently lost. 


To mitigate the problem caused by neighbor subsampling, we propose an alternative solution to classic neighbor aggregation for temporal graphs, called recurrent temporal revision (RTR), which can be used as a standard building block (a network layer) for any temporal graph network. Each RTR layer incorporates a hidden state for each node to integrate the information from all its neighbors using a recurrent neural network (RNN). Such a hidden state will serve as the embedding of each node at this layer, attempting to embrace the complete neighbor information. 



The existence of such a hidden state implies that all historical neighbors have already been integrated through previous updates. So when a new edge event comes, we may build a message with the latest neighbor to update the hidden state through the RNN. On the other hand, taking the state updates of previous neighbors into account can also be beneficial, which could provide additional information, including new interactions of the neighbors~\cite{tgn} as well as updates in the neighbor states due to the influence of their own neighbors. 
In other words, re-inputting neighbors would enable revising their previously provided information. 
Therefore, we propose a module called temporal revision (TR) to collect state updates of previous neighbors to update the hidden state together with its latest interaction. To better capture the state updates, we additionally input the neighbors' revisions and state changes, besides their current states, into the temporal revision module. Subsampling of such state updates of neighbors can be less problematic, since the state of each neighbor has already been observed and integrated by the hidden state in the past. 




Towards further enhancement, we propose to introduce heterogeneity in temporal revision to boost its theoretical expressiveness. Heterogeneity can be naturally incorporated, since when a node collects revisions from its neighbors, the neighbors may, the other way around, collect the revision from the node (at a lower level), which is, however, directly accessible to the node itself. We thus introduce heterogeneity on such self-probings, by recursively identifying and marking such nodes as specialties  in revision calculation.
We theoretically show that such heterogeneous revision leads to improved expressiveness beyond Temporal-1WL~\cite{pint}, and practically verify its potential to promote effective information exchange among nodes. 

Our contributions can be summarized as follows: 
\begin{itemize}[leftmargin=10pt] 
    \item We propose a novel neighbor aggregation framework for temporal graphs to tackle the challenges arising from the necessary subsampling of neighbors. 
    \item We propose two expressiveness measures for temporal graph models in terms of temporal link prediction and temporal graph isomorphism test, and characterize their theoretical connection. We theoretically demonstrate our framework's superior expressiveness, as well as provide several new results on the expressiveness of other baselines. 
    \item We demonstrate in experiments that the proposed framework can significantly outperform state-of-the-art methods. In particular, we can consistently get improvements as the number of aggregation layers increases, while previous methods typically exhibit only marginal improvements~\cite{tgn, tgat}. 
    \item An Ecommerce dataset is provided along with this submission, which can be valuable for evaluating temporal graph models in real-world settings. 
\end{itemize}

\section{Preliminaries} 

\subsection{Temporal Graph Modeling} 

A temporal graph can be represented as an ordered list of timed events, which may include the addition, deletion, and update of nodes and edges. For simplicity, we will only discuss the handling of edge addition events in this paper, where newly involved nodes will be accordingly added and initialized. It can be easily extended to handle other events~\cite{tgn}. 
We denote the set of nodes and edges up to time $t$ as $V(t)$ and $E(t)$, respectively. 

Modeling of temporal graph typically generates node embeddings that evolve over time as graph events occur, which can be used by downstream tasks such as link prediction~\cite{chen2022gc,sankar2020dysat} and node classification~\cite{lu2021graph,deng2019graph}. The base node embeddings can have various sources, including direct node features~\cite{hamilton2017inductive,velickovic2017graph,ying2018graph}, learnable parameters~\cite{he2020lightgcn,wang2019neural}, generated by other modules such as an RNN~\cite{tgn, tgat, pint} or specifically designed module~\cite{dai2016deep,jodie,ma2020streaming}). Modern temporal graph methods~\cite{tgn, grugcn, pint} typically employ additional layers on top of the base embeddings to (further) leverage the connectivity information within the graph, generating higher-level node embeddings with a wider receptive field that encompasses relevant edges and nodes. 

\subsection{(Temporal) Neighbor Aggregation} 
\label{sec:temporal_agg} 

Among various approaches, neighbor aggregation is one of the commonly used techniques in various graph models as a critical ingredient, where the embedding of each node is updated by considering the embeddings of its neighbors. In the context of temporal graphs, the neighbor aggregation layer can be defined as follows~\cite{tgn,tgat,pint}: 
\begin{align}
    \mathbf{a}^k_u(t) & = \textsf{AGG}^{k} \Big( \Big\{ \big( \mathbf{h}^{k-1}_{v}(t), \Phi(e_{uv,t'}) \big) \;\big\mid\; e_{uv,t'} \in E(t) \Big\} \Big), 
    \label{equ:agg}
    \\ 
    \mathbf{h}^{k}_{u}(t) &= \textsf{COMBINE}^{k}\Big( \mathbf{h}^{k-1}_{u}(t), \mathbf{a}^k_u(t) \Big), 
    \label{equ:agg_update}
\end{align} 
where $t$ denotes the current time. $e_{uv,t'}$ denotes the event between node $u$ and node $v$ at time $t'(\leq t)$. $\Phi(e_{uv,t'})$ encodes the event feature, as well as its related time and sequence information, which may include the current time, the event time, the current number of interactions of $u$ and $v$, etc. $\textsf{AGG}^{k}$ 
and $\textsf{COMBINE}^{k}$ denote arbitrary learnable functions, which are typically shared at layer $k$. $\mathbf{a}^{k}_{u}(t)$ is the aggregated information from neighbors at the $k-1$ layer, while $\mathbf{h}^{k}_{u}(t)$ is the new embedding of node $u$ at layer $k$, combining its current embedding at the $k-1$ layer with the aggregated information $\mathbf{a}^{k}_{u}(t)$.
This can be viewed as a straightforward extension of classic graph convolution layers~\cite{kipf2016semi,velickovic2017graph,li2019predicting} with additional time or sequence information. 

\subsection{Expressiveness of Temporal Graph Models}


Expressiveness measure is crucial for identifying and comparing the capabilities of different models. While several studies have explored the expressiveness of (temporal) graph models~\cite{xu2018representation,grugcn,pint,gin,identity}, a widely accepted definition for the expressiveness of temporal graph models is currently lacking. To establish a solid groundwork, we propose the following formal definition of expressiveness in terms of temporal graph isomorphism test and temporal link prediction: 

\begin{definition} ({Temporal graph isomorphism indistinguishability}).
Given a pair of temporal graphs $\langle G_1(t), G_2(t) \rangle$, we say they are indistinguishable w.r.t.$\!$ a model $f$, if and only if there exists a bijective mapping between nodes of $G_1(t)$ and $G_2(t)$, such that, for each pair of nodes in the mapping, their embeddings generated by $f$ are identical at any time $t' \leq t$. 
\end{definition} 

\begin{definition} ({Temporal link prediction indistinguishability}).
Given a temporal graph $G(t)$ and two pairs of nodes $\langle (u_1, v_1), (u_2,v_2) \rangle$ therein representing two temporal link prediction problems, we say they are indistinguishable w.r.t. a model $f$, if and only if the embedding of $u_1$ and $u_2$, $v_1$ and $v_2$ generated by $f$ are identical at any $t' \leq t$. 
\end{definition}

\begin{definition} ({Temporal graph isomorphism expressiveness}).
If a model $f_A$ can distinguish any pair of temporal graphs that model $f_B$ can distinguish, while being able to distinguish certain graph pair that $f_B$ fails, we say $f_A$ is \emph{strictly more expressive} than $f_B$ in temporal graph isomorphism test. 
\end{definition}

\begin{definition} ({Temporal link prediction expressiveness}).
If a model $f_A$ can distinguish any pair of temporal link prediction problems that model $f_B$ can distinguish, while being able to distinguish certain pair of temporal link prediction problems that $f_B$ fails, we say $f_A$ is \emph{strictly more expressive} than $f_B$ in temporal link prediction. 
\end{definition}

We can similarly define equally expressive, more or equally expressive, and so on. Our definition of expressiveness differs from~\cite{pint}, which focuses on isomorphism of computation trees. It is also more concise compared with~\cite{grugcn}, avoiding explicit discussions of the identifiable set and its cardinality. 



Most existing results on the expressiveness of temporal graph models can be easily transferred to our definition. For example, message passing temporal graph networks (MP-TGN) are strictly more expressive in isomorphism test when the (temporal) neighbor aggregation layers are injective (referred to as IMP-TGN)~\cite{pint}; and IMP-TGN would be as expressive as temporal 1-Weisfeiler-Lehman (Temporal-1WL) test~\cite{pint,gin}, if with sufficient many layers.

\section{The Proposed Method}

\subsection{Recurrent Temporal Revision} 

\begin{figure}[t]
    \centering
    \subfloat[]{\includegraphics[height=39mm]{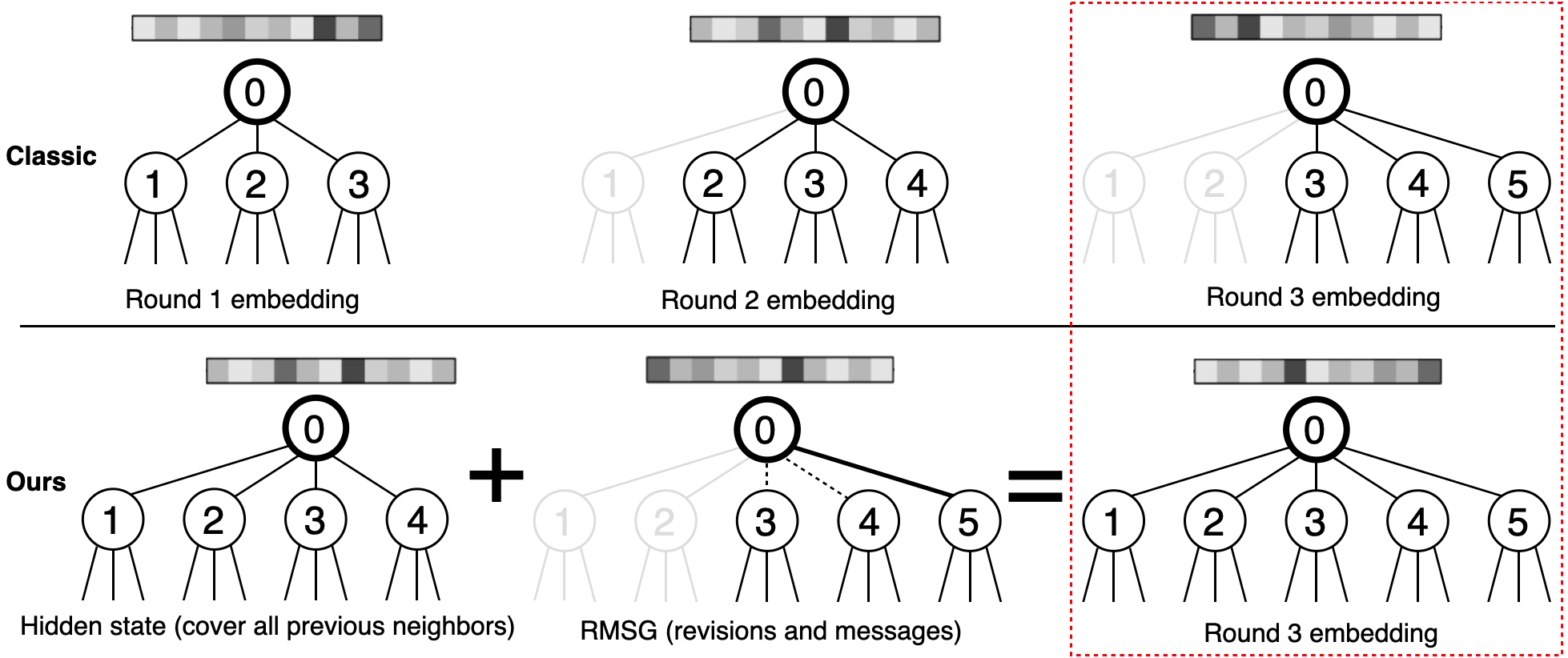}}
    \hspace{3mm}
    \subfloat[]{\includegraphics[height=39mm]{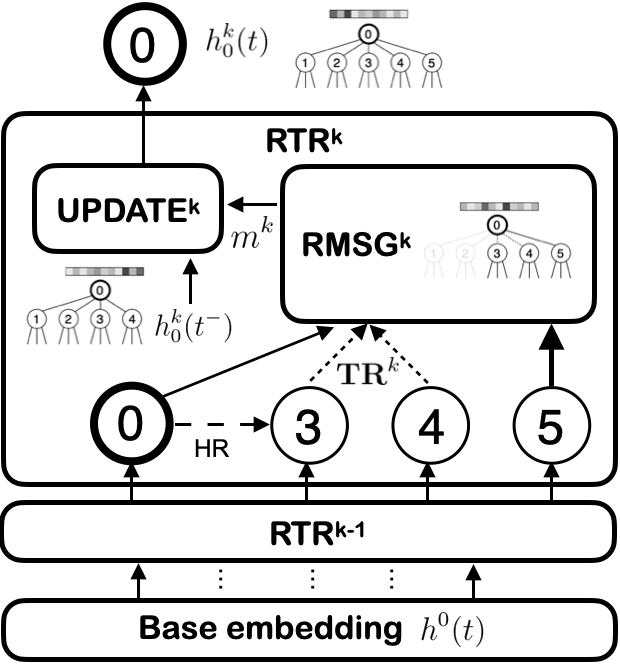}}
    \caption{(a) A comparison of our framework and other classic aggregations.(b) Illustration of our RTRGN. }
    \label{fig:staleness}
\end{figure}

In large graphs, having too many neighbors can pose practical computation challenges in neighbor aggregation. For static graphs, SAGE~\cite{hamilton2017inductive} suggests approximating neighbor information by randomly subsampling a fixed number ($d$) of neighbors. For temporal graphs, recent models~\cite{tgn,pint} typically involve the most recent $d$ neighbors to reduce the computation cost of temporal neighbor aggregation, which often yields better performance compared with uniform sampling~\cite{tgn}. But such subsampling of neighbors fails to provide complete neighbor information and biases it towards short-term behaviors. The effects are illustrated in Figure~\ref{fig:staleness}(a). 

Towards addressing this problem, we propose to introduce a hidden state $\mathbf{h}_{u}^{k}(t)$ in each aggregation layer $k>0$ for each node to integrate information from all its historical neighbors under a state transition model, typically recurrent neural network (RNN). Since events in the temporal graph, and hence the addition of neighbors, occur sequentially, such integration can be done naturally. 




Formally, when an event $e_{uv,t}$ occurs, we will need to update the hidden state $\mathbf{h}_{u}^{k}$ of $u$ (and also $v$). A straightforward implementation would be updating the hidden state with a message of the new event. However, taking the state updates of previous neighbors into account can also be beneficial, which can provide additional information including: the new interactions of neighbors, their state evolutionary directions, updates in the neighbor states due to the influence of their own neighbors. 
To achieve this, we introduce a module called temporal revision (TR) to collect state updates of previous neighbors, which will be later fused with the latest event to update the hidden state: 
\begin{equation}
    \mathbf{r}_{u}^{k}(t) = \textsf{TR}^{k}\Big(\Big\{\big(\mathbf{r}_{v}^{k-1}(t),\Delta\mathbf{h}_{v}^{k-1}(t),\mathbf{h}_{v}^{k-1}(t),\Phi(e_{uv,t'})\big) \;\big\mid\; e_{uv,t'}\in E(t^-)\Big\}\Big), 
    \label{equ:revision_recursion no H}
\end{equation}
where $\textsf{TR}^{k}$ is an arbitrary learnable function shared at layer $k$, $t^-$ represents the time right before $t$. We collect not only the newest states of neighbors $\mathbf{h}_{v}^{k-1}(t)$ at the $k-1$ layer, but also their state changes $\Delta\mathbf{h}_{v}^{k-1}(t)=\mathbf{h}_{v}^{k-1}(t)-\mathbf{h}_{v}^{k-1}(t^{-})$ and their revisions $\mathbf{r}_{v}^{k-1}(t)$, as well as the event information $\Phi(e_{uv,t'})$, to help the model better extract the state update information. We set the base case $\mathbf{r}_{u}^{0}(t)=0$. When requesting the embedding of a node at the $0$-th layer, we refer to its base embedding\footnote{Please refer to Appendix~\ref{Appendix:A} for more detailed discussions of the base embedding.}. 


In subsequent ablation studies, we show that retaining $\Delta\mathbf{h}_{v}^{k-1}(t)$ alone, while removing $\mathbf{h}_{v}^{k-1}(t)$, will not lead to degeneration of performance, which means the effectiveness in aggregation can rely solely on the changes of states, while not necessarily on the neighbor states themselves. Temporal revision is hence substantially different from classic temporal aggregations~\cite{tgat,tgn,grugcn,pint}. 


To encode all neighbor update information, we build a state revising message (RMSG) via: 
\begin{flalign}    
    \mathbf{m}_{u}^{k}(t) = \textsf{RMSG}^{k}\big(
    \mathbf{h}_{u}^{k-1}(t),\mathbf{h}_{v}^{k-1}(t),
    \mathbf{r}_{u}^{k}(t),
    \Phi(e_{uv,t})\big),
    \label{msg}
\end{flalign}
where $\textsf{RMSG}^{k}$ is an arbitrary learnable function shared at layer $k$, integrating the node embedding at the $k-1$ layer $\mathbf{h}_{u}^{k-1}(t)$, the embedding of the new neighbor at the $k-1$ layer $\mathbf{h}_{v}^{k-1}(t)$, the collected neighbor state update information $\mathbf{r}_{u}^{k}(t)$, as well as the event information $\Phi(e_{uv,t})$.\footnote{When there is no event at time $t$ for $u$, but requesting the message, we set $\mathbf{h}_{v}^{k-1}(t)=\mathbf{0},\Phi(e_{uv,t})=\mathbf{0}$ in \eqref{msg}.$\!\!$}


Then the proposed recurrent temporal revision (RTR) layer can be formulated as: 
\begin{equation} 
   \mathbf{h}^{k}_{u}(t)=\textsf{UPDATE}^k\big(\mathbf{h}^{k}_{u}(t^-), \mathbf{m}_{u}^{k}(t) \big), 
    \label{equ:TRGN_recursion} 
\end{equation}
where the state revising message is consumed to update the hidden state $\mathbf{h}^{k}_{u}$ from time $t^-$ to $t$ via $\textsf{UPDATE}^k$, an learnable function shared at layer $k$ responsible for hidden state transition. For all RTR layers, we initialize $\mathbf{h}_{u}^k(0)$ as zero vector for all nodes. 



\subsection{Heterogeneous Revision}

Heterogeneous message passing can improve the expressiveness of static GNN~\cite{identity}, which can be naturally integrated in our revision computation. The term $\mathbf{r}_{u}^{k}(t)$ recursively gathers revision from previously interacted nodes, and they may in turn gather the revision of node $u$ (at a lower-level), which is, however, directly accessible to $u$ itself. Therefore, we may adopt a different treatment for such self-probing calculations during recursive aggregation. To achieve this, we may mark all nodes in the recursive computation path as specialities. 

Formally, we define the heterogeneous revision as: 
\begin{equation}
\hspace{-2mm}
    \,\mathbf{r}_{u}^{k}(t,\mathcal{S})\!=\!\textsf{TR}^{k}\Big(\Big\{\big(\mathbf{r}_{v}^{k-1}(t,\mathcal{S}\cup\{u\}),\Delta\mathbf{h}_{v}^{k-1}(t),\mathbf{h}_{v}^{k-1}(t),\Phi(e_{uv,t'}),\mathbbm{1}[v\in\mathcal{S}] \big) \;\Big\mid\; e_{uv,t'} \!\in E(t^-) \Big\}\Big), 
    \label{equ:revision_recursion} 
\end{equation}
where we introduce an extra argument $\mathcal{S}$ for revision, which is a set that is initialized as empty $\varnothing$ and recursively adds each $u$ encountered along the computation path. $\mathbbm{1}$ stands for the indicator function, which triggers the difference between nodes in $\mathcal{S}$ and the rest, bringing about heterogeneity in the revisions. To switch to heterogeneous revision, we replace $\mathbf{r}_{u}^{k}(t)$  in \eqref{msg} with $\mathbf{r}_{u}^{k}(t,\varnothing)$. We set the base case $\mathbf{r}_{u}^{0}(t,\mathcal{S})=\mathbf{0}$ for any $u$ and $\mathcal{S}$. 

We illustrate the RTR layer in Figure~\ref{fig:staleness}(b) and 
provide the implementation details in Appendix~\ref{Appendix:A}. 





\begin{table}[h]
    \caption{Summary of the theoretical results. Our results are presented in black.}
    \centering
    \hspace{-3mm}
    \scalebox{0.80}{
    \begin{tabular}{l|l}
    \hline 
    \multicolumn{1}{c|}{Expressiveness (\textbf{Link Prediction})}   
    & \multicolumn{1}{c}{Expressiveness (\textbf{Isomorphism Test})} \\ \hline 
    \multicolumn{2}{c}{\;\;\;\;\;\;$f_A\succ_{is} f_B\Rightarrow f_A\succ_{lp} f_B$ (Prop.~\ref{prop:1}) \;\;\;\;\;\; $f_A\cong_{is} f_B\Rightarrow f_A\cong_{lp} f_B$ (Corollary~\ref{corollary:equal to equal})} \\ \hline 
    \textcolor{brown}{PINT-pos $\succ_{lp}$ PINT $\succ_{lp}$ MP-TGNs~\cite{pint}} & \textcolor{brown}{ PINT $\cong_{is}$ Temporal-1WL~\cite{pint}} \\
    Time-then-IMP $\cong_{lp}$ PINT $\cong_{lp}$ Temporal-1WL (Prop.~\ref{prop:2}) & Time-then-IMP $\cong_{is}$ PINT $\cong_{is}$ Temporal-1WL (Prop.~\ref{prop:2}) \\
    RTRGN $\succ_{lp}$ PINT (Prop.~\ref{prop:3}) & RTRGN $\succ_{is}$ PINT (Prop.~\ref{prop:3}) \\
    PINT-pos can be false positive (Prop.~\ref{prop:4}) & PINT-pos can be false positive (Prop.~\ref{prop:4}). \\ \hline
    \multicolumn{2}{c}{Most expressive: RNN + Temporal Relational Pooling (Prop.~\ref{prop:max expressive} in Appendix~\ref{Appendix: B})} \\ \hline
    \end{tabular}}
    \label{tab:expressive}
\end{table}

\begin{figure}[h]
    \centering
    \includegraphics[width=140mm]{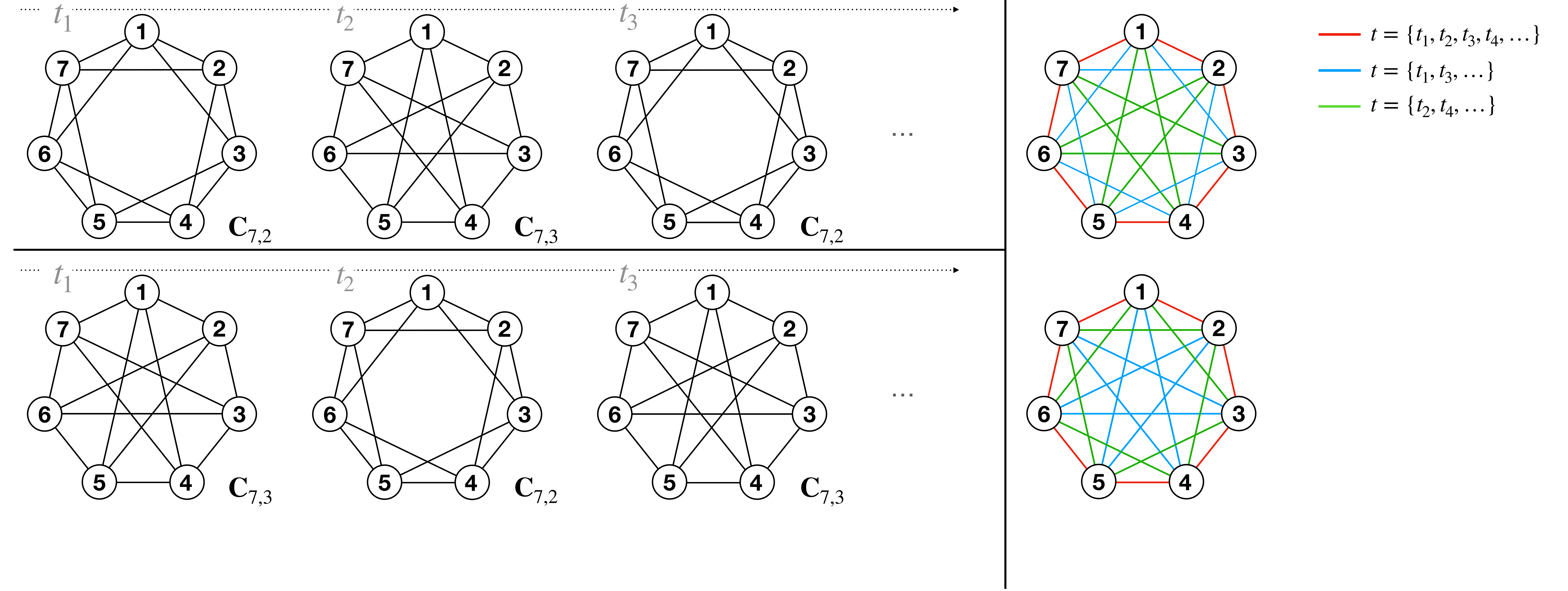}
    \caption{A synthetic graph isomorphism test task called Oscillating CSL, where only RTRGN is expressive enough to give the correct result. The task is to distinguish the top and bottom temporal graphs, both of which oscillate but in different order between two regular structures called circular skip links (CSL). The right side shows the stacked view of the two temporal graphs.} 
    \label{fig:OscillatingCSL}
\end{figure}

\subsection{Model Expressiveness}

For model expressiveness, we assume our RTR layers are built on top of a base RNN. For a fair comparison, it is assumed the base RNNs for all models are the same and of the best expressiveness, as suggested in~\cite{grugcn,pint}. 
We focus on RTR layers whose (learnable) functions in \eqref{msg} - \eqref{equ:revision_recursion} are injective, which can be achieved by adopting similar approaches as in~\cite{pint,gin}. We refer to such a model as RTRGN. Besides, we assume node and edge attributes come from a finite set as~\cite{relationalpooling,grugcn,pint}. 

We will mainly compare RTRGN with PINT~\cite{pint}, which is an instance of IMP-TGN that can be equally expressive as Temporal 1-WL in isomorphism test. PINT also offers augmented positional features (referred to as PINT-pos) designed to enhance its expressiveness. 

Our first theoretical result demonstrates that the temporal graph isomorphism test is inherently more challenging than temporal link prediction. All proofs are deferred to Appendix~\ref{Appendix: B}. 

\begin{proposition}
    Being more expressive in temporal graph isomorphism test implies being more expressive in temporal link prediction ($\!f_{\!A}\!\succ_{is}\!\!f_{\!B}\!\Rightarrow\!\!f_{\!A}\!\succ_{lp}\!\!f_{\!B}$) but not vice versa ($\!f_{\!A}\!\succ_{lp}\!\!f_{\!B}\!\nRightarrow\!\!f_{\!A}\!\succ_{is}\!\!f_{\!B}$).$\!\!\!\!\!\!\!$
    \label{prop:1}
\end{proposition}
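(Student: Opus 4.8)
The plan is to prove the two implications separately. For the forward direction I must establish (a) that $f_A$ distinguishes every temporal link prediction problem that $f_B$ distinguishes, and (b) that $f_A$ distinguishes some link prediction problem on which $f_B$ fails. For (a) the natural route is to encode a link prediction problem as an isomorphism problem: given $G(t)$ with candidate pairs $(u_1,v_1),(u_2,v_2)$, form $H_1$ and $H_2$ by appending, strictly after the last event of $G(t)$, fresh degree-one ``marker'' nodes carrying brand-new attributes so that in $H_i$ one marker attaches to $u_i$ and the other to $v_i$. The markers make $u_i$ and $v_i$ the unique nodes with such a neighbour, so any bijection witnessing indistinguishability of $\langle H_1,H_2\rangle$ must send $u_1\mapsto u_2$ and $v_1\mapsto v_2$. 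Hence if $f_B$ assigns, say, $u_1$ and $u_2$ different embeddings at some $t'\le t$, that bijection fails already on $G(t)$ (where $H_1$ and $H_2$ agree), so $f_B$ distinguishes $\langle H_1,H_2\rangle$; by $f_A\succ_{is}f_B$ so does $f_A$, and one then wants to read this back as $f_A$ separating $u_1,u_2$ or $v_1,v_2$ in $G(t)$.

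For (b) I would take a pair $\langle G_1^\star,G_2^\star\rangle$ that $f_A$ distinguishes but $f_B$ does not, with $\pi$ a bijection witnessing $f_B$-indistinguishability, and pass to the disjoint union $G^\dagger:=G_1^\star\sqcup G_2^\star$. Since disjoint components do not interact in a temporal graph network, every node keeps its embedding trajectory, so $\pi$ still matches all $f_B$-embeddings, while, $\pi$ failing for $f_A$, some $w^\star$ of $G_1^\star$ receives an $f_A$-embedding different from that of $\pi(w^\star)$. Then inside $G^\dagger$ the link prediction problem $\langle (w^\star,z),(\pi(w^\star),z)\rangle$ for any further node $z$ is distinguished by $f_A$ but not by $f_B$, which is exactly the strictness needed for $f_A\succ_{lp}f_B$.

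For the ``not vice versa'' part I would exhibit a counterexample: a model that carries query-relative (relational/positional) information about node pairs, of the kind used by PINT-pos, layered on top of an otherwise maximally-expressive node-level model. Such a model can tell two candidate links $(u_1,v_1)$ and $(u_2,v_2)$ apart even when no node-level model can — the orbit of a node pair is strictly finer than the product of the two node orbits — so it is strictly more expressive in link prediction; yet it produces no finer graph-level information than the underlying node-level model, hence it is not strictly more expressive in the isomorphism test. This is essentially the phenomenon already reflected in the paper's own Table~\ref{tab:expressive} (PINT-pos $\succ_{lp}$ PINT) together with Proposition~\ref{prop:4}.

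I expect the main obstacle to be the converse faithfulness of the reduction in (a): turning ``$f_A$ distinguishes $\langle H_1,H_2\rangle$'' back into ``$f_A$ distinguishes the link prediction problem''. Equivalently, one must show that when $f$ cannot separate the two candidate pairs in $G(t)$, the swap bijection $u_1\leftrightarrow u_2$, $v_1\leftrightarrow v_2$ (identity elsewhere) is embedding-preserving on $\langle H_1,H_2\rangle$. A marker attached by an edge can propagate through the aggregation and manufacture distinctions that have nothing to do with the original link prediction problem — a naive marker already misbehaves for models at the Temporal-1WL level — so the construction must keep the marker minimally invasive and the verification has to exploit the locality of the temporal state updates (an event revises only the states of its two endpoints), possibly in combination with the injectivity/expressiveness assumptions or the strictness hypothesis to close the remaining cases.
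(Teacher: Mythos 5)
Your first implication follows the paper's route in spirit---both reduce link prediction to an isomorphism test---but your construction differs: the paper compares the current graph with each candidate edge added as the final event, whereas you attach fresh degree-one marker nodes to the endpoints. The converse-faithfulness difficulty you flag (reading ``$f_A$ distinguishes $\langle H_1,H_2\rangle$'' back into ``$f_A$ separates the endpoints'') is real, and it is present in the paper's version too, which simply asserts that distinguishing the two augmented graphs forces distinct endpoint embeddings before the event; your marker variant adds the extra risk you yourself note. Your disjoint-union device for the strictness half is a reasonable, arguably cleaner, way to package the witness as a single-graph link-prediction problem in the sense of Definition 2 (the paper's event pair formally spans two graphs), at the cost of assuming embeddings are unaffected by disjoint components.

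The genuine gap is in your ``not vice versa'' counterexample. First, if the base model $f_B$ is maximally expressive in the sense of Proposition~\ref{prop:max expressive}, nothing can be \emph{strictly} more expressive than it in link prediction, so $f_A\succ_{lp}f_B$ is unattainable by construction. Second, link-prediction distinguishability in this paper is defined purely through the node embeddings of $u_1$ versus $u_2$ and $v_1$ versus $v_2$; a query-relative pair feature that is not folded into the node embeddings does not change distinguishability under Definition 2, and once it is folded in it also refines the embeddings used in the isomorphism test, so you cannot gain in one task without gaining in the other. Third, and decisively, Corollary~\ref{corollary:equal to equal} gives $f_A\cong_{is}f_B\Rightarrow f_A\cong_{lp}f_B$, so any counterexample with $f_A\succ_{lp}f_B$ and $f_A\nsucc_{is}f_B$ must make the two models \emph{incomparable} in the isomorphism test, i.e., $f_B$ must distinguish some graph pair that $f_A$ cannot; your construction supplies no such pair. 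The paper obtains this incomparability from an asymmetry you do not exploit: a global, node-constant feature (longest path length) is useless for link prediction---it cannot separate two nodes of the same graph---yet it can separate two graphs in the isomorphism test (Figure~\ref{fig:contradiction_prop_0}), while a local cycle-membership feature does the opposite.
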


The first part of this proposition can be proved by reducing the temporal link prediction problem into a temporal graph isomorphism test of the current graph adding the two temporal links separately. The second part of this proposition is proved by constructing a pair of two models: one is more expressive than the other in temporal link prediction, while being not more expressive in temporal graph isomorphism test. The proof and the construction of the counter examples inside suggest that temporal graph isomorphism test is a more challenging task than temporal link prediction, as it requires global distinguishability rather than just local distinguishability of the involved nodes. 

Next, we demonstrate that Time-then-IMP, the strongest among the three classes of temporal graph models suggested in~\cite{grugcn} with the graph module of its time-then-graph scheme being injective message passing (IMP) to ensure 1WL expressiveness~\cite{gin}, is equally expressive as PINT and Temporal-1WL in both temporal graph isomorphism test and temporal link prediction. 

\begin{proposition} 
Time-then-IMP is equally expressive as PINT and Temporal-1WL in both temporal graph isomorphism test and temporal link prediction. 
\label{prop:2}
\end{proposition}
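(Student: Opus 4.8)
Since the excerpt already records PINT $\cong_{is}$ Temporal-1WL~\cite{pint} and Corollary~\ref{corollary:equal to equal} gives $f_A\cong_{is}f_B\Rightarrow f_A\cong_{lp}f_B$, it suffices to prove the single statement Time-then-IMP $\cong_{is}$ Temporal-1WL; the remaining comparisons against PINT and all four link-prediction equivalences then follow by transitivity of $\cong_{is}$ and by the corollary. I would obtain $\cong_{is}$ by proving the two inclusions Time-then-IMP $\preceq_{is}$ Temporal-1WL and Time-then-IMP $\succeq_{is}$ Temporal-1WL.

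\textbf{Set-up.} Recall~\cite{grugcn} that a time-then-graph model first compresses, through the base RNN, each node's and each node-pair's entire timed event sequence into a static node feature and a static edge feature, and then runs the static injective message-passing module IMP on the resulting feature-augmented static graph; by the GIN analysis~\cite{gin}, $L$ layers of IMP reproduce $L$ rounds of static 1-WL on that static graph. I would phrase both models through unfolding (computation) trees: the stable Temporal-1WL color of a node $v$ is a canonical encoding of its \emph{temporal computation tree} (the recursive timed-neighborhood unfolding used by Temporal-1WL~\cite{pint}), while the Time-then-IMP embedding of $v$ is a function of its depth-$L$ static computation tree whose node and edge labels are the RNN encodings of the corresponding timed histories. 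The whole proof reduces to showing these two tree-invariants determine each other.

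\textbf{The two inclusions.} For Time-then-IMP $\succeq_{is}$ Temporal-1WL I would use the maximal-expressiveness assumption on the base RNN together with the finite attribute assumption~\cite{relationalpooling,grugcn}: the RNN encodings are then injective, so each static node/edge label is a lossless record of the underlying timed history, and one static message-passing round --- which exposes to $v$ the multiset of pairs (neighbor's current label, incident edge label) --- is at least as discriminative as one Temporal-1WL round, which exposes the multiset of triples (neighbor's current color, event time, event feature), because the edge label already contains all those event times and features. An induction on the round index, with the number of IMP layers taken at least as large as the Temporal-1WL stabilization length, then shows the static coloring refines the stable Temporal-1WL coloring. For the converse Time-then-IMP $\preceq_{is}$ Temporal-1WL, a node's temporal computation tree already records, for each recursive neighbor, the full timed sequence of events on the connecting pair and, recursively, that neighbor's own temporal computation tree; hence it determines all node and edge labels in $v$'s depth-$L$ static computation tree and therefore the Time-then-IMP embedding of $v$. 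So any Temporal-1WL color-preserving node bijection between two temporal graphs also preserves Time-then-IMP embeddings. Combining the two inclusions gives Time-then-IMP $\cong_{is}$ Temporal-1WL, and the reduction finishes the proof.

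\textbf{Main obstacle.} The delicate part is the bookkeeping tying Temporal-1WL's stable coloring to the temporal computation tree and showing that this tree carries \emph{exactly} the per-pair timed histories and recursive neighbor structure that the static IMP module consumes --- no more (needed for $\preceq_{is}$) and, after lossless encoding and sufficiently many layers, no less (needed for $\succeq_{is}$). Once the two tree-invariants are defined so that this mutual determination is transparent, the remaining ingredients (the GIN $\leftrightarrow$ 1-WL correspondence, injectivity of RNN encodings over a finite alphabet, transitivity of $\cong_{is}$, and Corollary~\ref{corollary:equal to equal}) are routine.
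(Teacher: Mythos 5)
Your proposal follows essentially the same route as the paper's proof: reduce everything to Time-then-IMP $\cong_{is}$ Temporal-1WL via the cited PINT result and Corollary~\ref{corollary:equal to equal}, invoke the universal-RNN/finite-alphabet argument to treat the base encodings as lossless stacked histories $[x_{u,(0:t)}]$, $[e_{uv,(0:t)}]$, and then compare injective message passing on the resulting static graph against Temporal-1WL by induction on rounds. One caution: in your $\preceq_{is}$ direction you assert that the temporal computation tree ``already records, for each recursive neighbor, the full timed sequence of events on the connecting pair,'' but Temporal-1WL aggregates a multiset indexed by \emph{events} rather than by neighbors, so recovering the neighbor-grouped stacked histories that Time-then-IMP consumes is precisely the non-trivial induction the paper itself only sketches (its passage from \eqref{equ:multiset color} to \eqref{equ:multiset color all t}); you rightly flag this as the main obstacle, but it is the crux of the equivalence rather than routine bookkeeping.
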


The proof establishes that all these models are equally expressive as Temporal-1WL in both temporal link prediction and temporal graph isomorphism test. The proposition suggests that Time-then-IMP, although utilizes a distinct form of base RNN to encode all historical features of nodes and edges (which comes at the cost of significantly more space needed to embed each edge), its expressiveness is ultimately equivalent to the Temporal-1WL class, which includes models like PINT. 

We next show that RTRGN is more expressive than PINT, as well as the Temporal-1WL class: 

\begin{proposition}
RTRGN is strictly more expressive than the Temporal-1WL class represented by PINT in both temporal graph isomorphism test and temporal link prediction. 
\label{prop:3}
\end{proposition}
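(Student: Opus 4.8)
The plan is to establish both directions required by Definition 3 (and its link-prediction analogue). First I would show that RTRGN is \emph{at least as expressive} as the Temporal-1WL class: since Time-then-IMP (equivalently PINT) encodes, for each node, the refinement of a color determined by multiset aggregation over timed neighbor events, and since the RTR layers are assumed injective, one argues by induction on the number of layers $k$ that whenever two nodes receive identical RTRGN embeddings $\mathbf{h}^k_u(t)$ at all times, the corresponding Temporal-1WL colors agree. The subtle point here is that RTRGN's hidden state is a \emph{recurrently accumulated} summary rather than a snapshot re-aggregation; I would handle this by noting that the sequential update in \eqref{equ:TRGN_recursion}, being injective, retains the full ordered history of messages $\mathbf{m}^k_u(\cdot)$, and each message in \eqref{msg} contains $\mathbf{h}^{k-1}_v(t)$ and $\Phi(e_{uv,t})$, so by the inductive hypothesis the RTRGN embeddings determine the same information that drives the Temporal-1WL recursion. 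Hence any pair distinguishable by PINT is distinguishable by RTRGN, for both isomorphism test and link prediction (using Prop.~\ref{prop:1} to transfer once the isomorphism direction is done, or arguing directly).

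The harder and more interesting direction is \emph{strictness}: exhibiting a pair of temporal graphs that PINT (hence all of Temporal-1WL) cannot distinguish but RTRGN can. I would use the ``Oscillating CSL'' construction already depicted in Figure~\ref{fig:OscillatingCSL}: two temporal graphs that each oscillate between two circular-skip-link structures, but visit them in opposite orders. The key facts to verify are (i) that at every time $t'\le t$ the two graphs are Temporal-1WL-equivalent — this should follow because CSL graphs are $d$-regular and 1-WL-indistinguishable from each other (the classical CSL example), and the two oscillation schedules produce, at each snapshot, the same \emph{local} timed-neighborhood multiset for every node under the bijection, so no amount of Temporal-1WL refinement separates them; and (ii) that RTRGN \emph{does} separate them. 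For (ii) the point is that the recurrent hidden state integrates the \emph{ordered} sequence of structural changes: because $\mathbf{r}^k_u$ (and in particular the heterogeneous version $\mathbf{r}^k_u(t,\mathcal{S})$ with the self-probing indicator) picks up $\Delta\mathbf{h}^{k-1}_v(t)$ — the \emph{direction} of state evolution of neighbors — the accumulated state at a node after one full oscillation cycle differs between ``structure $A$ then $B$'' and ``structure $B$ then $A$''. Concretely I would compute, for a representative node, the message sequence under each schedule and show the injective $\textsf{UPDATE}$ produces distinct final hidden states; the heterogeneity marking is what breaks the symmetry that would otherwise make the two $\Delta\mathbf{h}$ patterns coincide.

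I would then package this: part (i) gives indistinguishability w.r.t.\ PINT (isomorphism test, hence also link prediction by taking the two link-prediction problems inside a single ambient graph), part (ii) gives a witness distinguished by RTRGN, and combined with the ``at least as expressive'' direction this yields $\text{RTRGN}\succ_{is}\text{PINT}$; the link-prediction statement $\text{RTRGN}\succ_{lp}\text{PINT}$ follows either by a parallel argument on the Oscillating-CSL node pairs or, for the positive direction, from Prop.~\ref{prop:1}, while for strictness one checks the distinguishing pair survives as a link-prediction instance.

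The main obstacle I anticipate is rigorously proving Temporal-1WL \emph{in}distinguishability of the Oscillating-CSL pair at \emph{all} intermediate times simultaneously under a \emph{single} bijection — it is not enough that each snapshot is 1-WL-equivalent in isolation; one needs a time-consistent isomorphism of the computation, which requires carefully defining the bijection from the CSL automorphism and verifying it is compatible with every oscillation step. A secondary subtlety is confirming that RTRGN's advantage genuinely comes from the recurrent accumulation of $\Delta\mathbf{h}$ / heterogeneity rather than from something a cleverer Temporal-1WL encoding could also capture — i.e., that the distinguishing signal is intrinsically about \emph{order of structural change}, which Temporal-1WL, being a per-time local refinement, provably cannot encode.
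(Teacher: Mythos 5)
Your first half (RTRGN $\succeq$ PINT) is fine and differs only in style from the paper: the paper gives an explicit simulation argument, zeroing the indicator and part of the revision inputs, letting $\textsf{TR}^{k}$ copy its inputs, and choosing $\textsf{UPDATE}^{k}(\mathbf{h},\mathbf{m})=\mathbf{m}$ so that \eqref{msg}--\eqref{equ:TRGN_recursion} collapse to the IMP-TGN recursion \eqref{equ:agg}--\eqref{equ:agg_update}; your injectivity-based induction reaches the same conclusion by the contrapositive.

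The gap is in your mechanism for strictness. You attribute the separation of the Oscillating CSL pair primarily to the recurrent accumulation of $\Delta\mathbf{h}^{k-1}_{v}(t)$, i.e.\ to the hidden state remembering the \emph{order} of structural change, with heterogeneity in a supporting symmetry-breaking role. This is backwards, and the $\Delta\mathbf{h}$ route fails outright: both temporal graphs are vertex-transitive at every time, so every node in $G_1$ and $G_2$ has the identical embedding $\mathbf{h}^{k}_{u}(t)=\mathbf{h}^{k}(t)$ at every layer and every time, hence $\Delta\mathbf{h}^{k-1}_{v}(t)$ is the same constant for all nodes in both graphs and carries no signal. The paper's proof explicitly \emph{drops} the $\Delta\mathbf{h}$ term, and the w/o-HR ablation in Table~\ref{tab:exp on isomorphism} confirms that RTRGN without heterogeneous revision fails on exactly this task, so recurrence plus $\Delta\mathbf{h}$ alone cannot be the witness. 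The actual distinguishing signal is the interaction between the specialty marking and the event-time multisets: when computing $\mathbf{r}^{k}_{2}(t_n,\{1,7\})$ along the recursion path $1\to 7\to 2$, neighbor $7$ is flagged by $\mathbbm{1}[v\in\mathcal{S}]$, and the timestamps of the $2$--$7$ edge form the odd times $\{t_1,t_3,\ldots\}$ in $G_1$ but the even times $\{t_2,t_4,\ldots\}$ in $G_2$; without the flag these two timestamp multisets are merely permuted among indistinguishable neighbors and the aggregate is identical, whereas with the flag the marked element of the multiset differs, and injectivity propagates this difference up through $\mathbf{r}^{k+1}_{7}(t_n,\{1\})$ and $\mathbf{r}^{k+2}_{1}(t_n,\varnothing)$ to $\mathbf{h}^{k+2}_{1}(t_n)$. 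If you rewrite part (ii) around this self-probing/timestamp argument --- keeping your correct observation that Temporal-1WL indistinguishability must be verified under a single time-consistent bijection, which the paper does by computing the color/timestamp multisets directly --- the proof goes through.
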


As we have shown, the expressiveness of all the previously mentioned models is bounded by the Temporal-1WL, which, although powerful, can fail in certain cases. To illustrate this, we construct a temporal graph isomorphism test task called Oscillating CSL (Figure~\ref{fig:OscillatingCSL}) where all Temporal-1WL models, including Time-then-IMP, fail. The difficulty of the task stems from the regular structure of the graph snapshots and stacked graphs at each timestamp, which cannot be distinguished by Temporal-1WL tests. 
RTRGN excels in this task due to the heterogeneity introduced in its revision computation, which enables its expressiveness to go beyond the limit of Temporal-1WL. 

\begin{proposition}
PINT-pos can be false positive in both temporal graph isomorphism test and temporal link prediction, e.g., may incorrectly classify isomorphic temporal graphs as non-isomorphic.$\!\!\!\!\!$ 
\label{prop:4}
\end{proposition}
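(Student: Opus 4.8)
The plan is to exhibit an explicit pair of temporal graphs that are genuinely isomorphic but whose PINT positional features differ, so that PINT-pos declares them non-isomorphic; by Proposition~\ref{prop:1} the same construction (embedded into a single graph with two candidate links) then gives a false positive in temporal link prediction as well. First I would recall precisely how PINT-pos computes its augmented positional features: for each ordered pair of nodes it maintains a vector counting, in a recursive layer-wise fashion, the number of temporal walks (or analogously-defined paths respecting edge timestamps) of each length connecting them, and appends to each node a summary of these pairwise counts. The key observation to exploit is that this summary is \emph{not} permutation-equivariant in the way true isomorphism requires: it is computed relative to a distinguished "anchor" structure (the node's own position, or the set of source nodes), and two isomorphic graphs can induce the same multiset of local computation trees — hence be Temporal-1WL-equivalent — while the \emph{global} pattern of pairwise path counts realizes non-isomorphic labelings.

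The concrete construction I would use is a pair of regular-ish temporal graphs in the spirit of the Oscillating CSL / circular-skip-link examples already in the paper: take two graphs on the same vertex set that are isomorphic as abstract temporal graphs (there is a timestamp-preserving bijection), but such that no \emph{single fixed} bijection simultaneously matches the positional-feature anchoring used by PINT-pos. Because circular skip links are vertex-transitive and edge-transitive within each snapshot, all nodes look identical to Temporal-1WL, yet the positional features — being path-count vectors between node pairs — encode the cyclic structure in a way that depends on an arbitrary choice of orientation/offset that PINT-pos fixes deterministically. I would then verify two things: (i) an explicit isomorphism $\phi$ between the two temporal graphs exists (just write it down and check it preserves all timed edge events), establishing that the correct answer is "isomorphic"; and (ii) computing the PINT-pos node embeddings on both graphs yields multisets of embeddings that cannot be matched by \emph{any} bijection — it suffices to show, e.g., that some node in $G_1$ has a positional feature vector that appears with different multiplicity in $G_2$, or that the sorted list of embeddings differs in one coordinate. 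Since the base RNN and the injective aggregation are assumed identical and deterministic, distinctness of the final embedding multisets means PINT-pos distinguishes the pair, i.e., is false positive.

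For the link-prediction half, I would invoke the reduction direction established in the proof of Proposition~\ref{prop:1} in reverse spirit: embed the counterexample into a single temporal graph $G(t)$ containing two candidate node pairs $(u_1,v_1)$ and $(u_2,v_2)$ that are related by an automorphism-like symmetry of $G(t)$, so that a correct model must treat the two link-prediction instances identically, yet PINT-pos assigns $u_1,v_1$ different positional features than $u_2,v_2$ because its anchoring breaks that symmetry. Concretely, one can glue the two CSL-type structures into one graph via a symmetric connector and let the two queried pairs be images of each other under the gluing symmetry; then exhibit the differing positional coordinate as before. The statement "e.g., may incorrectly classify isomorphic temporal graphs as non-isomorphic" in the proposition tells us the isomorphism-test counterexample is the primary object, and the link-prediction consequence follows by the same kind of construction, so I would present the isomorphism-test example in full and remark that the link-prediction example is obtained by the standard symmetric-gluing trick.

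The main obstacle I anticipate is step (ii): actually pinning down the PINT-pos positional features on a symmetric, highly regular graph and proving the multisets fail to match under \emph{every} bijection, rather than just under the "obvious" one — regular graphs are precisely where path-counting features are most likely to accidentally agree, so the example must be chosen so that the cyclic/ordering information leaks into the features in an unavoidable way (this is why an oscillating, time-ordered CSL rather than a static one is needed: the temporal ordering of the two CSL phases is the symmetry-breaking ingredient PINT-pos cannot help but record asymmetrically). A secondary, more bookkeeping-level difficulty is making sure the chosen graphs are genuinely isomorphic as \emph{temporal} graphs (timestamps and all), since it is easy to write down a symmetric-looking pair that is in fact non-isomorphic once edge times are taken into account — I would double-check the isomorphism $\phi$ against Definition (Temporal graph isomorphism indistinguishability) explicitly.
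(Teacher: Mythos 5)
Your proposal is correct and follows essentially the same strategy as the paper's proof: exhibit a pair of isomorphic temporal CSL graphs whose PINT-pos positional features (anchored, permutation-sensitive path-count vectors) cannot be matched by any bijection, then derive the link-prediction failure from a symmetric pair of candidate edges. Two details are worth noting where the paper's version is simpler than what you anticipate. First, the paper's counterexample lives at a \emph{single} timestamp: $G_1(t_1)$ is $\mathbf{C}_{7,2}$ and $G_2(t_1)$ is $\mathbf{C}_{7,3}$, which are isomorphic as static (hence as one-event temporal) graphs, and the row-vectors of length-$1$ walk counts already fail to match under any row permutation. The symmetry-breaking ingredient is purely the fixed node indexing of the positional features, not the temporal ordering of oscillation phases — so the "main obstacle" you flag (forcing the cyclic/ordering information to leak in via time) largely evaporates; indeed an oscillating pair in different phase orders would be non-isomorphic and thus useless for a false positive, a trap you correctly warn yourself about. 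Second, for link prediction the paper does not glue two structures together: it takes the single graph $\mathbf{C}_{7,2}$ at $t_1$, notes that the pairs $\langle 1,2\rangle$ and $\langle 2,3\rangle$ are inherently indistinguishable (adding either edge yields isomorphic graphs, by vertex-transitivity), yet PINT-pos assigns every node a distinct positional feature and hence distinguishes them. Your gluing construction would likely also work but requires more verification; the vertex-transitivity of a single CSL gives the needed automorphism for free. Also, be careful with your opening appeal to Proposition~\ref{prop:1}: that proposition transfers relative expressiveness, not false positives, so the link-prediction half does need its own explicit example — which you in fact go on to provide.
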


On the other hand, we observe that PINT-pos can produce false positives (unlike the other models). Specifically, there are cases where PINT-pos mistakenly categorizes isomorphic temporal graphs as non-isomorphic, and incorrectly distinguishes two temporal links that should not be distinguishable. 
It fails essentially because its positional feature is permutation-sensitive, while all other mentioned models are permutation-invariant, which is a desirable property in graph model design~\cite{grugcn, relationalpooling}. This further highlights the superiority of heterogeneous revision over augmented positional features. 

Finally, we formulate a most expressive model that can always output the groundtruth (Appendix~\ref{Appendix: B}). 

\section{Related Works}

Early approaches in dynamic graphs learning focused on discrete-time dynamic graphs (DTDGs), which consist of a sequence of graph snapshots usually sampled at regular intervals~\cite{pareja2020evolvegcn,goyal2020dyngraph2vec,you2022roland}. 
They employed techniques such as stacking graph snapshots and then applying static methods~\cite{liben2003link,ahmed2016efficient, ibrahim2015link,sharan2008temporal} and assembling snapshots into tensors for factorization~\cite{yu2017link,ma2019embedding}. 
Among them, ROLAND~\cite{you2022roland} introduced hierarchical hidden states, but each state update is followed by a heavy finetuning of the entire model, which can be computationally expensive when applied to continuous-time dynamic graphs (CTDGs). In comparison, our framework is directly applied to CTDGs, handling the lists of timed events, and can be much more efficient. 


DTDGs can always be converted to CTDGs without information loss. However, the high computation cost poses a major obstacle for applying most DTDG methods to CTDGs. More and more research focuses on CTDGs.
Sequence-based methods such as DeepCoevolve~\cite{dai2016deep}, Jodie~\cite{jodie}, and Dyrep~\cite{dyrep} utilize RNN to capture the evolution of node representations, while some other works~\cite{nguyen2018continuous,mahdavi2018dynnode2vec} employ random walks to model CTDG, including CAWN~\cite{cawn}. NAT~\cite{nat} caches the high-order neighbor information for computing joint neighbor features. 

TGAT~\cite{tgat} and TGN~\cite{tgn} were the first to introduce GNN for CTDGs. Subsequently, combining GNNs with RNNs for temporal graph embedding generation became a mainstream idea. PINT~\cite{pint} introduced MLP-based injective aggregators to replace attention-based aggregators in TGN. GRU-GCN~\cite{grugcn} analyzed three strategies to combine GNN with RNN and classified them into time-and-graph (e.g.,~\cite{seo2018structured,li2019predicting,chen2022gc}), time-then-graph (e.g.,~\cite{tgat,tgn}), and graph-then-time frameworks (e.g.,~\cite{pareja2020evolvegcn,goyal2018dyngem,sankar2020dysat}). 

Our heterogeneous revision leads to enhanced expressiveness. Similar ideas were adopted in static GNNs. For example,~\cite{identity} introduced heterogeneity in the root node during message passing. 

\section{Experiments}

\subsection{Temporal Link Prediction on Benchmark Datasets}


We evaluate the proposed RTRGN with the temporal link prediction task on 8 real-world datasets (7 public datasets as well as a private Ecommerce dataset that will be released along with this submission), and compare it against related baselines, including Dyrep~\cite{dyrep}, Jodie~\cite{jodie}, TGAT~\cite{tgat}, TGN~\cite{tgn}, CAWN~\cite{cawn}, NAT~\cite{nat}, GRU-GCN~\cite{grugcn} and PINT~\cite{pint}. The evaluation is conducted in two settings: the transductive which predicts links involving nodes observed during training, and the inductive which predicts links involving nodes that are never observed during training. 

\begin{table}[t]
\centering
\caption{Average Precision (AP, \%) on temporal link prediction tasks with 1-layer models ($k=1$). \textcolor{violet}{First} and \textcolor{myorange}{second} best-performing methods are highlighted. Results are averaged over 10 runs.} 
\scalebox{0.67}{
\begin{tabular}{c|c|ccccccccc}
\hline
& Model              & MovieLens     & Wikipedia     & Reddit        & SocialE.-1m   & SocialE.      & UCI-FSN       & Ubuntu        & Ecommerce        \\ \hline
\parbox[t]{2mm}{\multirow{7}{*}{\rotatebox[origin=c]{90}{Transductive}}} 
& Dyrep              & $78.07\pm0.2$ & $95.81\pm0.2$ & $98.00\pm0.2$ & $81.74\pm0.4$ & $88.95\pm0.3$ & $53.67\pm2.1$ & $84.99\pm0.4$ & $62.98\pm0.3$ \\
& Jodie              & $78.42\pm0.4$ & $96.15\pm0.4$ & $97.29\pm0.1$ & $70.20\pm1.8$ & $81.83\pm1.1$ & $86.67\pm0.7$ & \textcolor{myorange}{$\mathbf{91.32\pm0.2}$} 
                                                                                                                                     & $68.70\pm0.6$ \\
& TGAT               & $66.64\pm0.5$ & $95.45\pm0.1$ & $98.26\pm0.2$ & $51.97\pm0.6$ & $50.75\pm0.2$ & $73.01\pm0.6$ & $81.69\pm0.3$ & $65.13\pm0.5$ \\
& TGN           & $84.27\pm0.5$ & $97.58\pm0.2$ & $98.30\pm0.2$ & \textcolor{myorange}{$\mathbf{90.01\pm0.3}$} 
                                                                                     & $91.06\pm1.5$ & $86.58\pm0.3$ & $90.18\pm0.1$ & $83.69\pm0.5$ \\
& CAWN               & $82.10\pm0.4$ & $98.28\pm0.2$ & $97.95\pm0.2$ & $84.62\pm0.4$ & $83.49\pm0.7$ & $90.03\pm0.4$ & -             & \textcolor{myorange}{$\mathbf{84.58\pm0.1}$} \\
& NAT                & $75.85\pm2.6$ & $98.27\pm0.1$ & \textcolor{myorange}{$\mathbf{98.71\pm0.2}$} 
                                                                     & $85.84\pm0.4$ & \textcolor{myorange}{$\mathbf{91.35\pm0.6}$} 
                                                                                                     & \textcolor{myorange}{$\mathbf{92.80\pm0.4}$}
                                                                                                                     & $90.86\pm0.5$ 
                                                                                                                                     & $81.69\pm0.2$ \\
& GRU-GCN            & \textcolor{myorange}{$\mathbf{84.70\pm0.2}$} 
                                     & $96.27\pm0.2$ & $98.09\pm0.3$ & $86.02\pm1.2$ & $84.22\pm2.5$ & $90.51\pm0.4$ & $86.52\pm0.2$ & $79.52\pm0.4$ \\
& PINT               & $83.25\pm0.9$ & \textcolor{myorange}{$\mathbf{98.45\pm0.1}$} 
                                                     & $98.39\pm0.1$ & $80.97\pm2.2$ & $84.94\pm3.4$ & $92.68\pm0.5$ & $89.59\pm0.1$ & $80.36\pm0.3$ \\
& RTRGN (ours)   & \textcolor{violet}{$\mathbf{86.56\pm0.2}$}
                                     & \textcolor{violet}{$\mathbf{98.56\pm0.2}$} 
                                                     & \textcolor{violet}{$\mathbf{99.00\pm0.1}$}
                                                                     & \textcolor{violet}{$\mathbf{92.20\pm0.1}$}
                                                                                     & \textcolor{violet}{$\mathbf{94.02\pm0.3}$} & \textcolor{violet}{$\mathbf{96.43\pm0.1}$} 
                                                                                                                     & \textcolor{violet}{$\mathbf{96.69\pm0.1}$}
                                                                                                                                     & \textcolor{violet}{$\mathbf{88.05\pm0.4}$} \\ \hline
\parbox[t]{2mm}{\multirow{7}{*}{\rotatebox[origin=c]{90}{Inductive}}}   
& Dyrep              & $74.47\pm0.3$ & $94.72\pm0.2$ & $97.04\pm0.3$ & $75.58\pm2.1$ & $88.49\pm0.6$ & $50.43\pm1.2$ & $71.49\pm0.4$ & $53.59\pm0.7$ \\
& Jodie              & $74.61\pm0.3$ & $95.58\pm0.4$ & $95.96\pm0.3$ & $73.32\pm1.4$ & $79.58\pm0.8$ & $71.23\pm0.8$ & $83.81\pm0.3$ & $61.69\pm0.6$ \\
& TGAT               & $66.33\pm0.4$ & $93.82\pm0.3$ & $96.42\pm0.3$ & $52.17\pm0.5$ & $50.63\pm0.1$ & $66.89\pm0.4$ & $78.92\pm0.5$ & $64.58\pm0.8$ \\
& TGN           & $82.07\pm0.2$ & $97.05\pm0.2$ & $96.87\pm0.2$ & \textcolor{myorange}{$\mathbf{88.70\pm0.5}$} 
                                                                                     & \textcolor{myorange}{$\mathbf{89.06\pm0.7}$} 
                                                                                                     & $81.53\pm0.2$ & $81.81\pm0.4$ & $81.81\pm0.5$ \\
& CAWN               & $74.50\pm0.5$ & $97.70\pm0.2$ & $97.37\pm0.3$ & $75.39\pm0.4$ & $81.55\pm0.5$ & $89.65\pm0.4$ & -             & \textcolor{myorange}{$\mathbf{83.30\pm0.4}$} \\
& NAT                & $77.56\pm0.6$ & \textcolor{myorange}{$\mathbf{97.74\pm0.4}$} 
                                                     & $97.19\pm0.7$ & $85.16\pm1.2$ & $85.32\pm3.6$ & $87.83\pm0.5$ & $81.69\pm0.9$ & $76.85\pm0.2$ \\
& GRU-GCN            & \textcolor{myorange}{$\mathbf{82.81\pm0.2}$} 
                                     & $93.50\pm0.2$ & $96.38\pm0.3$ & $83.76\pm1.8$ & $79.21\pm4.5$ & $85.45\pm0.6$ & $73.71\pm0.5$ & $78.54\pm0.2$ \\
& PINT               & $81.49\pm0.6$ & $97.29\pm0.1$ & \textcolor{myorange}{$\mathbf{97.69\pm0.1}$}
                                                                     & $77.45\pm1.9$ & $71.86\pm3.6$ & \textcolor{myorange}{$\mathbf{90.25\pm0.3}$} & \textcolor{myorange}{$\mathbf{85.74\pm0.2}$} & $77.15\pm0.5$  \\
& RTRGN (ours)   & \textcolor{violet}{$\mathbf{85.03\pm0.2}$} 
                                     & \textcolor{violet}{$\mathbf{98.06\pm0.2}$}
                                                     & \textcolor{violet}{$\mathbf{98.26\pm0.1}$} 
                                                                     & \textcolor{violet}{$\mathbf{91.73\pm0.7}$} 
                                                                                     & \textcolor{violet}{$\mathbf{92.47\pm0.3}$}
                                                                                                     & \textcolor{violet}{$\mathbf{94.26\pm0.1}$} 
                                                                                                                     & \textcolor{violet}{$\mathbf{92.95\pm0.1}$} 
                                                                                                                                     & \textcolor{violet}{$\mathbf{86.91\pm0.5}$} \\ \hline
\end{tabular}}
\label{tab:main ap}
\end{table}

\begin{table}[ht]
\centering
\begin{minipage}{.6\linewidth}
\vspace{-5mm}
\caption{Average Precision (\%) on temporal link prediction tasks with 2-layer models ($k=2$).} 
\scalebox{0.75}{
\begin{tabular}{c|c|cccc}
\hline
   & Model           & MovieLens     & Wikipedia     & UCI-FSN       & Ecommerce  \\ \hline
\parbox[t]{2mm}{\multirow{4}{*}{\rotatebox[origin=c]{90}{Trans}}} 
& TGN           & $85.33\pm0.5$ & $98.38\pm0.2$ & $86.69\pm0.3$ & $85.59\pm0.2$ \\
& NAT                & $80.04\pm1.2$ & $98.15\pm0.1$ & $93.03\pm0.1$ & $82.32\pm0.8$\\
& PINT               & $84.18\pm0.2$ & $98.33\pm0.1$ & $93.48\pm0.4$ & $82.35\pm0.7$\\
& RTRGN              & \textcolor{violet}{$\mathbf{94.90\pm0.4}$}
                                     & \textcolor{violet}{$\mathbf{98.79\pm0.2}$} 
                                                     & \textcolor{violet}{$\mathbf{98.04\pm0.1}$} 
                                                                     & \textcolor{violet}{$\mathbf{95.28\pm0.1}$} \\ \hline
\parbox[t]{2mm}{\multirow{4}{*}{\rotatebox[origin=c]{90}{Ind}}}
& TGN           & $83.43\pm0.5$ & $97.85\pm0.2$ & $83.58\pm0.3$ & $84.31\pm0.2$ \\
& NAT                & $72.23\pm1.6$ & $97.73\pm0.2$ & $88.11\pm0.6$ & $78.72\pm1.8$\\
& PINT               & $81.95\pm0.5$ & $96.39\pm0.1$ & $91.23\pm0.3$ & $80.30\pm1.0$\\
& RTRGN              & \textcolor{violet}{$\mathbf{93.37\pm0.3}$} 
                                     & \textcolor{violet}{$\mathbf{98.28\pm0.2}$}
                                                     & \textcolor{violet}{$\mathbf{95.62\pm0.1}$} 
                                                                     & \textcolor{violet}{$\mathbf{92.87\pm0.1}$} \\ \hline
\end{tabular}}
\label{tab:main ap 2-layer}
\end{minipage}
\hspace{1mm}\hfill
\begin{minipage}{.35\linewidth}
\centering
\includegraphics[width=45mm]{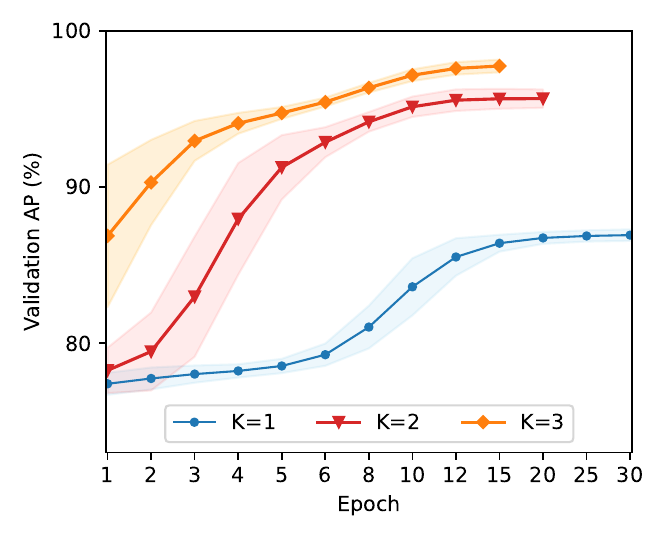}
\vspace{-1mm}
    \captionof{figure}{The convergence trend for different $k$.}
    \label{fig:convergence}
\end{minipage}
\end{table}

The official implementations of different baselines can vary in their experiment settings, especially on the \emph{dataset preparation} and the \emph{negative edge sampling}. To ensure a fair and consistent comparison, we calibrate the experiment settings of all baselines with the one used by TGN, which is the same as TGAT and PINT, but differs somewhat from other works such as CAWN and NAT. The latter may lead to inflated results, with inductive performance surpassing transductive performance, potentially reaching over $99\%$ accuracy in CAWN and NAT. We set $d=10$ for our experiments. More details about the datasets, the baselines, and the experiment settings can be found in Appendix~\ref{Appendix: Details of Experiment Settings}. 

The results for the 1-layer setting ($k=1$) are presented in Table~\ref{tab:main ap}. It is evident that our proposed RTRGN surpasses all baselines by a significant margin, suggesting a substantial improvement. 
To delve deeper into the problem, we further examine the results for the 2-layer setting on selected datasets (Table~\ref{tab:main ap 2-layer}), where we compare RTRGN against baselines that have demonstrated potentially better performance than other baselines in the 1-layer setting. As we can see, the performances of the baselines show limited improvement when transitioning from the 1-layer to 2-layer setting, aligning with previous findings~\cite{tgat,tgn}. By contrast, RTRGN consistently exhibits \textit{significant} improvements in the 2-layer setting: the average gap of transductive AP between RTRGN and the second best method is $+6.1\%$ among 4 selected datasets. The largest gap appears in our Ecommerce dataset, which is $+9.6\%$. The gap on Wikipedia is small probably due to a 1-layer model is already sufficient to give strong performance. 
This indicates that layer design is important for temporal graph networks to more effectively leverage the power of multiple layers. Another advantage we found in experiments is that the training converges faster for larger $k$, as shown in Figure~\ref{fig:convergence}. 


\subsection{Temporal Graph Isomorphism Test} 

To validate the superior expressiveness of the proposed RTRGN against existing methods, we conduct experiments of temporal graph isomorphism tests on two synthetic datasets. We run 4-layer models on testing graphs, where at each time step, we compute and store the embedding for each node. To check the isomorphism between two temporal graphs under a model, we gather the lists of generated embeddings (stacking in time for each node) of the two graphs and verify whether there exists a bijective mapping such that the two lists are identical (via sort and scan). If the two lists are identical, we conclude that the model outputs isomorphic (negative); otherwise, non-isomorphic (positive). 

\textbf{Oscillating CSL} is a synthetic temporal graph isomorphism test task, which is a commonly used benchmark for assessing the expressiveness of GNNs~\cite{relationalpooling,grugcn}. Each graph in this task is composed of a sequence of graph snapshots (of length $6$) that correspond to the oscillation between 2 different prototypes randomly selected from $\{\mathbf{C}_{11,2},\ldots,\mathbf{C}_{11,5}\}$. Here $\mathbf{C}_{N,s}$ denotes a circular skip link (CSL) graph with $N$ nodes and skip length $s$. The goal is to determine whether the temporal graph follows the pattern $\mathbf{C}_{N,s_1}\!\to\mathbf{C}_{N,s_2}$ is isomorphic to its reversed counterpart $\mathbf{C}_{N,s_2}\!\to\mathbf{C}_{N,s_1}$ (Figure~\ref{fig:OscillatingCSL}). 

\begin{figure}[h]
    \centering
    \includegraphics[width=135mm]{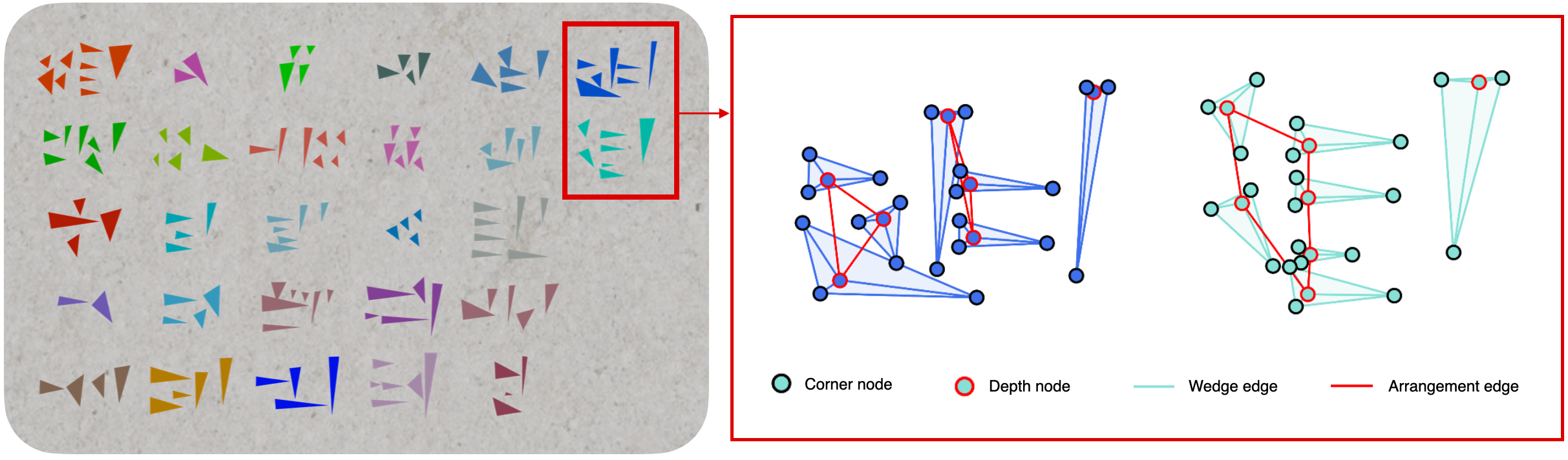}
    \caption{\textit{(left)} An illustration of the Hittite cuneiform signs. \textit{(right)} An example where TGN and GRU-GCN fails to output non-isomorphic, due to the special structure within the arrangement edges, i.e., two triangles versus a hexagon.} 
    \label{fig:cun_demo} 
\end{figure} 

\textbf{Cuneiform Writing} is a modified version of the Cuneiform graph dataset~\cite{kriege2018recognizing} (Figure~\ref{fig:cun_demo}), which consists of graphs representing Hittite Cuneiform signs. The dataset was originally derived from tablets, where individual wedges along with their arrangements were automatically extracted. 
We draw out arrangement edges with short spatial distances, while ensuring that each depth node has at most 2 arrangement edges. 
To transform static graphs into temporal graphs, we adopt sequential edge generation that adds arrangement edges first, then the wedge edges. For each task, we randomly select 2 cuneiform signs with an equal number of nodes. Refer to Appendix~\ref{Appendix: Details of Experiment Settings} for more details. 

\begin{table}[h]
\centering
\caption{Isomorphism test results. Non-isomorphic is regarded as positive. TNR is true negative rate.} 
\scalebox{0.75}{
\begin{tabular}{c|ccccc|ccccc}
\hline
          & \multicolumn{5}{c|}{Oscillating CSL}               & \multicolumn{5}{c}{Cuneiform Writing}               \\ \hline
          & Precision & Recall  & TNR     & Accuracy & AUC     & Precision & Recall  & TNR     & Accuracy & AUC      \\ \hline
TGN       & $0.00$    & $0.00$  & $\textcolor{violet}{\mathbf{100.0}}$ & $16.67$  & $50.0$  & $\textcolor{violet}{\mathbf{100.0}}$   & $74.19$ & $\textcolor{violet}{\mathbf{100.0}}$ & $79.80$  & $86.98$  \\
GRU-GCN   & $0.00$    & $0.00$  & $\textcolor{violet}{\mathbf{100.0}}$ & $16.67$  & $50.0$  & $\textcolor{violet}{\mathbf{100.0}}$   & $95.44$ & $\textcolor{violet}{\mathbf{100.0}}$ & $96.43$  & $97.61$  \\
PINT-pos  & $83.33$   & $\textcolor{violet}{\mathbf{100.0}}$ & $0.00$  & $83.33$  & $50.0$  & $82.03$   & $\textcolor{violet}{\mathbf{100.0}}$ & $21.09$ & $82.85$  & $60.93$  \\
RTRGN (w/o HR)   & $0.00$    & $0.00$  & $\textcolor{violet}{\mathbf{100.0}}$ & $16.67$  & $50.0$  & $\textcolor{violet}{\mathbf{100.0}}$   & $95.44$ & $\textcolor{violet}{\mathbf{100.0}}$ & $96.43$  & $97.61$  \\
RTRGN & $\textcolor{violet}{\mathbf{100.0}}$   & $\textcolor{violet}{\mathbf{100.0}}$ & $\textcolor{violet}{\mathbf{100.0}}$ & $\textcolor{violet}{\mathbf{100.0}}$  & $\textcolor{violet}{\mathbf{100.0}}$ & $\textcolor{violet}{\mathbf{100.0}}$   & $\textcolor{violet}{\mathbf{100.0}}$ & $\textcolor{violet}{\mathbf{100.0}}$ & $\textcolor{violet}{\mathbf{100.0}}$  & $\textcolor{violet}{\mathbf{100.0}}$  \\ \hline
 \end{tabular}}
 \label{tab:exp on isomorphism}
\end{table}


The results are shown in Table~\ref{tab:exp on isomorphism}. RTRGN achieves perfect accuracy in both experiments, attributed to its heterogeneous revision (HR). Notably, PINT-pos fails entirely to correctly identify true negatives in the Oscillating CSL task. TGN and GRU-GCN on the other hand exhibit partial failures in accurately distinguishing challenging non-isomorphic graphs. One such failure case is illustrated in Figure~\ref{fig:cun_demo}. 

\subsection{Ablation Studies}

\begin{figure}[t]
    \centering
    \vspace{-3mm}
    \includegraphics[width=140mm]{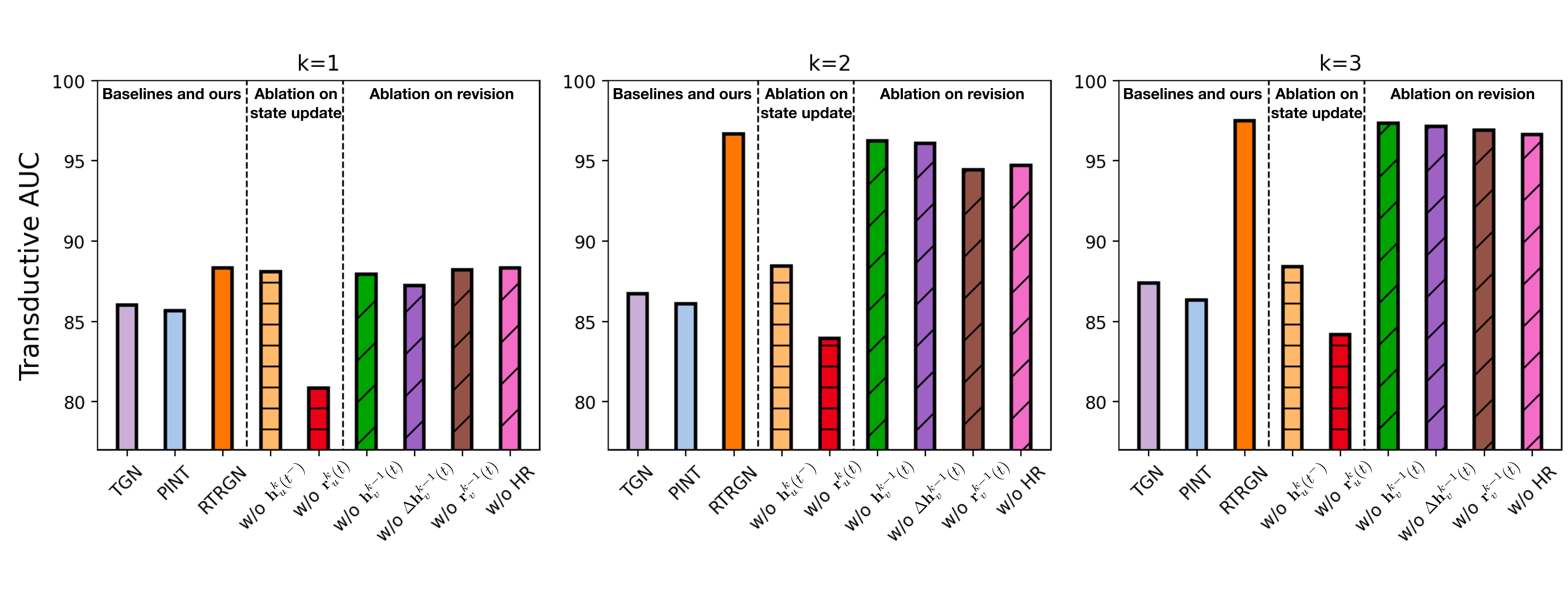}
    \vspace{-10mm}
    \caption{Transductive AUC (\%) on MovieLens dataset. Results averaged over 10 runs.}
    \label{fig:main ablation movie}
\end{figure} 

We conduct a set of ablation studies to dissect the impact of each proposed component in our RTRGN. The experiments are conducted on three representative datasets: MovieLens, Wikipedia, and UCI. The results for MovieLens are presented in Figure~\ref{fig:main ablation movie}, while the results for the remaining two datasets, which validate the consistency of our findings, are included in Appendix~\ref{appendix: Experiment Results}. 



\paragraph{Ablation on state update.} 
Our first ablation study is on $\mathbf{h}_{u}^{k}(t^{-})$, where we set $\mathbf{h}_{u}^{k}(t^{-})=\mathbf{0}$ in \eqref{equ:TRGN_recursion} for $k>0$, hence do not retain the hidden state but compute aggregation dynamically, denoted as w/o $\mathbf{h}_{u}^{k}(t^{-})$. The results suggest that including the hidden state is crucial for further improving performance when $k>1$. As the second ablation study, we set $\mathbf{r}_{u}^{k}(t)=\mathbf{0}$ in \eqref{msg}, which leads to a pure RNN design without neighbor aggregation, denoted as w/o $\mathbf{r}_{u}^{k}(t)$. It results in significant performance drops, indicating the criticality of integrating revisionary neighbor information in the message. 


\paragraph{Ablation on TR module.} 
The rest ablation studies are performed on the temporal revision module. As the first, we remove $\mathbf{h}_{v}^{k-1}(t)$ in \eqref{equ:revision_recursion}, denoted as w/o $\mathbf{h}_{v}^{k-1}(t)$. Interestingly, this setting exhibits only a marginal performance drop and even slightly outperforms the setting denoted as w/o $\Delta\mathbf{h}_{v}^{k-1}(t)$ where $\Delta\mathbf{h}_{v}^{k-1}(t)$ is removed instead. It suggests that the effectiveness of revision can primarily rely on the state change of neighbors, not necessarily the neighbor states. This may be attributed to that the hidden states have already incorporated such information through previous messages. As the last two settings, we remove $\mathbf{r}_{v}^{k-1}(t)$ in \eqref{equ:revision_recursion}, denoted as w/o $\mathbf{r}_{v}^{k-1}(t)$, and use the homogeneous version of revision \eqref{equ:revision_recursion no H} instead of the heterogeneous version \eqref{equ:revision_recursion}, denoted as w/o HR. Both settings result in noticeable performance drops, indicating the benefits of incorporating revisions of neighbors in the TR module, as well as the introduction of heterogeneity in revision. 

More experiments, e.g., analysis of computational time and complexity, ablation studies on $d$, etc., can be found in Appendix~\ref{appendix: Experiment Results}. 

\section{Conclusion}

We have proposed a new aggregation layer, named recurrent temporal revision (RTR) for temporal graphs. We have shown that it leads to superior expressiveness that goes beyond Temporal-1WL. We have also demonstrated that it can lead to significant improvements over current state-of-the-art methods. As an important breakthrough, we have demonstrated that it enables effective utilization of multiple aggregation layers, while existing methods have difficulties in effectively harnessing the power of multiple layers. 





\bibliographystyle{plain} 
\bibliography{neurips_2023}

\newpage
\appendix

\section{Appendix: Model Details and Training Scheme}
\label{Appendix:A}

\subsection{Base Embeddings}
\label{appendix: base case}

The level-$1$ embeddings is built upon the level-$0$ embeddings, which we referred to as the base embeddings. 
We have mainly considered two approaches for the base embeddings $\mathbf{h}^{0}_{u}(t)$: 

(1) The \emph{explicit parameterization} setting, where the base embeddings are treated as node-wise time-dependent free parameters that are initialized randomly and updated through gradient descent. 

(2) The \emph{implicit parameterization} setting, where the base embeddings are generated via a base RNN: 
\begin{equation}
    \mathbf{h}_{u}^{0}(t)=\textsf{UPDATE}^{0}\big(\mathbf{h}_{u}^{0}(t^{-}),\textsf{MSG}^{0}\big(\mathbf{h}_{u}^{0}(t^{-}),\mathbf{h}_{v}^{0}(t^{-}), \Phi(e_{uv,t})\big)\big). 
\label{equ: base RNN}
\end{equation}

This implicit form is adopted for the base embeddings in most RNN-based models, e.g., TGN and PINT. The $\textsf{MSG}^{0}$ is usually implemented as simple concatenation, which is also the case in our implementation. $\mathbf{h}_{u}^{0}(0)$ are typically initialized as zero-vectors. 


In comparison, the implicit form provides more regularization and tends to perform slightly better in our experiments. On the other hand, explicit methods offer faster computation, since it does not require forward or backward passes through $\textsf{UPDATE}^{0}$. 

Particularly, when (dynamic) node feature $x_{u, t}$ is available, we concatenate it to $\mathbf{h}^{k}_{u}(t)$ for all $k$. 

\subsection{Implementation Details}
\label{appendix: time attention as Temporal Revision}

\textbf{The Temporal Revision (TR)}: 
%
To compute $\mathbf{r}_{u}^{k}(t,\mathcal{S})$, we adopt an attention block to aggregate the revision information of the neighbors. 
Specifically, we first concatenate their lower-level revisions $\mathbf{r}_{v_i}^{k-1}(t, \mathcal{S}\cup\{u\})$ (which is computed recursively), 
the neighbourhood states $\mathbf{h}_{v_i}^{k}(t)$, and their state changes $\Delta\mathbf{h}_{v_i}^{k}(t)$. These concatenated features are then fed into a learnable function, denoted as $\textsf{FUNC}_1$. 
To represent events relative relationships,
we combine the event related feature $\Phi(e_{uv_i,t_i})$ with the indicator feature $\mathbbm{1}[v_i\in \mathcal{S}]$ that indicates whether the neighbor is in the specialty set $\mathcal{S}$. They are then passed through another learnable function, denoted as $\textsf{FUNC}_2$. Then, we construct a temporal revision matrix for node $u$ at layer $k$ as follows: 
\[
\begin{split}
    \mathbf{Z}_{u}^{k}=\Big[&
    \textsf{FUNC}_1^{k-1}\big(\mathbf{r}_{v_1}^{k-1}(t, \mathcal{S}\cup\{u\}), \Delta\mathbf{h}_{v_1}^{k-1}(t), \mathbf{h}_{v_1}^{k-1}(t) \big) \;\Big\|\;\textsf{FUNC}_2^{k-1}\big(\Phi(e_{uv_1,t_1}), \mathbbm{1}[v_1\in \mathcal{S}]\big),\;\;
    \ldots,
    \\&
    \textsf{FUNC}_1^{k-1}\big(\mathbf{r}_{v_N}^{k-1}(t, \mathcal{S}\cup\{u\}), \Delta\mathbf{h}_{v_N}^{k-1}(t),
    \mathbf{h}_{v_N}^{k-1}(t) \big) \;\Big\|\;\textsf{FUNC}_2^{k-1}\big(\Phi(e_{uv_N,t_N}), \mathbbm{1}[v_N\in \mathcal{S}]\big)
    \Big]^\top.
\end{split}
\]
We then forward it to three different linear projections to obtain the \textit{query}, \textit{key} and \textit{value} matrices of the standard self-attention mechanism: 
\begin{equation}
    \mathbf{Q}_{u}^{k}=\mathbf{Z}_{u}^{k}\mathbf{W}^{Q},\;\;
    \mathbf{K}_{u}^{k}=\mathbf{Z}_{u}^{k}\mathbf{W}^{K},\;\;
    \mathbf{V}_{u}^{k}=\mathbf{Z}_{u}^{k}\mathbf{W}^{V}.
\end{equation} 
Then the aggregated revision is calculated as:
\begin{equation}
    \mathbf{r}_{u}^{k}(t,\mathcal{S})=\textsf{Softmax}(\frac{\mathbf{Q}_u^k(\mathbf{K}_u^k)^\top}{\sqrt{d_K}}) \mathbf{V}_u^k,
\end{equation}
where $d_K$ is the dimension of $\mathbf{K}$. The above single-head attention can easily be extended to the multi-heads attention. 

Practically, we found that simply using concatenation for $\textsf{FUNC}_1$ and $\textsf{FUNC}_2$ is sufficient to achieve satisfactory performance. 
In our implementation, the dimension of revision $\mathbf{r}_{u}^{k}(t)$ is the same as the dimension of state $\mathbf{h}_{u}^{k}(t)$, which is $172$.

\textbf{The State Transition Model (\textsf{UPDATE}) and State Revising Message (\textsf{RMSG})}:
Next, we combine the revision with the current event related information using general learnable functions $\textsf{FUNC}_3$ and $\textsf{FUNC}_4$, and then feed the resulting message into the state transition model \textsf{UPDATE}, which we implement using a gated recurrent unit (GRU) cell. 
\begin{equation}
    \mathbf{m}^{k}_{u}(t)=\textsf{FUNC}_4^k\Big(\mathbf{h}_{u}^{k-1}(t),\textsf{FUNC}_3^k\big(\mathbf{r}_{u}^{k}(t,\varnothing),\mathbf{h}_{v}^{k-1}(t),\Phi(e_{uv,t})\big)\Big).
\end{equation}
\begin{equation}
    \mathbf{h}^{k}_{u}(t)=\textsf{GRU}^k\big(\mathbf{h}^{k}_{u}(t^-), \mathbf{m}^{k}_{u}(t)\big).
\end{equation}
In practice, we found that simply using concatenation for $\textsf{FUNC}_3$ is sufficient to give satisfactory performance, while using a \textsf{GRU} cell for $\textsf{FUNC}_4$ leads to strong performance. Specifically, using \textsf{GRU} as $\textsf{FUNC}_4$ clearly outperforms the 2-layer \textsf{MLP} in 7 out of the 8 tested link prediction benchmarks. For the \textsf{UPDATE} function, besides \textsf{GRU} cell we have also tried \textsf{RNN} cell and weighted averaging. \textsf{RNN} cell performs similarly to \textsf{GRU} cell, and they are both better than weighted averaging. 




\textbf{The Event Feature Function $\Phi(e_{uv,t'})$}:
We adopt the generic time encoding function $\phi(\cdot)$ to encode the time information, which was first introduced in Time2Vec~\cite{Time2vec} and later used in both TGAT and TGN. We similarly encode the sequence information. We denote the sequence as $s$. 

For event feature function $\Phi(e_{uv,t'})$, we concatenate the time encoding of $\phi_1(t-t')$ and the sequence encoding of $\phi_2(s-s')$. We found the sequence feature useful in a few benchmark datasets. When edge features $e_{uv,t'}$ are available (we abuse the notation of event to represent its feature), we concatenate them behind the time and sequence information. 


\subsection{Training Scheme}
\label{appendix: Training Scheme}

As in normal temporal aggregations~\cite{tgn,pint}, practically, we sample a maximum number of $d$ most recent neighbors for each node to calculate the revision to avoid the unacceptable computational cost. 





During training, we employ an extended version of the classic cache training scheme for temporal graph models \cite{tgn,pint}. For each node, we store the following information: 
(1) its last event $e_{uv,t}$; 
(2) its embeddings before its last event, i.e., $\mathbf{h}_{u}^{k}(t^-)$ for $k\in [0,K]$. These caches enable the GRU updaters ($\textsf{UPDATE}^k$) to perform one-step trace back, which is a standard technique to train RNN models. The revisions, i.e., $\mathbf{r}_{u}^{k}(t,\mathcal{S})$ for $k\in [1,K]$, are computed on-the-fly during training. 


When each new batch comes, we first calculate the one-step-further embeddings for nodes involved in the new batch using the cached embeddings and events, with which we predict the events in the new batch and update parameters via back-propagation. 
For each node $u$ involved in the batch, we update its embedding caches with the computed results $\mathbf{h}_{u}^{k}(t)$ for $k\in[0,K]$. 
In the meantime, we update the stored last events. The list of the most recent neighbors of each involved node will also be updated accordingly.\footnote{
It can be extended to handle multiple events for a node in a batch, e.g., storing the list of events in the last batch for each node, and averaging/summing~\cite{tgn} over  such list the embeddings of interacted nodes and the event features when updating.} 

As the typical implementation for large graphs, the calculation of node embeddings are conducted recursively: we directly request the top-level embeddings of nodes and they will recursively request related lower-level node embeddings until all necessary information is retrieved. 

\subsection{Theoretical Complexity}



The computational complexity of our RTRGN, measured in the number of function evaluations, is the same as classic temporal graph models like TGN and PINT. 
Our revision is the counterpart of the GNN module in baseline methods. The recursion evaluation of node embedding $\mathbf{h}_{u}^{k}(t)$ in both RTRGN and baselines (TGN and PINT) incurs a time complexity of $O(d^k)$ for each single node. In practice, since $k$ is typically a small value (e.g., $3$ in link prediction and $4$ in temporal graph isomorphism test), the computational complexity remains much acceptable.\footnote{When using heterogeneous revision (HR), various specialty set $\mathcal{S}$ can be incurred for a node $v$. However, for a given path, only a single $\mathcal{S}$ incurs for each node in the path. Therefore, the complexity is still $O(d^k)$, considering all paths extending from $u$.} Note that the computation overhead of the \textsf{UPDATE} and \textsf{RMSG} functions is negligible compared to the calculations of the revisions. 

Note that our space requirement is $2$ times more than TGN-like methods. The space requirement for GNN embedding generation in a $k$-layer model is $k|V|$, 
where $|V|$ is the total number of nodes. For RTRGN, we have an additional $k|V|$ for the embedding caches, and $k|V|$ to store the results of revision computation in the intermediate steps. 



\section{Appendix: Theoretical Results on Expressiveness}
\label{Appendix: B}

\subsection{Proof of Proposition \ref{prop:1} and its Corollary \ref{corollary:equal to equal}}
\label{appendix: Proof of Proposition prop0  and its Corollary corollary equal to equal}
\begin{figure}[bt]
    \centering
    \includegraphics[width=100mm]{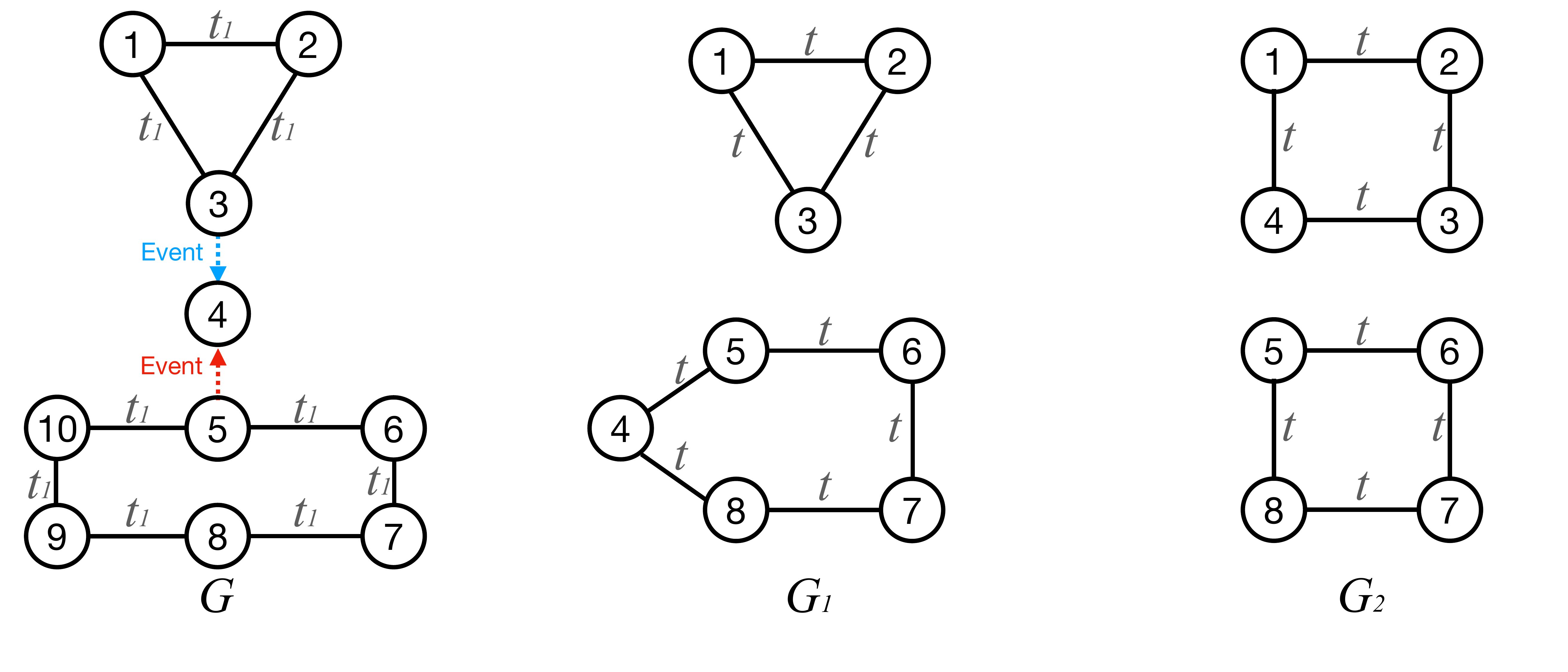}
    \vspace{-3mm}
    \caption{Examples used in the contradiction proof of Proposition~\ref{prop:1}. We assume both node and edge features are zero.}
    \label{fig:contradiction_prop_0}
\end{figure}

\begin{proof}[Proof of the first part.]

Given $f_A \succ_{is} f_B$, we can conclude that there exist two nonisomorphic graphs $G_1(t)$ and $G_2(t)$ such that $f_A$ can distinguish them, whereas $f_B$ cannot. Let's consider each node pair $\langle u,v \rangle \in G_1(t)$. In the case of $f_B$, it fails to produce distinguishable embedding generations for both $u$ and $\pi_B(u)$ (as well as for $v$ and $\pi_B(v)$), where $\pi_B$ represents the bijective mapping of indistinguishable test results for $f_B$ on $G_1(t)$ versus $G_2(t)$. However, there must exist at least one pair $\langle u,v \rangle$ for which $f_A$ can generate distinguishable embedding generations, either for $u$ and $\pi_B(u)$ or for $v$ and $\pi_B(v)$. Otherwise, $f_A$ would fail to distinguish between the two temporal graphs on time $t$ under $\pi_B$, leading to a contradiction with the assumption that $A$ cannot find such bijective mapping $\pi$ to deem $G_1(t)$ versus $G_2(t)$ as indistinguishable. Consequently, $f_A$ can successfully distinguish between the two possible events $e_{uv,t}$ and $e_{\pi_B(u)\pi_B(v),t}$, while $f_B$ cannot.

Thus, we have identified (at least) a specific event pair that $f_A$ can distinguish, whereas $f_B$ fails.

For all event pairs $e_{uv,t}$ and $e_{u'v',t}$ that $f_B$ can distinguish, it achieves this by generating different embedding forms for $u$ and $u'$ (or $v$ and $v'$). Consequently, $f_B$ is capable of identifying the resulting temporal graphs (after the occurrence of the two possible events $e_{uv,t}$ and $e_{u'v',t}$) as non-isomorphic. In this scenario, we can infer that $f_A$ also generates distinct embedding forms for $u$ and $u'$ (or $v$ and $v'$) prior to the occurrence of the two possible events, since $f_A$ also distinguishes the resulting temporal graphs (after the occurrence of the two possible events) as non-isomorphic ($f_A\succ_{is} f_B$).

Therefore, all event pairs that $f_B$ can distinguish, $f_A$ can also distinguish.
As a result, 
\[f_A\succ_{is} f_B\Rightarrow f_A\succ_{lp} f_B.\]
\end{proof}

\begin{proof}[Contradiction proof of the counter part.]
For the converse case, when we have $f_A\succ_{lp} f_B$, we can only demonstrate that there exist some temporal graph pairs that $f_A$ can distinguish while $f_B$ cannot. However, we cannot prove that all graph pairs that $f_B$ can distinguish are also distinguishable by $f_A$.

Firstly, we construct two types of IMP-TGN as follows:
\begin{itemize}
    \item Model $A$: \textit{Cycle of length 6 IMP-TGN} (C6-IMP-TGN), which incorporates a 1-dimensional feature $1$ into the node embeddings if the node is part of a cycle of length 6 (based on the graph's structure and disregarding temporal causality); otherwise, it assigns a value of $0$.
    \item Model $B$: \textit{Longest Path IMP-TGN} (LP-IMP-TGN), which augments node embeddings with a 1-dimensional feature representing the length of the longest (structural) path in the entire graph. 
\end{itemize}

Note that LP-IMP-TGN $\cong_{lp}$ IMP-TGN as the added global feature is irrelevant to predicting the latest event and is indistinguishable for every node.
Furthermore, it can be easily demonstrated that C6-IMP-TGN is superior to LP-IMP-TGN in terms of link prediction ($\succ_{lp}$) using an example from Figure~\ref{fig:contradiction_prop_0}. In this example, in graph $G(t)$, two events are distinguishable by C6-IMP-TGN due to different local features in source nodes 3 and 5, which correspond to a cycle of length 6.

However, in the isomorphism test on $G_1(t)$ and $G_2(t)$ from Figure~\ref{fig:contradiction_prop_0}, C6-IMP-TGN fails to distinguish the graphs (as IMP-TGN itself fails on such a regular graph, and the added feature is $0$ for all nodes). On the other hand, LP-IMP-TGN can produce distinct additional features (the feature output is $5$ for $G_1$ and $4$ for $G_2$), allowing it to distinguish the graphs.
This counterexample demonstrates that $f_A\nsucc_{is} f_B$. Thus, we can conclude that $f_A\succ_{lp} f_B\nRightarrow f_A\succ_{is} f_B$. And in fact, we can only deduce $f_A\succ_{lp} f_B\Rightarrow f_B\nsucc_{is} f_A$.
\end{proof}

\textit{Remark.} C6-IMP-TGN, which is equipped with a node-wise feature value, outperforms LP-IMP-TGN, which only has the global feature, in link prediction. However, C6-IMP-TGN may perform unsatisfactorily in the isomorphism test. This counterexample highlights the difference between the temporal link prediction task and the temporal graph isomorphism test. While link prediction primarily focuses on the local structure, the isomorphism test requires an additional emphasis on global distinguishability.

We can apply a similar proof technique to obtain the following result. 
\begin{corollary}
Being equally expressive in isomorphism test implies being equally expressive in link prediction ($f_A\cong_{is} f_B\Rightarrow f_A\cong_{lp} f_B$).
\label{corollary:equal to equal}
\end{corollary}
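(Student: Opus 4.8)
\textbf{Proof proposal for Corollary~\ref{corollary:equal to equal}.}

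The plan is to mirror the two-directional argument used for the first part of Proposition~\ref{prop:1}, but now applied symmetrically in both directions. Recall $f_A \cong_{is} f_B$ means each model distinguishes exactly the same temporal graph pairs under isomorphism test; in particular neither is strictly more expressive than the other, so both $f_A \succ_{is} f_B$ fails and $f_B \succ_{is} f_A$ fails, \emph{but} every pair distinguished by one is distinguished by the other. I want to conclude the same for link prediction: every temporal link prediction pair distinguished by $f_A$ is distinguished by $f_B$ and vice versa.

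First I would take an arbitrary pair of temporal link prediction problems $\langle (u_1,v_1),(u_2,v_2)\rangle$ inside a temporal graph $G(t)$ that $f_A$ distinguishes, i.e., the embedding of $u_1$ differs from that of $u_2$ at some $t'\le t$ (or the same for $v_1,v_2$). Following the reduction idea already invoked in Proposition~\ref{prop:1}, I form the two temporal graphs $G(t)$-with-edge-$e_{u_1v_1,t}$ and $G(t)$-with-edge-$e_{u_2v_2,t}$. Since $f_A$ gives $u_1$ and $u_2$ (or $v_1$ and $v_2$) different embeddings at some time $\le t$, and these embeddings are computed from the common history $G(t)$ alone (they do not depend on the hypothetical future edge), $f_A$ distinguishes these two resulting temporal graphs in the isomorphism test: under any candidate bijection, the node playing the role of $u_1$ cannot be matched to one with $u_2$'s embedding history. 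Then because $f_A\cong_{is}f_B$, $f_B$ also distinguishes this temporal graph pair. The key step is now to run this implication backwards: $f_B$ distinguishing the two post-edge graphs forces $f_B$ to produce, already on the shared prefix $G(t)$, distinct embedding histories for the endpoints of the two candidate edges — otherwise one could build a bijection matching $u_1\leftrightarrow u_2$, $v_1\leftrightarrow v_2$ and extend it trivially to the single added (identically-featured) edge, contradicting distinguishability. Hence $f_B$ also distinguishes the original link prediction pair. By symmetry (swapping the roles of $A$ and $B$), the same argument gives the reverse inclusion, so $f_A$ and $f_B$ distinguish exactly the same link prediction pairs, i.e. $f_A\cong_{lp}f_B$.

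The main obstacle, and the step that needs care, is the backward direction of the reduction: arguing that if $f_B$ fails to separate the endpoint embeddings on the common prefix $G(t)$, then $f_B$ fails to separate the two augmented temporal graphs. This requires that the embeddings of all other nodes and of the endpoints at times $\le t$ are untouched by which of the two hypothetical future edges is added — i.e. the models are genuinely temporal/causal and the added edge at time $t$ cannot retroactively alter embeddings at $t'\le t$ — and that the single added edge, having identical node/edge features in both cases (as in the zero-feature convention of Figure~\ref{fig:contradiction_prop_0}), contributes identically once the bijection on endpoints is fixed. Granting the causality assumption already implicit in the Definitions, the rest is the same bijection-stitching bookkeeping as in Proposition~\ref{prop:1}, and one simply notes that every implication used there was an equivalence rather than a one-way arrow, which is exactly what upgrades ``$\succ$ implies $\succ$'' to ``$\cong$ implies $\cong$''.
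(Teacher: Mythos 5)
Your proposal is correct and follows essentially the same route as the paper, which proves the corollary only by remarking that "a similar proof technique" to Proposition~\ref{prop:1} applies: you spell out exactly that technique, namely the reduction of a link-prediction pair to the isomorphism test on the two edge-augmented graphs, run in both directions using the symmetry of $\cong_{is}$. Your explicit identification of the backward step (that failure to separate endpoint embeddings on the common prefix, plus temporal causality and the identically-featured added edge, yields a stitched bijection) is a faithful and somewhat more careful rendering of the bookkeeping the paper leaves implicit.
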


\subsection{Proof of Proposition \ref{prop:2}}
\label{appendix: Proof of Proposition prop1}
PINT and Time-then-IMP differ in the design of their base RNNs. The base RNN in PINT is defined by equation \eqref{equ: base RNN}, while the base RNN in Time-then-IMP can be represented as 
\[
\mathbf{h}^0_{u}(t)=\textsf{RNN}_{node}\big(\mathbf{h}^0_{u}(t^-),x_{u,t}\big),\;\;\;\;
\mathbf{e}_{uv}(t)=\textsf{RNN}_{edge}\big(\mathbf{e}_{uv}(t^-),e_{uv,t}\big).
\]
where $x_{u,t}$ and $e_{uv,t}$ represent the node and edge features associated with the respective events.

It can be observed that the edge RNNs in Time-then-IMP play a role in processing and producing edge features, which distinguishes them from the conventional base RNNs that primarily operate on nodes. However, note that in the isomorphism test, the edge features are only used in the generation of node embeddings but no other ways. 


\begin{lemma}
PINT is equally expressive as Temporal-1WL test in temporal graph isomorphism test.
\end{lemma}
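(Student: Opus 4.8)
The plan is to prove this lemma by establishing the two refinement inclusions that together constitute an expressiveness equivalence: \textbf{(i)} PINT is \emph{no more} expressive than the Temporal-1WL test in temporal graph isomorphism, and \textbf{(ii)} PINT is \emph{at least} as expressive (given injective aggregators and sufficiently many layers, as in the preliminaries). Both follow from one structural claim — that with enough layers the partition of nodes induced by PINT's embeddings coincides, at every time $t'\le t$, with the partition induced by the stable Temporal-1WL coloring — but it is cleanest to argue the two inclusions separately, and the real content is temporal bookkeeping rather than a new idea.

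For direction (i) I would maintain, by a nested induction whose outer loop runs over the events in temporal order and whose inner loop runs over the layer index $k$, the invariant: if a node $u$ in $G_1$ and a node $u'$ in $G_2$ share the same Temporal-1WL color at time $t'$, then $\mathbf{h}^{k}_{u}(t')=\mathbf{h}^{k}_{u'}(t')$ for all $k\ge 0$. The base case $k=0$ is definitional: the base embedding of a node is the maximally expressive base-RNN state, which by the construction of Temporal-1WL in \cite{pint} realizes exactly its initial coloring, so equal colors give equal $\mathbf{h}^{0}$. The inductive step uses \eqref{equ:agg}--\eqref{equ:agg_update}: two nodes with equal Temporal-1WL color have, by the Temporal-1WL refinement rule, equal multisets of (neighbor color, $\Phi(e_{uv,\cdot})$); the inductive hypothesis upgrades ``equal neighbor color'' to ``equal $\mathbf{h}^{k-1}$'', and since $\textsf{AGG}^{k}$ and $\textsf{COMBINE}^{k}$ are deterministic functions the outputs agree; the time recurrence of the base RNN is absorbed by the outer induction, since the color at $t'$ is built from the color at $t'^-$. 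Feeding the Temporal-1WL color-preserving bijection into this invariant yields a PINT-embedding-preserving bijection valid at all $t'\le t$; here one uses that the Temporal-1WL update is itself injective, so agreement of colors at time $t$ forces agreement at every earlier time, giving the single uniform bijection that temporal graph isomorphism indistinguishability demands.

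For direction (ii) I would invoke the standard observation (as in \cite{gin}) that over a countable domain the aggregation and combination functions can be chosen injective on finite multisets, noting the domain is genuinely countable here because node and edge attributes come from a finite set and the only uncountable ingredient, the time/sequence encoding, still ranges over a countable set of reachable values; PINT realizes such injective functions with its MLP-based aggregators and base RNN. A companion induction, dual to the one in (i), then shows that after $k$ layers PINT's embedding partition is exactly $k$-round Temporal-1WL, and since a finite temporal graph's Temporal-1WL coloring stabilizes within $|V(t)|$ rounds, ``sufficiently many layers'' makes the two partitions equal at every $t'\le t$. Then, whenever PINT does not distinguish a pair, the witnessing embedding-preserving bijection is simultaneously color-preserving, so Temporal-1WL does not distinguish the pair either — i.e., every pair Temporal-1WL distinguishes, PINT also distinguishes.

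I expect the main obstacle to be precisely this temporal coupling rather than anything in the multiset/injectivity machinery. Unlike the static case, the embeddings obey a genuine recurrence $\mathbf{h}^{k}_{u}(t)=\textsf{COMBINE}^{k}(\mathbf{h}^{k}_{u}(t^-),\dots)$ and the base RNN carries state, so the induction must be ordered over (event, layer) carefully enough that every recurrence is closed before it is used; and the indistinguishability definition asks for one bijection valid simultaneously at all $t'\le t$, not a separate bijection per snapshot, so one must verify that the stable Temporal-1WL coloring — and hence the matching PINT partition — is consistent across all past snapshots. Pinning down the exact definition of Temporal-1WL from \cite{pint}, in particular how it initializes from the base features and threads the time encoding through each refinement round, is the detail that must be made precise for the argument to be airtight.
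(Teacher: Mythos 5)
The paper does not actually prove this lemma itself: it is stated and then discharged by citation to~\cite{pint}, where the equivalence is established by exhibiting a PINT model whose output coincides with the Temporal-1WL test. Your sketch is a faithful reconstruction of that standard argument, and it is the same two-direction induction (determinism of $\textsf{AGG}^k$/$\textsf{COMBINE}^k$ on equal multisets for the ``no more expressive'' direction; injectivity on countable multisets plus stabilization of the coloring for the ``at least as expressive'' direction) that the paper itself spells out immediately afterwards for Time-then-IMP in the remainder of the proof of Proposition~\ref{prop:2}, so the approach is essentially the same rather than a new route. The one substantive point to nail down --- which you correctly flag at the end --- is the base case $k=0$: PINT's memory $\mathbf{h}^0_u(t)$ is updated using the interacting neighbor's memory $\mathbf{h}^0_v(t^-)$ as in \eqref{equ: base RNN}, so it already encodes a temporal computation tree rather than just $u$'s own attribute history, and the invariant ``equal Temporal-1WL color $\Rightarrow$ equal $\mathbf{h}^0$'' only holds because the Temporal-1WL initial coloring in~\cite{pint} is defined to refine exactly that tree; without pinning that definition down the outer (event-ordered) induction does not close at $k=0$.
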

This is proved in~\cite{pint}, where they demonstrate the existence of a PINT model that produces the same output as the Temporal-1WL test. The proof establishes the equivalence between PINT and Temporal-1WL in terms of their expressive power.

In our generalized proof, we extend the expressiveness analysis from the static setting in \cite{gin} to the temporal setting for Time-then-IMP. We provide additional insights that considering stacked features, such as $e_{uv,(0:t)}$ instead of the feature at the current timestamp $e_{uv,t}$, does not provide additional expressiveness. 

\begin{proof}
The study by \cite{siegelmann1992computational} demonstrates that a recurrent neural network (RNN) with a sufficient number of hidden neurons can serve as a highly expressive sequence model, approximating a universal Turing machine. Given this insight, we can view the representations of node and edge attribute sequences in an RNN as a memory copy of all historical inputs, with no loss of information. Specifically, we can represent the node embedding as $\mathbf{h}^0_{u}(t)=[x_{u,(0:t)}]$, and the edge embedding as $\mathbf{e}_{uv}(t)=[e_{uv,(0:t)}]$. In these representations, $[x_{v,(0:t)}]$ and $[e_{uv,(0:t)}]$ are matrices or tensors composing of vectors, where each vector is the concatenation of all given node or edge features across all timestamps up to time $t$. If no features are provided, these vectors can be set to zero.

As a result, the embedding generation can be seen as an injective message passing (IMP) operation applied to the stacked graphs, where the stacked graphs consist of the historical node and edge attributes concatenated along the time dimension.
\begin{equation}
\begin{split}
    &\mathbf{h}^k_{u}(t)=\textsf{GNN}^{k}_{u}\big(\text{Node Attributes}=[x_{v,(0:t)}]_{v\in V(t)},\;\text{Edge Attributes}=[e_{uv,(0:t)}]_{uv\in E(t)}\big).
\end{split}
\end{equation}

Here $\textsf{GNN}^{k}_{u}$ is the summary of the embedding generation process following the temporal neighbor aggregation and update, i.e. \eqref{equ:agg} and \eqref{equ:agg_update}, by setting the base input $\mathbf{h}^0_{u}(t)$ as node attributes, and concatenating the edge attributes into the aggregation \eqref{equ:agg}.
Note that any pre-computed features before passing the raw features to GNN can be absorbed by the first injective message passing layer in GNN, so the above form is valid when including any pre-computed features from raw features.

Suppose that up to time $T$, the Temporal-1WL algorithm is unable to distinguish the temporal graphs $G_1(T)$ and $G_2(T)$. This implies that for any iteration $k$ of the Temporal-1WL test conducted at time $t\leq T$, the graphs $G_1(t)$ and $G_2(t)$ consistently exhibit the same multiset of node labels for each node $u$ ($\in G_1$ and its counterpart $u'\in G_2$ in the bijective mapping), represented by the coloring ${c^k(u)}$ versus ${c^k(u')}$. Furthermore, the two graphs also share the same multiset of node neighbors $\{\!\!\{(c^k(v),\Phi(e_{uv,t')})|e_{uv,t'}\in E_1(t)\}\!\!\}$, where $E_1(t)$ represents the set of edges in $G_1(t)$ versus that of $G_2(t)$, and the double brackets represent the multiset.

Again let $\Phi$ be a generic encoding of event feature, and time information, etc. We then can observe that when examining the Temporal-1WL test conducted at time $t$, the fact that $c^{k+1}(u)=c^{k+1}(u')$ for all iterations $k$ implies that the following inputs to the $k+1$ iteration of test are equivalent:
\begin{equation}
\big(c^k(u),\big\{\!\!\big\{(c^k(v),\Phi(e_{uv,t'}))|e_{uv,t'}\in E_1(t)\big\}\!\!\big\}\big)
=
\big(c^k(u'),\big\{\!\!\big\{(c^k(v'),\Phi(e_{u'v',t'}))|e_{u'v',t'}\in E_2(t)\big\}\!\!\big\}\big)
\label{equ:multiset color}
\end{equation}
Then it follows that the multisets considering the stacked features is also equivalent:
\begin{equation}
\big(c^k(u),\big\{\!\!\big\{(c^k(v),\Phi(e_{uv,(0:t')}))|e_{uv,t'}\in E_1(t)\big\}\!\!\big\}\big)
=
\big(c^k(u'),\big\{\!\!\big\{(c^k(v'),\Phi(e_{u'v',(0:t')}))|e_{u'v',t'}\in E_2(t)\big\}\!\!\big\}\big)
\label{equ:multiset color all t}
\end{equation}
Indeed, by splitting the multisets into two parts, one for neighbors that only observe interactions before $t$ and another for neighbors that observe interactions exactly at time $t$, one can demonstrate the equivalence of these two multisets for node $u$ and node $u'$. This can be proven by induction, ensuring that the assumption that Temporal-1WL conducted prior to time $t$ deems the graphs isomorphic holds true without contradiction.

Assuming that at time $t$, for the same 1WL node label $c^k(u) = c^k(u')$ at iteration $k$, we also have the output of Time-then-IMP as identical $\mathbf{h}^k_{u} = \mathbf{h}^k_{u'}$ for any node $u\in G_1(t)$ (and its corresponding image $u'\in G_2(t)$ in the bijective mapping given by the Temporal-1WL). It is worth noting that this assumption holds true for the base case $k = 0$ since Temporal-1WL and Time-then-IMP both start with the same node features.

Based on this assumption, it follows that the inputs to the  Time-then-IMP's $k+1$ iteration
\begin{equation}
\big(\mathbf{h}^k_u,\big\{\!\!\big\{(\mathbf{h}^k_v,\Phi(e_{uv,(0:t')}))|e_{uv,t'}\in E_1(t)\big\}\!\!\big\}\big)
=
\big(\mathbf{h}^k_{u'},\big\{\!\!\big\{(\mathbf{h}^k_{v'},\Phi(e_{u'v',(0:t')}))|e_{u'v',t'}\in E_2(t)\big\}\!\!\big\}\big)
\label{equ:multiset h}
\end{equation}
are also the same. 
When the same injective aggregation and update functions are applied, the output will likewise be the same. Consequently, Time-then-IMP will also classify the two graphs as isomorphic on the $k+1$ iteration of its test (or equivalently, the $(k+1)$-layer output of the Time-then-IMP model) on time $t$.

By applying induction, we can conclude that if $k$ iterations of the Temporal-1WL test result in the graphs being indistinguishable, the output of the most expressive Time-then-IMP model will also be indistinguishable.

On the other hand, if at iteration $k+1$ of the Temporal-1WL test, the result becomes distinguishable, i.e., $c^{k+1}(u) \neq c^{k+1}(u')$, it implies that the inputs mentioned in equation \eqref{equ:multiset color} are different. Consequently, the inputs to the $k+1$ iteration of Time-then-IMP in equation \eqref{equ:multiset h} will also be different. Applying injective aggregation and update functions, we can deduce that $\mathbf{h}^{k+1}_{u} \neq \mathbf{h}^{k+1}_{u'}$, meaning that Time-then-IMP's predictions also become distinguishable at iteration $k+1$. 

In conclusion, we have identified and demonstrated that Time-then-IMP will produce the same output as Temporal-1WL in the temporal graph isomorphism test. As a result, Time-then-IMP $\cong_{is}$ Temporal-1WL. Combines with the above lemma, we
\[
\text{Temporal-1WL}\; \cong_{is}\; \text{Time-then-IMP}\; \cong_{is}\; \text{PINT}.
\]
With Corollary \ref{corollary:equal to equal}, we also have 
\[
\text{Temporal-1WL}\; \cong_{lp}\; \text{Time-then-IMP}\; \cong_{lp}\; \text{PINT}.
\]
\end{proof}

\subsection{Proof of Proposition \ref{prop:3}}
\begin{proof}
We begin by demonstrating that any arbitrary IMP-TGN representation (such as PINT's) can be replicated by an RTRGN representation that produces the same embeddings for identical temporal graph inputs. We first examine RTRGN without heterogeneous revision, assuming that the indicator function in equation \eqref{equ:revision_recursion} is always set to zero accordingly. By further setting a portion of the inputs in the revision step to zero, we can restore the original IMP-TGN representation.
\[
\mathbf{r}_{u}^{k}(t) = \textsf{TR}^{k}\Big(\Big\{\!\!\Big\{\big(\mathbf{h}_{v}^{k-1}(t),\Phi(e_{uv,t'})\big) \;\big\mid\; e_{uv,t'}\in E(t^-)\Big\}\!\!\Big\}\Big), 
\]
Let the injective function $\textsf{TR}^{k}$ have the maximum expressiveness (i.e., the universal function approximator) such that it simply copies the inputs. Then, we can reformulate \eqref{msg} into
\[
\begin{aligned}
\mathbf{m}_{u}^{k}(t) = &\textsf{RMSG}^{k}\big(
\mathbf{h}_{u}^{k-1}(t),\mathbf{h}_{v}^{k-1}(t),
\mathbf{r}_{u}^{k}(t),
\Phi(e_{uv,t})\big)
\\=&\textsf{F}_1^{k}\big(
\mathbf{h}_{u}^{k-1}(t), \textsf{F}_2^{k}\big(\mathbf{h}_{v}^{k-1}(t),
\mathbf{r}_{u}^{k}(t),
\Phi(e_{uv,t})\big)\big)
\\=&\textsf{F}_1^{k}\Big(
\mathbf{h}_{u}^{k-1}(t), \textsf{F}_2^{k}\Big((\mathbf{h}_{v}^{k-1}(t),\Phi(e_{uv,t})),\Big\{\!\!\Big\{\big(\mathbf{h}_{v'}^{k-1}(t),\Phi(e_{uv',t'})\big) \;\big\mid\; e_{uv',t'}\in E(t^-)\Big\}\!\!\Big\}
\Big)\Big)
\\=&\textsf{F}_1^{k}\Big(
\mathbf{h}_{u}^{k-1}(t), \textsf{F}_2^{k}\Big(\Big\{\!\!\Big\{\big(\mathbf{h}_{v}^{k-1}(t),\Phi(e_{uv,t'})\big) \;\big\mid\; e_{uv,t'}\in E(t)\Big\}\!\!\Big\}\Big)\Big)
,
\end{aligned}
\]
Note that we adopt the same base RNN as IMP-TGN (illustrated on Appendix~\ref{Appendix  :A}).
Furthermore, by specifically setting $\textsf{F}_1^{k}=\textsf{COMBINE}^{k}$, $\textsf{F}_2^{k}=\textsf{AGG}^{k}$, and $\textsf{UPDATE}^{k}(\mathbf{h},\mathbf{m})=\mathbf{m}$, we effectively restore the embedding generation process of IMP-TGN.

Thus so far we have proved RTRGN $\succeq_{is}$ IMP-TGN.

Next, we employ the Oscillating CSL task (Figure~\ref{fig:OscillatingCSL}) to demonstrate that our RTRGN exhibits greater expressiveness compared to the Temporal-1WL class, which encompasses three models: Temporal-1WL, IMP-TGN, and Time-then-IMP, as discussed earlier.

Firstly, the methods in the Temporal-1WL class are unable to distinguish between the two temporal graphs depicted in Figure~\ref{fig:OscillatingCSL}. We denote the top graph as $G_1$ and the bottom graph as $G_2$. When referring to nodes in $G_2$, we label them with a prime symbol, such as $u$ for $G_1$ and $u'$ for $G_2$.
It is important to note that we have assumed the graphs are unattributed, meaning that the features associated with each event are always zero: $x_{u,t}=\mathbf{0}$ and $e_{uv,t}=\mathbf{0}$. In this sense, the generic feature encoding simply records the time (or sequence) information: $\Phi(e_{uv,t'})=t'$.

At each time $t_n$ (assuming without loss of generality that $n$ is an odd number), let's consider the scenario where $c^k(u)=c^k(v)=c^k(u')=c^k(v')=c^k$ for any nodes $u,v,u',v'$ in both graphs (it can be easily verified that the base case $k=0$ holds for any $t$ since the graphs are unattributed). In this case, we can show that the $k+1$ iteration, corresponding to the embedding generated by the $k+1$ Temporal-1WL models, will also be identical. This can be demonstrated by the inputs to the $k+1$ iteration:
\[
\begin{aligned}
&\big(c^k(u),\big\{\!\!\big\{(c^k(v),\Phi(e_{uv,t'}))|e_{uv,t'}\in E_1(t_n)\big\}\!\!\big\}\big)
\\=&
\big(c^k,\big\{\!\!\big\{(c^k,t')|e_{uv,t'}\in E_1(t_n)\big\}\!\!\big\}\big)
\\=&
\big(c^k,\big\{\!\!\big\{(c^k,t')|t'\in \{\!\!\{t_1,t_2,\ldots t_n\}\!\!\}\cup\{\!\!\{t_1,t_3,\ldots,t_n\}\!\!\}\cup\{\!\!\{t_2,t_4,\ldots,t_{n-1}\}\!\!\}
\\&\;\;\;\;\;\;\;\;\;\;\cup\{\!\!\{t_2,t_4,\ldots,t_{n-1}\}\!\!\}\cup\{\!\!\{t_1,t_3,\ldots,t_n\}\!\!\}\cup\{\!\!\{t_1,t_2,\ldots t_n\}\!\!\}
\big\}\!\!\big\}\big)
\\=&
\big(c^k(u'),\big\{\!\!\big\{(c^k(v'),\Phi(e_{u'v',t'}))|e_{u'v',t'}\in E_2(t_n)\big\}\!\!\big\}\big).
\end{aligned}
\]
As a result, Temporal-1WL cannot distinguish between these two temporal graphs because the generated embeddings for any node $u\in G_1$ and $u'\in G_2$ are the same for any $k$.

Next, we demonstrate that our RTRGN model can successfully distinguish between the two temporal graphs depicted in Figure~\ref{fig:OscillatingCSL}.

Consider the time $t_n$ (without loss of generality, assuming $n$ is an odd number) that is sufficiently large. For simplicity, we remove the $\Delta\mathbf{h}$ term in \eqref{equ:revision_recursion}, as without this term the expressiveness of RTRGN can already surpass Temporal-1WL class.
First, let's focus on $G_1(t_n)$. For example, we observe the following computation:
\[
\begin{aligned}
\mathbf{r}_{2}^{k}(t_n,\{1,7\})=\!\textsf{TR}^{k}\Big(
& \Big\{\!\!\Big\{\big(\mathbf{r}_{1}^{k-1}(t_n,\{1,2,7\}),\mathbf{h}_{1}^{k-1}(t_n),\Phi(e_{21,t'}),\mathbbm{1}\big)\big\mid t'\in\{\!\!\{t_1,t_2,\ldots t_{n-1}\}\!\!\}\Big\}\!\!\Big\}
\\\cup&  \Big\{\!\!\Big\{\big(\mathbf{r}_{3}^{k-1}(t_n,\{1,2,7\}),\mathbf{h}_{3}^{k-1}(t_n),\Phi(e_{23,t'}),\mathbb{0}\big)\big\mid t'\in\{\!\!\{t_1,t_2,\ldots t_{n-1}\}\!\!\}\Big\}\!\!\Big\}
\\\cup&  \Big\{\!\!\Big\{\big(\mathbf{r}_{4}^{k-1}(t_n,\{1,2,7\}),\mathbf{h}_{4}^{k-1}(t_n),\Phi(e_{24,t'}),\mathbb{0}\big)\big\mid t'\in\{\!\!\{t_1,t_3,\ldots t_{n-1}\}\!\!\}\Big\}\!\!\Big\}
\\\cup&  \Big\{\!\!\Big\{\big(\mathbf{r}_{5}^{k-1}(t_n,\{1,2,7\}),\mathbf{h}_{5}^{k-1}(t_n),\Phi(e_{25,t'}),\mathbb{0}\big)\big\mid t'\in\{\!\!\{t_2,t_4,\ldots t_{n-2}\}\!\!\}\Big\}\!\!\Big\}
\\\cup& \Big\{\!\!\Big\{\big(\mathbf{r}_{6}^{k-1}(t_n,\{1,2,7\}),\mathbf{h}_{6}^{k-1}(t_n),\Phi(e_{26,t'}),\mathbb{0}\big)\big\mid t'\in\{\!\!\{t_2,t_4,\ldots t_{n-2}\}\!\!\}\Big\}\!\!\Big\}
\\\cup& \Big\{\!\!\Big\{\big(\mathbf{r}_{7}^{k-1}(t_n,\{1,2,7\}),\mathbf{h}_{7}^{k-1}(t_n),\Phi(e_{27,t'}),\mathbbm{1}\big)\big\mid t'\in\{\!\!\{t_1,t_3,\ldots t_{n-1}\}\!\!\}\Big\}\!\!\Big\}
\Big).
\end{aligned}
\]
In the above calculation, we can show that $\mathbf{h}_{u}^{k-1}=\mathbf{h}^{k-1}$ for any node $u$ and iteration $k$, which arises from the symmetry in the temporal graph. Additionally, note that $\Phi(e_{uv,t'})=t'$ because the graph is unattributed.
The result of the above calculation differs from that of $G_2(t_n)$, as shown below (where we neglect the prime symbol for simplicity):
\[
\begin{aligned}
\mathbf{r}_{2}^{k}(t_n,\{1,7\})=\!\textsf{TR}^{k}\Big(
& \Big\{\!\!\Big\{\big(\mathbf{r}_{1}^{k-1}(t_n,\{1,2,7\}),\mathbf{h}_{1}^{k-1}(t_n),\Phi(e_{21,t'}),\mathbbm{1}\big)\big\mid t'\in\{\!\!\{t_1,t_2,\ldots t_{n-1}\}\!\!\}\Big\}\!\!\Big\}
\\\cup&  \Big\{\!\!\Big\{\big(\mathbf{r}_{3}^{k-1}(t_n,\{1,2,7\}),\mathbf{h}_{3}^{k-1}(t_n),\Phi(e_{23,t'}),\mathbb{0}\big)\big\mid t'\in\{\!\!\{t_1,t_2,\ldots t_{n-1}\}\!\!\}\Big\}\!\!\Big\}
\\\cup&  \Big\{\!\!\Big\{\big(\mathbf{r}_{4}^{k-1}(t_n,\{1,2,7\}),\mathbf{h}_{4}^{k-1}(t_n),\Phi(e_{24,t'}),\mathbb{0}\big)\big\mid t'\in\{\!\!\{t_2,t_4,\ldots t_{n-2}\}\!\!\}\Big\}\!\!\Big\}
\\\cup&  \Big\{\!\!\Big\{\big(\mathbf{r}_{5}^{k-1}(t_n,\{1,2,7\}),\mathbf{h}_{5}^{k-1}(t_n),\Phi(e_{25,t'}),\mathbb{0}\big)\big\mid t'\in\{\!\!\{t_1,t_3,\ldots t_{n-1}\}\!\!\}\Big\}\!\!\Big\}
\\\cup& \Big\{\!\!\Big\{\big(\mathbf{r}_{6}^{k-1}(t_n,\{1,2,7\}),\mathbf{h}_{6}^{k-1}(t_n),\Phi(e_{26,t'}),\mathbb{0}\big)\big\mid t'\in\{\!\!\{t_1,t_3,\ldots t_{n-1}\}\!\!\}\Big\}\!\!\Big\}
\\\cup& \Big\{\!\!\Big\{\big(\mathbf{r}_{7}^{k-1}(t_n,\{1,2,7\}),\mathbf{h}_{7}^{k-1}(t_n),\Phi(e_{27,t'}),\blue{\mathbbm{1}}\big)\big\mid t'\in\blue{\{\!\!\{t_2,t_4,\ldots t_{n-2}\}\!\!\}}\Big\}\!\!\Big\}
\Big)
\end{aligned}
\]
The difference between the two expressions arises essentially from the presence of the heterogeneous feature (as the indicator) and persists even if $\mathbf{r}_{u}^{k-1}(t_n,\{1,2,7\})$ are the same for any $u$. Without such a heterogeneous feature, it is easy to show that the above two expressions for $G_1$ and $G_2$ would yield the same result for $\mathbf{r}_{2}^{k}(t_n,\{1,7\})$, if $\mathbf{r}_{u}^{k-1}(t_n,\{1,2,7\})$ is the same for all nodes $u$.

Similarly, one can demonstrate that the calculation of $\mathbf{r}_{2}^{k}(t_n,\{1,7\})$ in $G_1$ is different from any $\mathbf{r}_{u}^{k}(t_n,\{1,7\})$ in $G_2$ for all nodes $u$. Consequently, computations relying on this term, such as $\mathbf{r}_{7}^{k+1}(t_n,\{1\})$, and subsequent computations, such as $\mathbf{r}_{1}^{k+2}(t_n,\varnothing)$, will also yield different results. Consequently, with injective message functions and update functions, the resulting $\mathbf{h}_{1}^{k+2}(t_n)$ will differ between $G_1$ and $G_2$. This implies (a three or more layer, $k\geq3$) RTRGN can distinguish the two graphs $G_1$ and $G_2$ on time $t_n$.

Therefore, we have shown that RTRGN is more expressive than Temporal-1WL in terms of the isomorphism test: RTRGN $\succ_{is}$ Temporal-1WL. With Corollary \ref{corollary:equal to equal}, we also have RTRGN $\succ_{lp}$ Temporal-1WL. These results can be generalized to other methods in the Temporal-1WL class, utilizing the results in Proposition \ref{prop:2}.

\end{proof}

\subsection{Proof of Proposition~\ref{prop:4}}
\label{appendix:proof of prop4}
\begin{proof}[Proof of false positive in temporal graph isomorphism test]
In the Oscillating CSL classification task (Figure \ref{fig:OscillatingCSL}), we consider two temporal graphs on $t=t_1$: $G_1(t)$ follows a $\mathbf{C}_{7,2}$ CSL versus $G_2(t)$ follows a $\mathbf{C}_{7,3}$ CSL. We assume that both node and edge attributes are zero. It is worth noting that these two temporal graphs are isomorphic because the underlying static CSL structure, $\mathbf{C}_{7,2}$ and $\mathbf{C}_{7,3}$ are isomorphic.

To compute the positional features, we label the nodes in a clockwise manner as shown in Figure~\ref{fig:OscillatingCSL}. Then, we can compute the 1-dimensional positional features for each node at time $t_1$. These positional features correspond to the number of length-1 walks between two nodes and can be represented as row vectors below, using the adjacency matrix at time $t_1$:
\[
\text{Pos}(G_1) = 
\begin{pmatrix}
0 & 1 & 1 & 0 & 0 & 1 & 1 \\
1 & 0 & 1 & 1 & 0 & 0 & 1 \\
1 & 1 & 0 & 1 & 1 & 0 & 0 \\
0 & 1 & 1 & 0 & 1 & 1 & 0 \\
0 & 0 & 1 & 1 & 0 & 1 & 1 \\
1 & 0 & 0 & 1 & 1 & 0 & 1 \\
1 & 1 & 0 & 0 & 1 & 1 & 0
\end{pmatrix},\;\;
\text{Pos}(G_2) = 
\begin{pmatrix}
0 & 1 & 0 & 1 & 1 & 0 & 1 \\
1 & 0 & 1 & 0 & 1 & 1 & 0 \\
0 & 1 & 0 & 1 & 0 & 1 & 1 \\
1 & 0 & 1 & 0 & 1 & 0 & 1 \\
1 & 1 & 0 & 1 & 0 & 1 & 0 \\
0 & 1 & 1 & 0 & 1 & 0 & 1 \\
1 & 0 & 1 & 1 & 0 & 1 & 0
\end{pmatrix}.
\]
We can verify that there is no permutation of rows, denoted by $\pi$, that satisfies $\pi \cdot \text{Pos}(G_1) = \text{Pos}(G_2)$. This indicates that, regardless of the permutation, the node aggregation will be distinguished by the injective function applied to the distinct positional features. Consequently, this will result in different embedding generations and, therefore, output nonisomorphic results for the isomorphic temporal graph pairs.

The property of a feature or function is sensitive to the initial labeling of nodes is known as ``\textit{permutation sensitivity}'' \cite{relationalpooling, grugcn}. The reason that PINT-pos outputs false positives is primarily due to its positional feature being permutation sensitive. On the other hand, it can be easily shown that the other models we discussed have \textit{permutation-invariant} features and embedding generations \cite{relationalpooling, grugcn}. Hence, we can conclude that these models will not produce false positive results.
\end{proof}

When considering two possible events $e_{u_1v_1,t}$ and $e_{u_2v_2,t}$ on the same temporal graph $G$, if it is the case that after either event occurs, the resulting two temporal graphs are isomorphic, we consider these two events to be \textit{inherently indistinguishable}. By imposing the constraint that the resulting temporal graphs must be nonisomorphic, we provide a means to evaluate the ground truth: If adding either event still results in isomorphic temporal graphs, we consider it illogical to distinguish between these two events.

\begin{proof}[Proof of false positive in temporal link prediction]
Similarly, false positives can arise in temporal link prediction. Consider the temporal graph $G(t)$ at $t=t_1$, which follows a $\mathbf{C}_{7,2}$ CSL (top graph on Figure~\ref{fig:OscillatingCSL}). With positional features, each node's state will be distinct as they are generated by applying injective functions to their respective positional features, $\text{Pos}(G_1)$, which is different for all nodes. This allows PINT-pos to distinguish between any pairs of nodes in temporal link prediction. However, we know that pairs of nodes such as $\langle 1,2\rangle$ and $\langle 2,3\rangle$ are inherently indistinguishable, as adding these two links separately to the graph $G(t)$ results in two isomorphic graphs.

The reason PINT-pos produces false positives in temporal link prediction is fundamentally the same as in the isomorphism test: its positional feature is permutation sensitive.
\end{proof}


\subsection{A Model with Maximal Expressiveness}
\label{appendix:A Model with Maximal Expressiveness}
\begin{lemma}
Let the stacking form of a temporal graph $G(t)$ be defined as a static graph with the following attributes:
\[
\text{Node Attributes}=[x_{v,(0:t)}]_{v\in V(t)},\;\text{Edge Attributes}=[e_{uv,(0:t)}]_{uv\in E(t)}.
\]
Here $[x_{v,(0:t)}]$ and $[e_{uv,(0:t)}]$ are matrices or tensors composing of vectors. Each vector is obtained by concatenating the node or edge features across all timestamps from 0 to $t$. If no such features are provided, these vectors can be set to zero.
Then two temporal graphs are isomorphic, if and only if their two stacking forms are isomorphic.
\label{lemma2}
\end{lemma}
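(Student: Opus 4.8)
The plan is to reduce the lemma to a purely definitional statement about a \emph{single} bijection: I will show that a bijection $\pi$ between the node sets of $G_1(t)$ and $G_2(t)$ is a temporal-graph isomorphism if and only if the same $\pi$ is a (static) graph isomorphism between the two stacking forms. Once this is established, the two existential claims ``$G_1(t)$ and $G_2(t)$ are isomorphic'' and ``their stacking forms are isomorphic'' are literally the same statement. To make this precise I would first fix the working definition of temporal-graph isomorphism consistent with the rest of the paper: $\pi:V_1(t)\to V_2(t)$ is a temporal-graph isomorphism iff (i) $x_{v,t'}=x_{\pi(v),t'}$ for every node $v$ and every timestamp $t'\le t$, and (ii) for every ordered pair $u,v$ and every $t'\le t$, there is an event $e_{uv,t'}$ in $G_1$ with feature $f$ precisely when there is an event $e_{\pi(u)\pi(v),t'}$ in $G_2$ with the same feature $f$. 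A graph isomorphism of the stacking forms is, as usual, a bijection preserving adjacency and all node attributes $[x_{v,(0:t)}]$ and edge attributes $[e_{uv,(0:t)}]$.

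For the forward direction I would argue: if $\pi$ is a temporal-graph isomorphism, then the concatenated node vectors agree block by block, since the block indexed by $t'$ equals $x_{v,t'}=x_{\pi(v),t'}$, hence $[x_{v,(0:t)}]=[x_{\pi(v),(0:t)}]$; moreover $\{u,v\}$ is an edge of the stacking form of $G_1$ iff some event $e_{uv,t'}$ occurred with $t'\le t$, iff (by (ii)) some event $e_{\pi(u)\pi(v),t'}$ occurred, iff $\{\pi(u),\pi(v)\}$ is an edge of the stacking form of $G_2$; and on such an edge the two attribute tensors agree block by block by (ii). So $\pi$ is a static isomorphism. The backward direction is the same chain of equivalences read in reverse: equality of the concatenated vectors forces equality in every time-indexed block, which recovers per-timestamp node features and per-timestamp event features/occurrences, and adjacency preservation together with the blockwise data recovers exactly which events occur at which times; hence $\pi$ satisfies (i) and (ii).

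The main obstacle is making sure the stacking encoding is \emph{faithful} and that the two attribute tensors live in a common, comparable index space. Concretely, I need (a) that the concatenation over timestamps $0,\dots,t$ is aligned to the same global time grid for both graphs, so that ``block by block'' comparison is well defined, and (b) that an empty slot (no event at time $t'$) is distinguishable in the tensor from a genuine event whose raw feature happens to be zero. Point (b) is handled by the convention already used throughout the paper that the event encoding $\Phi(e_{uv,t'})$ carries the time/sequence information $t'$, so a real event slot is never the zero vector; alternatively one can append an explicit occurrence-indicator channel, as is done for messages elsewhere in the paper. A minor side remark I would include is that we take at most one event per (unordered node pair, timestamp) — otherwise each time block must itself be read as a multiset, which changes nothing in the argument. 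With these conventions in place, everything else is a routine unwinding of definitions.
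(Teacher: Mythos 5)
Your proposal is correct and takes essentially the same route as the paper: both arguments observe that a single bijection $\pi$ preserves the stacked attributes if and only if it preserves the per-timestamp node and edge features, so the two isomorphism notions coincide by unwinding definitions. You are somewhat more careful than the paper — which only spells out the ``stacking isomorphic $\Rightarrow$ temporally isomorphic'' direction and leaves the encoding-faithfulness issue (distinguishing an absent event from an event with zero feature) implicit in its finite-attribute-set assumption — but this is a refinement of the same argument, not a different one.
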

\begin{proof}
Since the temporal graph corresponds to a finite collection of events, the node and edge attributes in the stacking form also form a finite set. This reveals our assumption that the attributes forming finite sets will not be altered when changing the representation to the stacking form.

Suppose two stacking forms of $G_1(t)$ and $G_2(t)$ are isomorphic. It implies the existence of a bijective mapping $\pi$ such that every node $u\in G_1(t)$, $\pi(u)\in G_2(t)$ has exactly the same node attributes $x_{u,t'}$ versus $e_{\pi(u),t'}$ and exactly the same edge attributes $e_{uv,t'}$ versus $e_{\pi(u)\pi(v),t'}$ for every time $t'\leq t$. This equivalence means that $G_1(t)$ and $G_2(t)$ are isomorphic as temporal graphs.
\end{proof}

\cite{siegelmann1992computational} shows that with enough hidden neurons, an RNN can be a most-expressive sequence model
(universal Turing machine approximator). Therefore, if we consider the representations of node and edge attribute sequences generated by such a most-expressive RNN, they can be seen as a memory copy of all the historical inputs, preserving all the information without loss. 

In the context of a time-independent model applied to these RNN outputs, we can equivalently view it as being applied to the stacking form of the temporal graph, as stated in Lemma \ref{lemma2}. This is because the RNN outputs effectively capture the temporal dependencies and encode the relevant information from the entire history, allowing the subsequent model to operate on the stacking form of the temporal graph without losing any information.

Denote the total number of nodes as $|V|$.
Let the augmented edge attribute tensor $\mathbf{A}\in{R}^{|V|\times|V|\times(1+d_e)}$ be a tensor  that combines the aggregated adjacency matrix of $G(t)$ (representing structural connections) with its edge attributes $e_{uv,(0:t)}\in \mathbb{R}^{d_e}$, for an arbitrary permutation of $V$. Additionally,  consider the node attribute matrix $\mathbf{X}=[x_{v,(0:t)}]_{v\in V}$ ($\in{R}^{|V|\times d_v}$) that consolidates all node attributes for the same arbitrary permutation of $V$. The \emph{joint relational pooling} model is defined as
\begin{equation}
    \mathbf{h}_{u}(t)=\bigg(\frac{1}{|V|!}\sum_{\pi\in\Pi({|V|})}\Vec{f}(\mathbf{A}_{\pi,\pi},\mathbf{X}_{\pi})\bigg)_{u}.
\end{equation}
where $\Pi({|V|})$ is the set of all distinct permutations of $V$, and $\Vec{f}$ represents the application of an arbitrary (possibly permutation-sensitive) injective function $f$ over each row of the input ($\in \mathbb{R}^{|V|\times(1+d_e)+d_v}$). When the permutation is applied to $\mathbf{A}$ and $\mathbf{X}$, resulting in $\mathbf{A}_{\pi,\pi},\mathbf{X}_{\pi}$, it permutes the first two dimensions of $\mathbf{A}$ and the first dimension (row) of $\mathbf{X}$. We refer to this model as \textit{RNN with joint Relational Pooling} (RNN-RP).
We can then establish the following proposition, analogous to Theorem 2.1 in~\cite{relationalpooling}:
\begin{proposition}
The RNN-RP model defined in the above equation is the most expressive model in terms of temporal graph isomorphism test and link prediction. With a sufficiently expressive injective function $f$ (i.e., a universal approximator), the RNN-RP model has the ability to output the groundtruth for both tasks.
\label{prop:max expressive}
\end{proposition}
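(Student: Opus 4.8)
The plan is to reduce the temporal statement to a static one and then invoke (a node-level strengthening of) the relational-pooling universality argument of \cite{relationalpooling}. First I would use \cite{siegelmann1992computational} to take the base RNN to be a \emph{lossless} sequence encoder, so that $\mathbf{h}^0_u(t)$ and $\mathbf{e}_{uv}(t)$ literally store the stacked vectors $x_{v,(0:t)}$ and $e_{uv,(0:t)}$. By Lemma~\ref{lemma2}, two temporal graphs are isomorphic if and only if their stacking forms are isomorphic as static attributed graphs, and an isomorphism of the temporal graphs is exactly the data of an isomorphism of the two stacking forms. Hence it suffices to prove the static claim: joint relational pooling over the stacking form, with a sufficiently expressive injective $f$, (i) never distinguishes isomorphic inputs, and (ii) distinguishes every non-isomorphic pair, and, for link prediction, declares two candidate links indistinguishable exactly when this is forced by the corresponding endpoints lying in the same automorphism orbits.

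The second step is soundness (no false positives). Since the outer sum ranges over the \emph{entire} symmetric group $\Pi(|V|)$, relabelling the input by a permutation only reindexes the summands, so $\mathbf{h}_u(t)$ is permutation-equivariant: if $\sigma$ is an isomorphism between the two stacking forms, then $\mathbf{h}_u$ on the first graph equals $\mathbf{h}_{\sigma(u)}$ on the second for every $u$. Thus $\sigma$ itself witnesses indistinguishability in the sense of the temporal graph isomorphism and temporal link prediction indistinguishability definitions, so RNN-RP never reports isomorphic temporal graphs as non-isomorphic and never distinguishes candidate links whose insertions yield isomorphic graphs; in contrast with PINT-pos (Proposition~\ref{prop:4}), it is permutation-invariant and cannot be a false positive.

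The third step is completeness, i.e.\ that the ground truth is realizable. Because the node and edge attributes come from a finite set and the graph is finite, the stacking form lies in a countable set on which a canonicalization map $\mathrm{canon}(\cdot)$ exists; define, on any permuted copy of the input, the node-level labelling that assigns to each node the injectively-encoded real vector $\big(|V|,\ \mathrm{canon}(\mathbf{A},\mathbf{X}),\ \text{the orbit of that node under }\mathrm{Aut}(\mathbf{A},\mathbf{X})\big)$. This labelling is a permutation-invariant function of the unordered graph, hence constant over all $|V|!$ terms of the pooling sum, so taking $f$ to be a universal approximator of it makes the average equal to it exactly as in Theorem~2.1 of \cite{relationalpooling}. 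With this $f$, $\mathbf{h}_u(t)$ encodes the isomorphism class of the stacking form together with the orbit of $u$: two temporal graphs admit an embedding-matching bijection iff their canonical forms agree iff they are isomorphic, and two candidate links are deemed indistinguishable iff their endpoints lie in matching orbits — which is the finest per-node distinction compatible with the stated definition. Moreover any permutation-invariant model's node embeddings factor through exactly this orbit/isomorphism-class information, so whatever such a model distinguishes RNN-RP distinguishes too; combined with soundness this gives maximal expressiveness in the isomorphism test, and Proposition~\ref{prop:1} together with Corollary~\ref{corollary:equal to equal} transfers it to link prediction.

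The hard part will be the third step: making precise that a permutation-\emph{sensitive}, arbitrarily expressive $f$, applied through the symmetrizing pooling sum, realizes a canonical \emph{node-level} invariant. The graph-level version is Theorem~2.1 of \cite{relationalpooling}; the additional work is the equivariant upgrade — verifying that each output coordinate $(\cdot)_u$ tracks the intended node across the $\pi$-permutation — and checking that $f$ can be chosen injective so that composing it with downstream injective layers loses nothing. The temporal-to-static reduction and the soundness argument are comparatively mechanical once Lemma~\ref{lemma2} and the Siegelmann–Sontag encoding are in place.
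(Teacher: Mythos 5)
Your proposal follows essentially the same route as the paper's proof: encode histories losslessly with the Siegelmann--Sontag RNN, reduce to the static stacking form via Lemma~\ref{lemma2}, invoke Theorem~2.1 of the relational pooling paper for maximal expressiveness, and transfer to link prediction through Proposition~\ref{prop:1} and Corollary~\ref{corollary:equal to equal}. The paper's own proof is essentially a two-sentence citation of that theorem plus the transfer step; the extra work you flag --- the explicit soundness/permutation-invariance argument, and especially the node-level (equivariant) upgrade of the graph-level RP theorem, including the question of which node the coordinate $(\cdot)_u$ tracks as $\pi$ varies in the pooling sum --- addresses real gaps that the paper leaves implicit rather than diverging from its approach.
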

\begin{proof}
Based on Theorem 2.1 in~\cite{relationalpooling}, we know that joint Relational Pooling (RP) is the most expressive model in static graph isomorphism tests and can always output the groundtruth. In our setting, the RNN-RP model, which incorporates joint RP operating on the stacking form of temporal graphs, will also be the most expressive model in temporal graph isomorphism tests, consistently outputting the groundtruth.

Based on Proposition~\ref{prop:1} and Corollary~\ref{corollary:equal to equal}, we can conclude that the RNN-RP model will also be more expressive than any other model in the context of link prediction.
\end{proof}
It is important to note that the embedding generation of the RNN-RP model is permutation-invariant due to its enumeration of all possible permutations and summation of the results. However, it should be acknowledged that RNN-RP may not be practical in real-world scenarios, unless specific symmetries exist, as mentioned in \cite{relationalpooling}. The computational complexity of RNN-RP is intractable, making it challenging to apply in practice.
Nevertheless, RNN-RP serves as a conceptual model that provides an upper bound for the complexity of the most expressive models. Further research and studies may focus on reducing this upper bound of complexity for developing more practical and efficient models while still maintaining high expressiveness.

\newpage

\newpage
\section{Appendix: More Details of Experiment Settings}
\label{Appendix: Details of Experiment Settings}

\subsection{Experiment Settings and Baselines Configuration}

We preprocess all datasets following~\cite{tgn,pint}.  
For dataset preparation, we employ a chronological split of the data, dividing it into training, validation, and test sets in a ratio of $70\%/15\%/15\%$. Additionally, we randomly select $10\%$ nodes and mask them out from the training and validation sets. All models are trained and validated using the masked training and validation sets. Transductive evaluation is performed using the masked test set, while inductive evaluation involves the $10\%$ mask-outed nodes. 
It is important to note that this approach differs slightly from CAWN and NAT, as they retrain the model for the transductive setting without masking any node.
Regarding negative edge sampling for inductive evaluation, we randomly sample negative target nodes from the $10\%$ mask-outed nodes, rather than randomly sampling nodes from the entire dataset as done in CAWN and NAT. 
The choice is motivated by the goal of preventing the model from learning unintended shortcuts to distinguish between masked nodes (newly initialized) and unmasked nodes (well-trained), as such shortcuts would result in a test metric that lacks meaningful interpretation.
During validation and testing, we allowed the update of memories on the fly but did not permit model training, which involves updating the model parameters.

We utilize existing baseline implementations obtained online and ensure fair comparison by calibrating them with the same configuration. In all models, we set the dimensions of the hidden state, memory, and time encoding to 172. Similarly, our model also has a dimension of 172 for revision. For methods involving neighborhood sampling, we used a default value of $d=10$ for the neighborhood size, unless stated otherwise. For models employing multi-head attention, we set the number of heads to 2. 

In our experiments, we conducted each experiment using a Tesla V100 GPU with 16 GB of memory. However, we encountered an error when using the GPU with the CAWN implementation available online (\url{https://github.com/snap-stanford/CAW}). Therefore, we performed the experiments of CAWN using the CPU instead. 

We did not compare our results to PINT-pos due to its extremely high computational complexity and space requirements. The public implementation of PINT-pos requires pre-computation of all positional features, which results in a space requirement proportional to $N_B*|V|*|V|*d_f$ where $|V|$ is the total number of nodes, $N_B$ is the total number of batches, $d_f$ is the positional feature dimension. Even for the smallest dataset UCI, the positional feature generation alone took more than 6 hours. Furthermore, for the larger datasets, even setting a very small positional feature dimension $d_f=4$ would lead to out of memory (OOM) errors on our GPU. Hence, it was not feasible to include PINT-pos in our comparisons.

Due to the high space requirement, we made a modification in the GRU-GCN~\cite{grugcn}, not applying the edge RNN. Instead, we used the same base RNN as our RTRGN model (see Appendix~\ref{Appendix:A}). This modification ensures that the experiments could be conducted within the available computational resources. 

\subsection{Dataset Summaries and New Datasets We Introduced}
\begin{table}[ht]
\caption{Link prediction datasets statistics.}
\scalebox{0.8}{
\begin{tabular}{c|cccccccc}
\hline
&              MovieLens     & Wikipedia     & Reddit        & SocialE.-1m   & SocialE.    & UCI-FSN       & Ubuntu        & Ecommerce        \\ \hline
 Nodes              & 2,625 & 9,227 & 10,985 & 71 & 74 & 1,899 & 159,316 & 35,571 \\
 Temporal links              & 100,000 & 157,474 & 672,447 & 176,090 & 2,099,519 &  59,835 & 964,437 & 414,812 \\
 Attributes            & 0 \& 0  & 172 \& 172& 172 \& 172 & 0 \& 0 & 0 \& 0 & 0 \& 0 & 0 \& 0 & 0 \& 0 \\
 Average Degree              &38.1 & 17.1 & 61.2 & 2480.1 & 28371.9 & 31.5 & 6.1 & 11.7 
 \\
 Bipartite              & True & True & True & False & False & False & True & True  \\
\hline
\end{tabular}}
\label{tab:datasets}
\end{table}

\textbf{Benchmark datasets}. We transformed the node and edge features of the Wikipedia and Reddit datasets into 172-dimensional feature vectors. However, since the other datasets are non-attributed, their features were set to zero. Table~\ref{tab:datasets} provides a summary of the benchmark datasets used in our experiments.

Among the benchmark datasets commonly used in temporal graph modeling are Wikipedia, Reddit, SocialE-1m, SocialE, UCI-FSN (often referred to as UCI in previous works \cite{nat,pint}), and Ubuntu. Detailed descriptions of these datasets are already given by previous works (e.g. can be found in Appendix B of \cite{nat}) thus is neglected here for simplicity. It is worth noting that the downloadable links for UCI-FSN are often misgiven, and the correct data source can be found in the Facebook-like Social Network dataset\footnote{\url{https://toreopsahl.com/datasets/\#online_forum_network}.}.

The \textbf{MovieLens} datasets\footnote{\url{https://grouplens.org/datasets/movielens/100k/}.} were collected by the GroupLens Research Project at the University of Minnesota. The dataset contains 100,000 ratings (ranging from 1 to 5) given by 943 users to 1,682 movies. In our study, we consider these ratings as links between the users and the corresponding movies while the score values are not used. 

The \textbf{Ecommerce} dataset is a collection of user behavior data from an online Ecommerce platform. The data was collected over a period of one month, starting from February 2023. The dataset focuses on hot users and hot items. Specifically, users with more than 50 interactions were randomly sampled, and items that had at least 10 interactions with the sampled set of users were retained. As a result, the dataset consists of approximately 11,920 users and 23,651 items. Each link in the dataset represents a user action, such as a click or purchase.

\textbf{Oscillating CSL}. It is an extension of the circular skip link (CSL) graph, which is a widely used benchmark for evaluating the expressiveness of static graph neural networks (GNNs). It is illustrated in Figure~\ref{fig:OscillatingCSL} and detailed discussed in the main paper. In our experiments, we sample prototypes of CSL from $\{\mathbf{C}_{11,2},\ldots,\mathbf{C}_{11,5}\}$. Note that the Oscillating CSL dataset also differs fundamentally from the Dynamic CSL task proposed in \cite{grugcn}. 
In the Oscillating CSL dataset, the stacking form of the temporal graphs is indistinguishable by the Temporal-1WL test. By contrast, in the Dynamic CSL dataset, the stacking form of the temporal graphs can be distinguished by the Temporal-1WL test, which makes it much easier. 

\textbf{Cuneiform Writing}. Cuneiform graph dataset\footnote{\url{ https://git.litislab.fr/bgauzere/py-graph/-/tree/e6c4229b23c82a0b27d6b2ecdab1efa420bdac6a/datasets/Cuneiform}.} described in~\cite{cuneiform} is a publicly available dataset comprising 267 graphs. These graphs represent 29 different Hittite cuneiform signs. On average, the graphs have 21.3 nodes and an average degree of 44.8.
Initially, the dataset contains graphs with features associated with the nodes and edges. However, for the purpose of this study, the attributes and features were removed, otherwise there will be no negative examples, i.e. isomorphic graphs. This results in graphs that represent 6 types of non-isomorphic static graphs.
It is worth noting that the original order of the edges in the dataset is identical for any two graphs with the same number of nodes. Thus the original ordering does not give rise to additional non-isomorphic temporal graphs. 

The Cuneiform dataset consists of two types of edges: the wedge and the arrangement. The wedge edges always create a fully-connected tetrahedron structure, while the arrangement edges establish dense connections between the center nodes of each tetrahedron. 
To achieve a balance between the number of isomorphic and non-isomorphic graph pairs: most of the densely connected arrangement edges were removed, retaining only edges corresponding to short spatial distances, which can be computed from the given node attributes representing their spatial location; each center node is limited to a maximum of two arrangement edges. This modification increases the number of non-isomorphic static graphs to 19. 

To convert they into temporal graphs, the edges were generated in a sequential order. Initially, all arrangement edges are added simultaneously at time $t=0$, followed by the addition of wedge edges in groups corresponding to the tetrahedrons at subsequent time steps ($t=1,2,\ldots$). 


\newpage

\section{Appendix: More Experiment Results}
\label{appendix: Experiment Results}

\subsection{AUC of Temporal Link Prediction}
We present the AUC metric results for the temporal link prediction tasks conducted in the main paper. The results are shown in Table \ref{tab:main auc} and \ref{tab:main 2 layer auc}, which exhibit a consistent trend similar to the AP metric.
\vspace{0mm}
\begin{table}[ht]
\centering
\caption{AUC ($\%$) for temporal link prediction tasks. Reported results are averaged over 10 runs. }
\scalebox{0.68}{
\begin{tabular}{c|c|ccccccccc}
\hline
& Model              & MovieLens     & Wikipedia     & Reddit        & SocialE.-1m   & SocialE.      & UCI-FSN       & Ubuntu        & Ecommerce \\ \hline
\parbox[t]{2mm}{\multirow{7}{*}{\rotatebox[origin=c]{90}{Transductive}}} 
& Dyrep              & $80.54\pm0.3$ & $95.34\pm0.2$ & $97.93\pm0.2$ & $83.91\pm0.7$ & $91.25\pm0.1$ & $60.20\pm2.2$ & $84.82\pm0.2$ & $61.59\pm0.3$ \\
& Jodie              & $80.97\pm0.3$ & $96.05\pm0.4$ & $97.63\pm0.1$ & $77.48\pm2.0$ & $81.00\pm0.4$ & $89.09\pm1.0$ & $91.08\pm0.2$ & $66.80\pm0.5$ \\
& TGAT               & $72.01\pm0.3$ & $96.53\pm0.1$ & $98.58\pm0.1$ & $51.32\pm0.5$ & $50.08\pm0.1$ & $76.27\pm0.5$ & $79.40\pm0.3$ & $60.99\pm0.5$ \\
& TGN           & $86.00\pm0.5$ & $97.28\pm0.2$ & $98.24\pm0.2$ & $91.73\pm0.4$ & $92.73\pm1.4$ & $85.47\pm0.3$ & $88.60\pm0.2$ & $80.03\pm0.5$ \\
& CAWN               & $80.58\pm0.4$ & $97.96\pm0.2$ & $97.37\pm0.2$ & $84.27\pm0.4$ & $84.24\pm0.6$ & $86.76\pm0.3$ & -             & $80.07\pm0.1$ \\
& NAT                & $78.85\pm2.0$ & $97.99\pm0.2$ & $98.58\pm0.2$ & $87.32\pm0.6$ & $92.90\pm0.9$ & $91.63\pm0.7$ & $88.98\pm0.5$ & $76.24\pm0.4$ \\
& GRU-GCN            & $86.77\pm0.1$ & $96.17\pm0.1$ & $97.95\pm0.3$ & $88.63\pm0.7$ & $87.86\pm1.8$ & $90.79\pm0.4$ & $86.63\pm0.1$ & $78.05\pm0.3$ \\
& PINT               & $85.65\pm0.6$ & $98.14\pm0.1$ & $98.40\pm0.1$ & $86.52\pm1.4$ & $88.93\pm2.4$ & $90.84\pm0.3$ & $89.10\pm0.2$ & $78.65\pm0.1$ \\
& RTRGN (ours)   & \textcolor{violet}{$\mathbf{88.00\pm0.2}$} 
                                     & \textcolor{violet}{$\mathbf{98.48\pm0.2}$} 
                                                     & \textcolor{violet}{$\mathbf{98.92\pm0.2}$} 
                                                                     & \textcolor{violet}{$\mathbf{93.90\pm0.3}$} & \textcolor{violet}{$\mathbf{95.92\pm0.2}$} & \textcolor{violet}{$\mathbf{95.98\pm0.1}$} 
                                                                                                                     & \textcolor{violet}{$\mathbf{96.18\pm0.1}$}
                                                                                                                                     & \textcolor{violet}{$\mathbf{86.05\pm0.4}$} \\ \hline
\parbox[t]{2mm}{\multirow{7}{*}{\rotatebox[origin=c]{90}{Inductive}}}   
& Dyrep              & $76.64\pm0.3$ & $93.97\pm0.2$ & $96.86\pm0.3$ & $76.48\pm1.9$ & $89.13\pm1.0$ & $52.11\pm1.3$ & $67.12\pm0.3$ & $53.29\pm0.6$ \\
& Jodie              & $76.92\pm0.2$ & $95.16\pm0.5$ & $96.31\pm0.2$ & $79.50\pm1.5$ & $85.59\pm0.4$ & $73.91\pm1.2$ & $80.78\pm0.3$ & $58.03\pm0.4$ \\
& TGAT               & $71.59\pm0.8$ & $97.52\pm0.2$ & $96.47\pm0.1$ & $51.62\pm0.6$ & $50.09\pm0.1$ & $69.90\pm0.7$ & $76.71\pm0.3$ & $58.88\pm0.9$ \\
& TGN           & $83.88\pm0.5$ & $96.78\pm0.2$ & $96.69\pm0.2$ & $91.40\pm0.3$ & $91.72\pm1.2$ & $79.29\pm0.2$ & $77.83\pm0.3$ & $77.82\pm0.5$ \\
& CAWN               & $74.86\pm0.5$ & $97.33\pm0.2$ & $96.81\pm0.2$ & $75.48\pm0.4$ & $82.34\pm0.2$ & $87.06\pm0.3$ & -             & $77.76\pm0.2$  \\
& NAT                & $79.66\pm0.5$ & $97.44\pm0.5$ & $97.10\pm0.9$ & $86.63\pm1.5$ & $89.23\pm1.7$ & $85.36\pm0.3$ & $76.71\pm0.6$ & $69.12\pm0.2$ \\
& GRU-GCN            & $84.85\pm0.2$ & $93.25\pm0.2$ & $96.17\pm0.3$ & $87.52\pm0.9$ & $84.05\pm4.0$ & $85.37\pm0.6$ & $71.85\pm0.6$ & $76.01\pm0.2$ \\
& PINT               & $83.45\pm0.5$ & $97.58\pm0.1$ & $97.68\pm0.1$ & $78.18\pm4.1$ & $78.92\pm2.1$ & $87.44\pm0.3$ & $84.74\pm0.2$ & $74.83\pm0.2$ \\
& RTRGN (ours)   & \textcolor{violet}{$\mathbf{86.43\pm0.2}$} & \textcolor{violet}{$\mathbf{98.05\pm0.2}$} 
                                                     & \textcolor{violet}{$\mathbf{98.00\pm0.1}$} & \textcolor{violet}{$\mathbf{93.74\pm0.4}$} 
                                                                     & \textcolor{violet}{$\mathbf{94.81\pm0.5}$} & \textcolor{violet}{$\mathbf{93.09\pm0.1}$} 
                                                                                                                     & \textcolor{violet}{$\mathbf{91.19\pm0.2}$} 
                                                                                                                                     & \textcolor{violet}{$\mathbf{84.37\pm0.5}$} \\ \hline
\end{tabular}}
\label{tab:main auc}
\end{table}
\vspace{-5mm}
\begin{table}[ht]
\centering
\caption{AUC ($\%$) for temporal link prediction tasks performed on 2-layer models. \textcolor{violet}{Best} performing methods are highlighted. Reported results are averaged over 10 runs.}
\scalebox{0.7}{
\begin{tabular}{c|c|cccc}
\hline
AUC & Model          & MovieLens     & Wikipedia     & UCI-FSN       & Ecommerce  \\ \hline
\parbox[t]{2mm}{\multirow{4}{*}{\rotatebox[origin=c]{90}{Trans}}} 
& TGN           & $86.72\pm0.5$ & $98.25\pm0.2$ & $85.54\pm0.3$ & $82.52\pm0.2$ \\
& NAT                & $82.63\pm0.9$ & $97.90\pm0.2$ & $91.61\pm0.3$ & $77.98\pm0.5$ \\
& PINT               & $86.09\pm0.2$ & $98.00\pm0.1$ & $91.68\pm0.3$ & $79.61\pm0.7$ \\
& RTRGN (ours)        & \textcolor{violet}{$\mathbf{96.13\pm0.2}$}
                                     & \textcolor{violet}{$\mathbf{98.56\pm0.2}$} 
                                                     & \textcolor{violet}{$\mathbf{98.21\pm0.1}$} 
                                                                     & \textcolor{violet}{$\mathbf{93.91\pm0.1}$} \\ \hline
\parbox[t]{2mm}{\multirow{4}{*}{\rotatebox[origin=c]{90}{Ind}}}
& TGN           & $85.08\pm0.5$ & $97.68\pm0.2$ & $80.34\pm0.3$ & $80.98\pm0.2$ \\
& NAT                & $72.26\pm1.5$ & $97.41\pm0.2$ & $85.41\pm1.0$ & $72.49\pm2.4$ \\
& PINT               & $83.73\pm0.4$ & $96.56\pm0.2$ & $88.61\pm0.4$ & $77.01\pm0.3$ \\
& RTRGN (ours)        & \textcolor{violet}{$\mathbf{94.89\pm0.3}$} 
                                     & \textcolor{violet}{$\mathbf{98.03\pm0.5}$}
                                                     & \textcolor{violet}{$\mathbf{95.95\pm0.1}$} 
                                                                     & \textcolor{violet}{$\mathbf{91.48\pm0.1}$} \\ \hline
\end{tabular}}
\label{tab:main 2 layer auc}
\end{table}

\subsection{Ablation Studies on Sampling (the number of most recent neighbors $d$)}
\begin{figure}[ht]
    \centering
    \includegraphics[height=36mm]{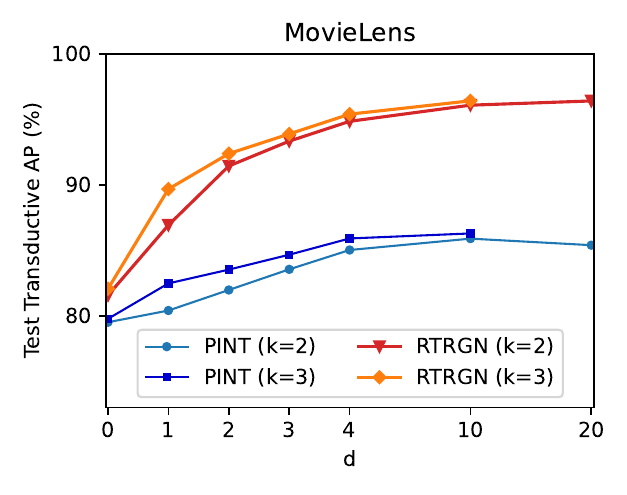}
    \includegraphics[height=36mm]{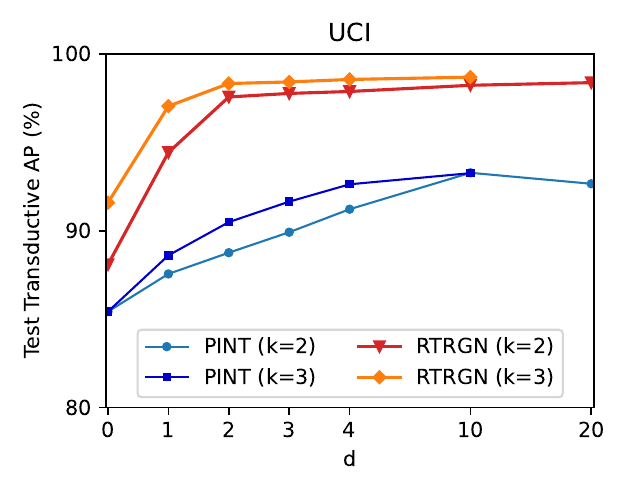}
    \includegraphics[height=36mm]{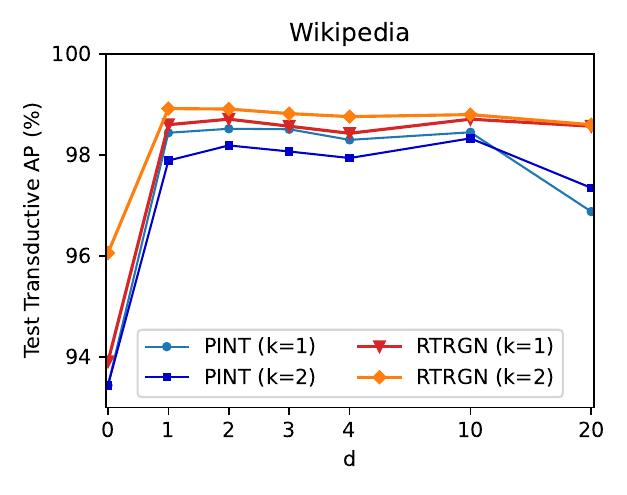}
    \caption{Mean test transductive AP ($\%$) for temporal link prediction tasks performed on three datasets w.r.t. varying $d$. Reported results are averaged over 10 runs. Here, $d=0$ represents the model consisting of pure RNN, without further considering any recent neighbors in the aggregation process.}
    \label{fig:d ablation}
\end{figure}
We conducted an ablation study on the sampling, the number of most recent neighbors $d$, and present the results\footnote{The results for $d=20$ are omitted for $k=3$ due to potential memory constraints. In future work, we plan to handle the memory issue by removing repeated computations of neighbors. For the Wikipedia dataset, we consider models with up to 2 layers for efficiency purposes.} in Figure~\ref{fig:d ablation}. The optimal value of $d$ varies for different datasets and exhibits different trends with varying $d$. 

For MovieLens, increasing $d$ consistently improves performance. This can be attributed to two factors: (1) Users in MovieLens only interact with a movie once, so including more neighbors provide unbiased and comprehensive information. (2) The average degree in MovieLens is higher compared to the other datasets. 

In UCI, the performance quickly converges for RTRGN as $d$ increases. This is because in such a social network dataset, nodes constantly interact with the same neighbors, making a smaller $d$ sufficient for representation.

In Wikipedia, we observe a performance drop as $d$ increases. This may be due to a longer length of recent interactions that introduces harmful patterns for the model. Additionally, it is worth noting that PINT exhibits a performance drop when transitioning from $k=1$ to $k=2$ (similarly observed for TGN but not presented here). Such observation, together with further ablation studies on Wikipedia reveals its high sensitivity to sequence information: GNN-like aggregations practically cannot effectively capture the time and sequence information, while sequential state updaters can more effectively capture such information, resulting in better performance (Appendix~\ref{appendix:blurring}). 

We also noticed a consistent drop in performance for PINT as $d$ increases from 10 to 20. Similar observations were found in TGN but are not shown here. This suggests that the baseline models may struggle to model long sequences of the most recent neighbors. In comparison, RTRGN only exhibits a marginal performance drop when increasing $d$ on Wikipedia. This can be attributed to the properties of the Wikipedia dataset itself rather than a drawback in algorithm design. The robustness of RTRGN to changes in $d$ can be attributed to its formulation, which includes hidden states and sequential updates that more effectively maintain sequential information of the events (Appendix~\ref{appendix:blurring}). 

\subsection{More Ablation Studies Results}
\begin{figure}[ht]
    \centering
    \vspace{-3mm}
    \includegraphics[width=140mm]{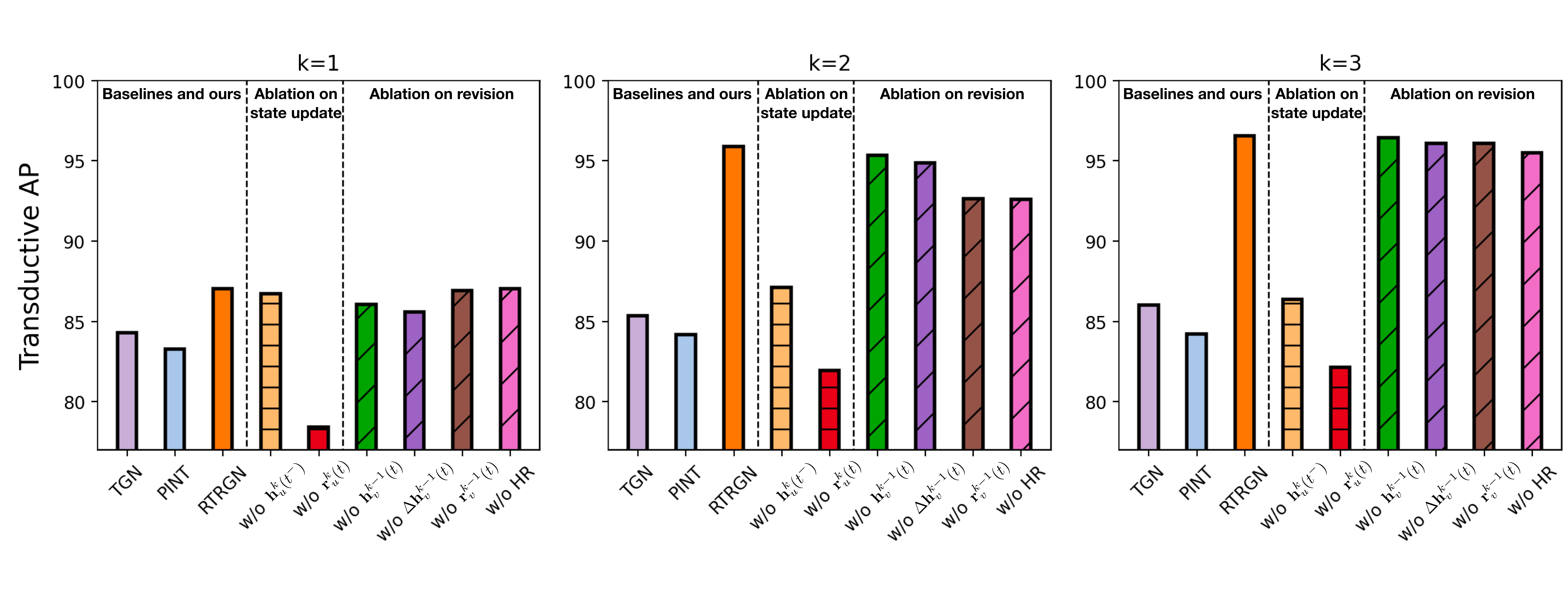}
    \vspace{-10mm}
    \caption{Transductive AP (\%) on MovieLens dataset. Results averaged over 10 runs.}
    \label{fig:main ablation movie trans ap}
\end{figure} 
\begin{figure}[ht]
    \centering
    \vspace{-5mm}
    \includegraphics[width=140mm]{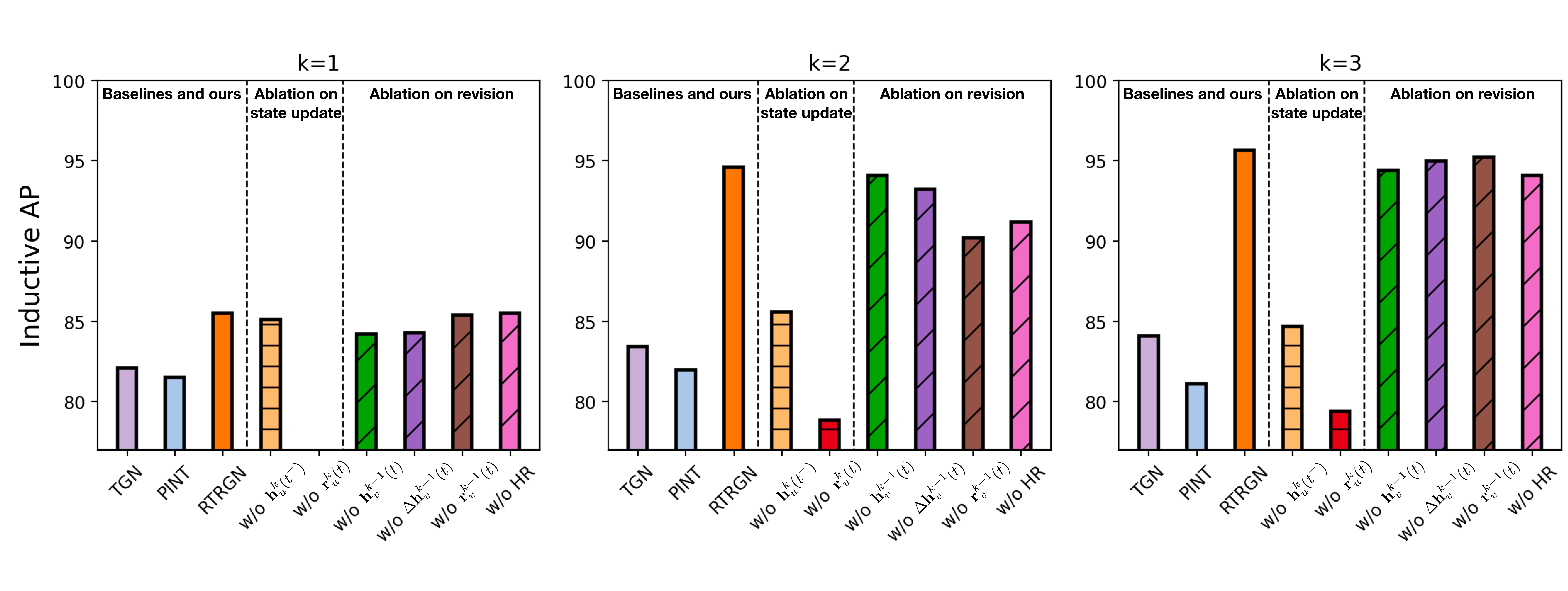}
    \vspace{-10mm}
    \caption{Inductive AP (\%) on MovieLens dataset. Results averaged over 10 runs.}
    \label{fig:main ablation movie ind ap}
\end{figure} 
\begin{figure}[ht]
    \centering
    \vspace{-5mm}
    \includegraphics[width=140mm]{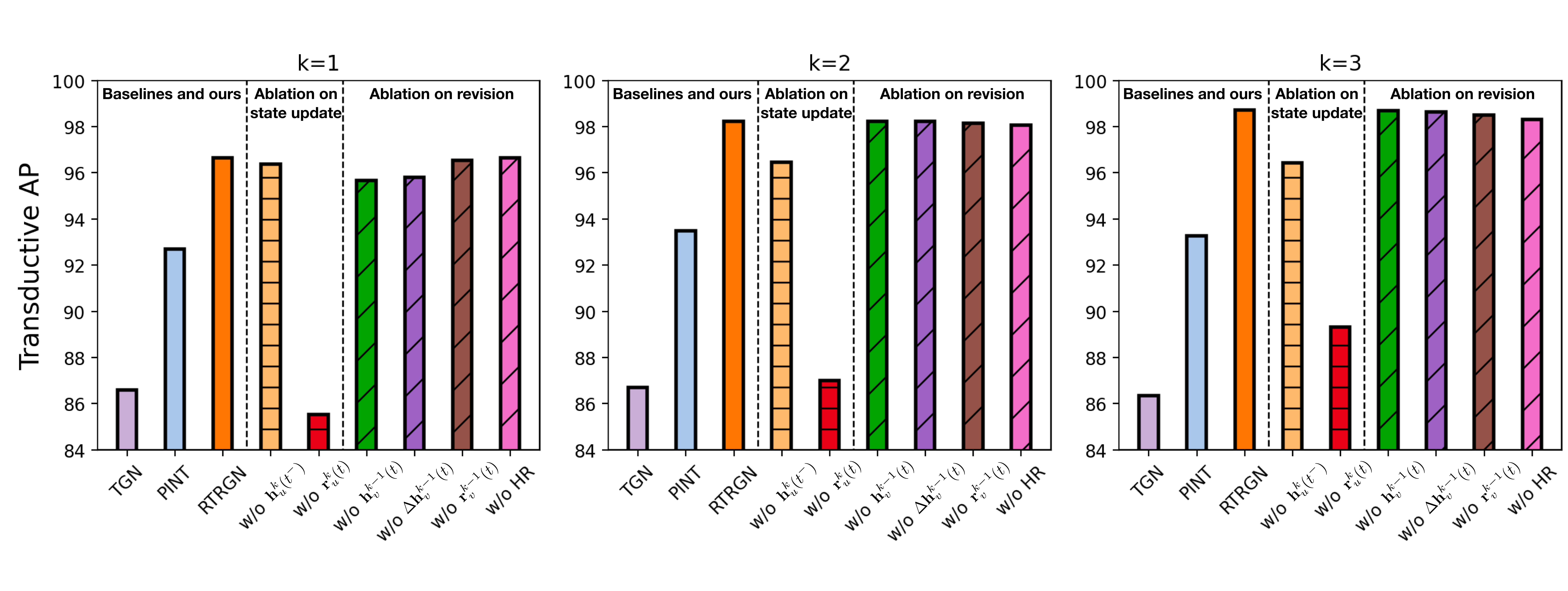}
    \vspace{-10mm}
    \caption{Transductive AP (\%) on UCI dataset. Results averaged over 10 runs.}
    \label{fig:main ablation uci trans ap}
\end{figure} 

The ablation study settings we considered in the main paper includes:

\begin{itemize}
    \item w/o $\mathbf{h}_{u}^{k}(t^{-})$: where we set $\mathbf{h}_{u}^{k}(t^{-})=\mathbf{0}$ in \eqref{equ:TRGN_recursion} for $k>0$, hence do not retain the hidden state but compute aggregation dynamically. Besides setting it to zero, we also tried use another design as $\mathbf{h}_{u}^{k}(t)=\textsf{MLP}^{k}\big(\mathbf{h}_{u}^{k-1}(t),\mathbf{h}_{v}^{k-1}(t),\mathbf{r}_{u}^{k}(t),\Phi(e_{uv,t})\big)$ to exclude the higher-level state cache, for $k>0$. 
    Both settings gives highly consistent results.
    \item w/o $\mathbf{r}_{u}^{k}(t)$: where we set $\mathbf{r}_{u}^{k}(t)=\mathbf{0}$ in \eqref{msg}, which leads to a pure RNN design without neighbor aggregation. Such an RNN can have multiple layer though.
    \item w/o $\mathbf{h}_{v}^{k-1}(t)$: we remove $\mathbf{h}_{v}^{k-1}(t)$ in \eqref{equ:revision_recursion}.
    \item w/o $\Delta\mathbf{h}_{v}^{k-1}(t)$: we remove $\Delta\mathbf{h}_{v}^{k-1}(t)$ in \eqref{equ:revision_recursion}.
    \item w/o $\mathbf{r}_{v}^{k-1}(t)$: we remove $\mathbf{r}_{v}^{k-1}(t)$ in \eqref{equ:revision_recursion}.
    \item w/o HR: we use the homogeneous version of revision computation \eqref{equ:revision_recursion no H} instead of the heterogeneous version \eqref{equ:revision_recursion}.
\end{itemize}

As observed in Figure \ref{fig:main ablation movie trans ap} and Figure \ref{fig:main ablation movie ind ap}, the trends of transductive AP and inductive AP are highly consistent with the transductive AUC results shown in Figure \ref{fig:main ablation movie} of the main paper. Similar ablation studies were also conducted on the UCI dataset, and the results are presented in Figure \ref{fig:main ablation uci trans ap}.

The findings of the ablation studies support the conclusions drawn in the main paper, which can be summarized as follows:
(1) The significant improvement observed when $k > 1$ is mainly due to the inclusion of higher-level {hidden} states; (2) Integrating neighbor information, including revision, in the message plays a critical role in improving performance; (3) The effectiveness of revision relies primarily on the state changes of neighbors, rather than the neighbor states themselves; (4) The results highlight the benefits of incorporating neighbor revisions in the temporal revision module and the introduction of heterogeneity in the revision process. 

\subsection{Extended Ablation Studies on the Model Design}
\begin{figure}[ht]
    \centering
    \vspace{-3mm}
    \includegraphics[width=140mm]{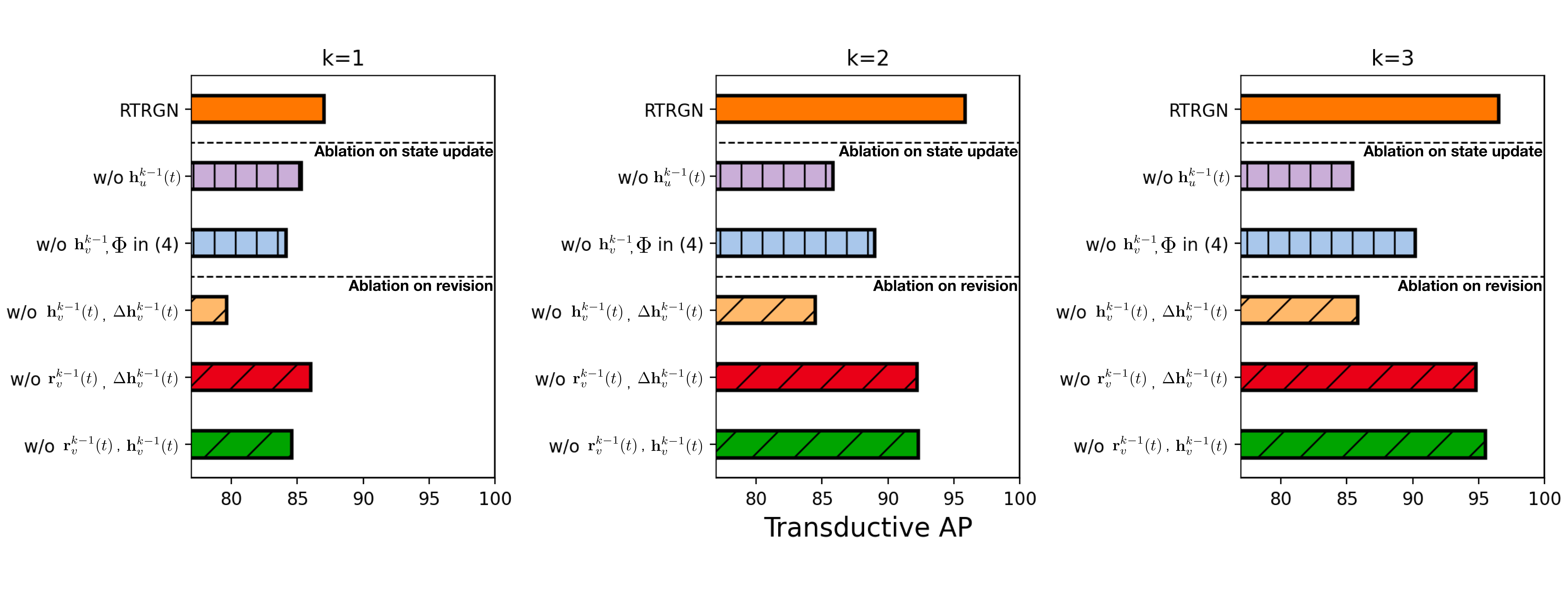}
    \vspace{-10mm}
    \caption{Transductive AP (\%) on MovieLens dataset. Results averaged over 10 runs.}
    \label{fig:more ablation movie trans ap}
\end{figure} 
\begin{figure}[ht]
    \centering
    \vspace{-3mm}
    \includegraphics[width=140mm]{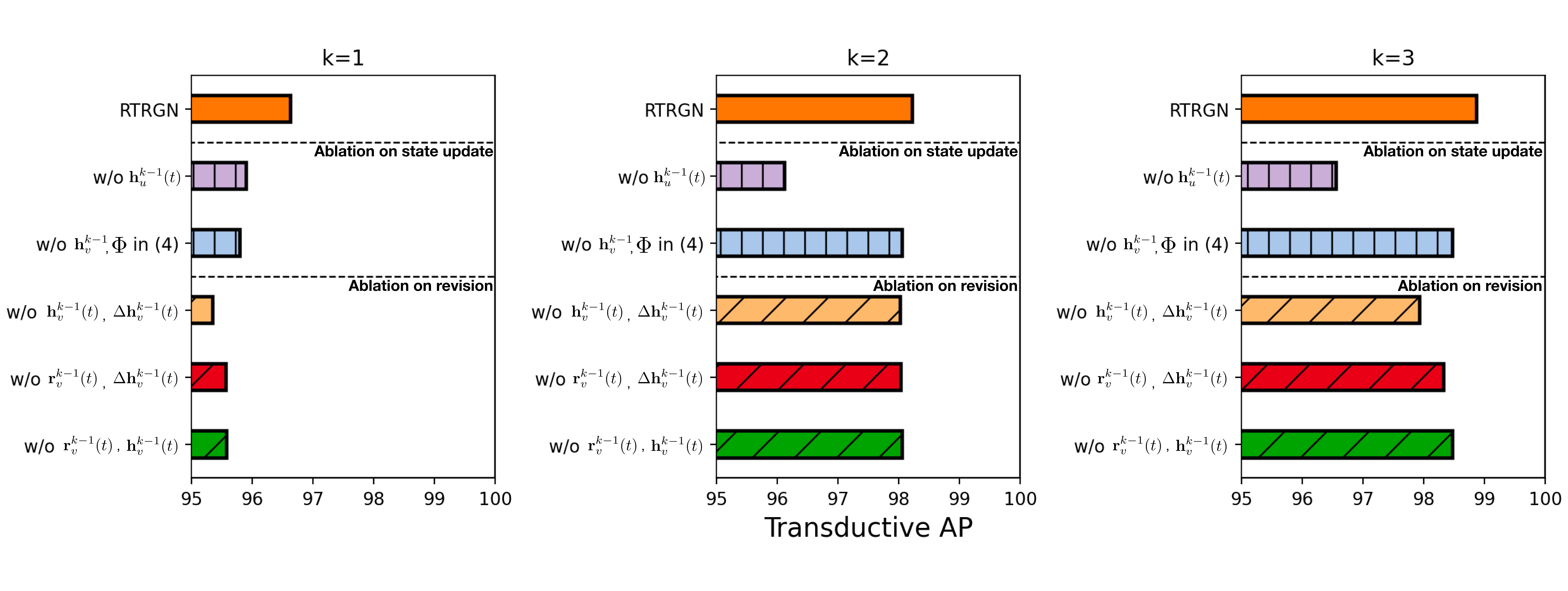}
    \vspace{-10mm}
    \caption{Transductive AP (\%) on UCI dataset. Results averaged over 10 runs.}
    \label{fig:more ablation uci trans ap}
\end{figure} 

We further examine the following settings as additional ablation studies:
\begin{itemize}
    \item w/o $\mathbf{h}_{u}^{k-1}(t)$: we remove $\mathbf{h}_{u}^{k-1}(t)$ in \eqref{msg}.
    \item w/o $\mathbf{h}_{v}^{k-1}(t)$ and $\Phi(e_{uv,t})$ in \eqref{msg}: we remove $\mathbf{h}_{v}^{k-1}(t)$ and $\Phi(e_{uv,t})$ in \eqref{msg}.
    \item w/o $\mathbf{h}_{v}^{k-1}(t)$ and $\Delta\mathbf{h}_{v}^{k-1}(t)$: we remove both $\mathbf{h}_{v}^{k-1}(t)$ and $\Delta\mathbf{h}_{v}^{k-1}(t)$ in \eqref{equ:revision_recursion}, and set the base case to $\mathbf{r}_{u}^{0}(t)=\Delta\mathbf{h}_{u}^{0}(t)$.
    \item w/o $\mathbf{r}_{v}^{k-1}(t)$ and $\Delta\mathbf{h}_{v}^{k-1}(t)$: we remove both $\mathbf{r}_{v}^{k-1}(t)$ and $\Delta\mathbf{h}_{v}^{k-1}(t)$ in \eqref{equ:revision_recursion}.
    \item w/o $\mathbf{r}_{v}^{k-1}(t)$ and $\mathbf{h}_{v}^{k-1}(t)$: we remove both $\mathbf{r}_{v}^{k-1}(t)$ and $\mathbf{h}_{v}^{k-1}(t)$ in \eqref{equ:revision_recursion}.
\end{itemize}
The results are presented in Figure \ref{fig:more ablation movie trans ap} and Figure \ref{fig:more ablation uci trans ap}.

\textbf{More ablation studies on the state update:} 


Firstly, we find that the link to the lower-level state $\mathbf{h}_{u}^{k-1}(t)$ is also critical. Removing this link leads to a significant drop in performance.

Secondly, when removing the latest event from the message, i.e., excluding $\mathbf{h}_{v}^{k-1}(t)$ in \eqref{msg}, we observe a significant drop in performance on MovieLens, suggesting its importance. A similar phenomenon is observed in the Wikipedia dataset, as we will demonstrate later. 
However, in UCI, the drop is less pronounced. This is likely because nodes in the UCI dataset tend to have more consistent interactions with the same neighbors, making them more robust to the removal of just one most recent neighbor.

\textbf{More ablation studies on the revision:} 

Compared to the previous ablation studies settings on the revision, we now remove two terms together and only keep one term in the computation of revision in \eqref{equ:revision_recursion}. 

When we remove both  $\mathbf{h}_{v}^{k-1}(t)$ and $\Delta\mathbf{h}_{v}^{k-1}(t)$ terms from the computation of revision in \eqref{equ:revision_recursion}, the performance can drop significantly. This is because the recursion becomes independent of the high level neighbor states and their state changes. 
This suggests that we should maintain at least one of the terms in  $\mathbf{h}_{v}^{k-1}(t)$ and $\Delta\mathbf{h}_{v}^{k-1}(t)$. 

In the other two settings (w/o $\mathbf{r}_{v}^{k-1}(t)$ and $\Delta\mathbf{h}_{v}^{k-1}(t)$, and w/o $\mathbf{r}_{v}^{k-1}(t)$ and $\mathbf{h}_{v}^{k-1}(t)$), the performance difference indicates that $\Delta\mathbf{h}_{v}^{k-1}(t)$ may be more important than $\mathbf{h}_{v}^{k-1}(t)$ in our RTRGN model. This finding is generally consistent across the previous results (Figure \ref{fig:main ablation movie trans ap}, Figure \ref{fig:main ablation movie ind ap}, Figure \ref{fig:main ablation movie}, and Figure \ref{fig:main ablation uci trans ap}). Additionally, when compared with previous results, we observe that including a self-recursion $\mathbf{r}_{v}^{k-1}(t)$ always benefits the performance.

Notably, the setting, w/o $\mathbf{r}_{v}^{k-1}(t)$ and $\Delta\mathbf{h}_{v}^{k-1}(t)$, is the closest approximation of the classic GNN-like temporal neighbor aggregation module within our revision module. Despite this, it still largely outperforms the baselines, including TGN and PINT. This once again suggests the advantage of incorporating the hidden states, even in classic GNN-like temporal neighbor aggregation modules, different from our carefully designed revision module.

\subsection{Ablation Studies: Sequential Updates Better Captures Time Series Information}
\label{appendix:blurring}

The potential issue with GNN aggregation is that it may not be able to capture temporal and sequential information well in its generated embeddings. Temporal GNNs typically operate on graphs where only the node/edge attributes encode time and sequence, but the model structure itself is not aware of time. Though a theoretically perfect GNN can potentially capture temporal information effectively, it remains unclear whether existing practical implementations have limitations in this regard. That is, the practical ability of GNNs to utilize temporal information is uncertain. In cases where temporal and sequential information is critical but not adequately modeled, the performance may be compromised.

To investigate the potential issue of GNNs not effectively capturing time series information and assessing the robustness of our RTRGN in such scenarios, we created a subsampled Wikipedia dataset. This dataset only includes Wikipedia entries that exhibit a clear "switch of editor" pattern: where for each of such entries there is more than one user, each interaction only for a short period rather than only having a single user throughout their entire interaction history. The resulting dataset, which contains 50,132 interactions and 5,616 unique nodes, represents approximately one-third of the total interactions and $60\%$ of the total nodes in the original Wikipedia dataset.\footnote{We will make the subsampled Wikipedia dataset publicly available alongside our code.} We chose Wikipedia for this study because our ablation studies on the parameter $d$ (Figure \ref{fig:d ablation}) suggested that the issue of GNNs not effectively capturing time series information might be particularly pronounced in the Wikipedia dataset, as increasing $d$ consistently resulted in a decline in performance. 

We hypothesize that in the subsampled Wikipedia dataset, where temporal and sequential information plays a crucial role, GNN aggregation may struggle to effectively capture these dynamics. The results of our experiments are presented in Table~\ref{tab:gnn_blurring}, which verifies our conjecture. It is evident that the inclusion of the GNN module ($k=1,2$) significantly impairs the performance of the baseline models ($k=0$ which does not involve graph operations). Note that our base RNN implementation outperforms the implementation of baselines  (such as the base RNN in TGN or PINT), so we also adjusted the baselines TGN or PINT to use the same base RNN as ours (Appendix~\ref{Appendix:A}) for a fair comparison. Even with this adjustment, the drop in performance is still evident, indicating a consistent trend. PINT shows slightly better performance than TGN, which could be attributed to its more advanced time-injective aggregation and update mechanisms.

RTRGN is the only method that outperforms its vanilla base RNN ($k=0$), indicating the effectiveness of our RNN framework. 
The results may drop with increasing $k$ if the hidden state is not included (denoted as w/o $\mathbf{h}_{u}^{k}(t^{-})$ for $k\geq1$). In this case, the result for $k\geq1$ is not computed from a state transition function on the hidden state, but rather computed on the fly. On the contrary, when including the hidden state, the sequential update formulation effectively captures these dynamics, leading to consistent better performance with increasing $k$. Further ablations which removes the event information in message (w/o $\mathbf{h}_{v}^{k-1}(t)$ and $\Phi(e_{uv,t})$ for $k\geq1$ in \eqref{msg}) verifies its effectiveness in the performance. 

Additionally, it is surprising that our RTRGN still benefits from its revision module. This suggests that it, together with the hidden states, is able to effectively capture time and sequence information. 

\begin{table}[ht]
\centering
\caption{Transductive AP / inductive AP on subsampled Wikipedia. The results better than vanilla base RNN ($k=0$) are bold.}
\scalebox{0.75}{
\begin{tabular}{l|c|c|c}
\hline
Subsampled Wikipedia                                    & $k=0$ (Only base RNN)  & $k=1$                  & $k=2$                         \\ \hline
TGN                                                     & 57.59\hspace{4mm}/\hspace{4mm}56.08          & 56.12\hspace{4mm}/\hspace{4mm}54.72          & 50.65\hspace{4mm}/\hspace{4mm}50.15 \\
TGN  (with our base RNN)                                & 88.23\hspace{4mm}/\hspace{4mm}86.04          & 61.82\hspace{4mm}/\hspace{4mm}60.03          & 62.83\hspace{4mm}/\hspace{4mm}60.32 \\
PINT (with our base RNN)                                & 88.23\hspace{4mm}/\hspace{4mm}86.04          & 63.45\hspace{4mm}/\hspace{4mm}60.77          & 65.49\hspace{4mm}/\hspace{4mm}62.41 \\
RTRGN                                                    & 88.23\hspace{4mm}/\hspace{4mm}86.04          & \textbf{90.52}\hspace{4mm}/\hspace{4mm}\textbf{89.51}          & \textbf{91.96}\hspace{4mm}/\hspace{4mm}\textbf{91.41} \\ \hline
RTRGN (w/o $\mathbf{h}_{u}^{k}(t^{-})$ for $k\geq1$)     & 88.23\hspace{4mm}/\hspace{4mm}86.04          & 86.56\hspace{4mm}/\hspace{4mm}84.03          & 72.71\hspace{4mm}/\hspace{4mm}66.35 \\ 
RTRGN (w/o $\mathbf{r}_{u}^{k}(t)$ for $k\geq1$)                      & 88.23\hspace{4mm}/\hspace{4mm}86.04          & \textbf{89.36}\hspace{4mm}/\hspace{4mm}\textbf{86.12}          & \textbf{89.06}\hspace{4mm}/\hspace{4mm}85.68 \\ 
RTRGN (w/o $\mathbf{h}_{v}^{k-1}(t)$ and $\Phi(e_{uv,t})$ for $k\geq1$ in \eqref{msg})                      & 88.23\hspace{4mm}/\hspace{4mm}86.04          & 88.02\hspace{4mm}/\hspace{4mm}85.96          & 87.73\hspace{4mm}/\hspace{4mm}85.42 \\ \hline
\end{tabular}}
\label{tab:gnn_blurring}
\end{table}

\subsection{Practical Run-time Comparison}
\label{appendix: time complexity}

We compare the practical run-time of our RTRGN with baselines, including CAWN, TGAT, TGN, NAT, GRU-GCN, PINT and PINT-pos. 
Among these models, CAWN and PINT-pos are found to be much slower than other methods, due to their inefficient construction of neighbor structural features. 

Table~\ref{tab:main time} shows that RTRGN actually can run faster than TGN and PINT, thanks to our efficient implementation of the base RNN and the recursive computation. NAT is the fastest among all tested methods, while CAWN is the slowest. It's worth noting that the original public implementation of CAWN has a reported bug when using GPUs, so its experiments were conducted with CPU only. 

\begin{table}[ht]
\centering
\caption{\textit{Averaged time per epoch | averaged total time} for temporal link prediction tasks on layer 1 and layer 2 models.}
\scalebox{0.7}{
\begin{tabular}{cc|cccccc}
\hline
& Layer 1 Model      & MovieLens     & Wikipedia     & Reddit        & SocialE.      & Ubuntu        & Ecommerce \\ \hline
& TGN                & 58.3 | 1340  & 66.2 | 1655   & 207.8 | 3262  & 154.0 | 2125  & 2363 | 2363   & 430.6 | 12444 \\
& CAWN               & 296.0 | 1687 & 1.6$*10^4$ | 1.6$*10^5$ & 5026 | 7.2$*10^4$
                                                                     & 8.9$*10^4$ | 4.4$*10^5$   & - & 1540  | 7702  \\
& NAT                & 10.8 | 284   & 22.9 | 477    & 48.6  | 1374  & 281.2 | 7582  & 58.1 | 907    & 27.6  | 1022  \\
& PINT               & 70.2 | 1785  & 88.0 | 1878   & 340.4 | 4459  & 202.9 | 1846  & 3985 | 8368   & 643.0 | 7523  \\
& RTRGN (ours)       & 12.3 | 270   & 27.7 | 706    & 155.6 | 3134  & 145.6 | 1426  & 2702 | 2702   & 296.4 | 5192  \\ \hline
\end{tabular}}

\scalebox{0.7}{
\begin{tabular}{cc|ccccc}
\hline
& Layer 2 Model       & MovieLens     & Wikipedia     & UCI            & Ecommerce        \\ \hline
& TGN                 & 73.6 | 2307   & 253.0 | 6122  &  43.5 | 970    & 532.1 | 10812  \\
& NAT                 &  24.7 | 421   &  32.9 | 667   &   9.6 | 340    &  62.9 | 1561  \\
& PINT                & 81.5 | 2599   & 280.0 | 6692  &  75.3 | 1402   & 915.7 | 10307   \\
& RTRGN (ours)        &  44.8 | 648   & 159.5 | 2530  &  25.9 | 721    & 773.4 | 4792  \\ \hline
\end{tabular}}
\label{tab:main time}
\end{table}


\subsection{Explicit and Implicit Base Embeddings}

We conducted an ablation study comparing the two implementations of base embeddings discussed, and the results indicate that the implicit base case generally outperforms the explicit base case. However, it is worth noting that the explicit base case runs faster due to its lower number of RNN computations.

\begin{table}[ht]
\centering
\caption{Average Precision ($\%$) for temporal link prediction tasks on both transductive and inductive settings. \textcolor{violet}{Best} performing method is highlighted. Reported results are averaged over 10 runs. }
\scalebox{0.65}{
\begin{tabular}{c|c|ccccccccc}
\hline
& Model              & MovieLens     & Wikipedia     & Reddit        & SocialE.-1m   & SocialE.      & UCI-FSN       & Ubuntu        & Ecommerce        \\ \hline
\parbox[t]{2mm}{\multirow{2}{*}{\rotatebox[origin=c]{90}{Trans}}} 
& RTRGN (Impl)   & \textcolor{violet}{$\mathbf{86.56\pm0.2}$}
                                     & \textcolor{violet}{$\mathbf{98.56\pm0.2}$} 
                                                     & \textcolor{violet}{$\mathbf{99.00\pm0.1}$}
                                                                     & $91.42\pm0.4$ & $93.52\pm0.3$ & \textcolor{violet}{$\mathbf{96.43\pm0.1}$} 
                                                                                                                     & \textcolor{violet}{$\mathbf{96.69\pm0.1}$}
                                                                                                                                     & \textcolor{violet}{$\mathbf{88.05\pm0.4}$} \\ 
& RTRGN (Expl)   & $86.31\pm0.2$ & $98.26\pm0.2$ & $98.70\pm0.1$ & \textcolor{violet}{$\mathbf{92.20\pm0.1}$}
                                                                                     & \textcolor{violet}{$\mathbf{94.02\pm0.3}$}
                                                                                                     & $95.25\pm0.1$ & $94.06\pm0.1$ & $84.65\pm0.1$             \\ \hline
\parbox[t]{2mm}{\multirow{2}{*}{\rotatebox[origin=c]{90}{Ind}}}   
& RTRGN (Impl)   & \textcolor{violet}{$\mathbf{85.03\pm0.2}$} 
                                     & \textcolor{violet}{$\mathbf{98.06\pm0.2}$}
                                                     & \textcolor{violet}{$\mathbf{98.26\pm0.1}$} 
                                                                     & $91.31\pm0.3$ & $90.15\pm0.8$ & \textcolor{violet}{$\mathbf{94.26\pm0.1}$} 
                                                                                                                     & \textcolor{violet}{$\mathbf{92.95\pm0.1}$} 
                                                                                                                                     & \textcolor{violet}{$\mathbf{86.91\pm0.5}$} \\ 
& RTRGN (Expl)   & $84.20\pm0.2$ & $97.38\pm0.2$ & $96.98\pm0.1$ & \textcolor{violet}{$\mathbf{91.73\pm0.7}$} 
                                                                                     & \textcolor{violet}{$\mathbf{92.47\pm0.3}$}
                                                                                                     & $92.13\pm0.1$ & $91.21\pm0.1$ & $81.59\pm0.2$ \\ \hline
\end{tabular}}
\label{tab:main ap base case}
\end{table}

\end{document}